\icmltitlerunning{Analysis via ONSs in RKHMs and Applications}
\def\mcl#1{\mathcal{#1}}
\def\blacket#1{\left\langle #1\right\rangle}
\def\hil{\mcl{H}}
\def\nn{\nonumber}
\def\opn{\operatorname}
\def\alg{\mcl{A}}
\def\modu{\mcl{M}}
\def\mat{\mathbb{C}^{m\times m}}
\def\bblacket#1{\big\langle #1\big\rangle}
\def\bblacketg#1{\bigg\langle #1\bigg\rangle}
\def\Bblacket#1{\bigg\langle #1\bigg\rangle}
\def\sblacket#1{\langle #1\rangle}
\newtheorem{prop}{Proposition}[section]
\newtheorem{thm}[prop]{Theorem}
\newtheorem{lemma}[prop]{Lemma}
\newtheorem{defin}[prop]{Definition}
\newtheorem{cor}[prop]{Corollary}
\theoremstyle{definition}
\newtheorem{remark}[prop]{Remark}
\begin{document}

\twocolumn[
\icmltitle{Analysis via Orthonormal Systems in Reproducing Kernel Hilbert $C^*$-Modules and Applications}



\icmlsetsymbol{equal}{*}

\begin{icmlauthorlist}
\icmlauthor{Yuka Hashimoto}{ntt,keiograd}
\icmlauthor{Isao Ishikawa}{riken,keio}
\icmlauthor{Masahiro Ikeda}{riken,keio}
\icmlauthor{Fuyuta Komura}{riken,keiograd}
\icmlauthor{Takeshi Katsura}{riken,keio}
\icmlauthor{Yoshinobu Kawahara}{riken,kyudai}
\end{icmlauthorlist}

\icmlaffiliation{kyudai}{Institute of Mathematics for Industry, Kyushu University, Fukuoka, Japan}
\icmlaffiliation{ntt}{NTT Network Technology Laboratories, NTT Corporation, Tokyo, Japan}
\icmlaffiliation{keiograd}{Graduate School of Science and Technology, Keio University, Kanagawa, Japan}
\icmlaffiliation{keio}{Faculty of Science and Technology, Keio University, Kanagawa, Japan}
\icmlaffiliation{riken}{Center for Advanced Intelligence Project, RIKEN, Tokyo, Japan}

\icmlcorrespondingauthor{Yuka Hashimoto}{yuka.hashimoto.rw@hco.ntt.co.jp}

\icmlkeywords{kernel method, RKHM, structured data, principal component analysis, interacting dynamical system}

\vskip 0.3in
]



\printAffiliationsAndNotice{}  

\begin{abstract}
Kernel methods have been among the most popular techniques in machine learning, where learning tasks are solved using the property of reproducing kernel Hilbert space (RKHS). In this paper, we propose a novel data analysis framework with reproducing kernel Hilbert $C^*$-module (RKHM), which is another generalization of RKHS than vector-valued RKHS (vv-RKHS). Analysis with RKHMs enables us to deal with structures among variables more explicitly than vv-RKHS.
We show the theoretical validity for the construction of orthonormal systems in Hilbert $C^*$-modules, and derive concrete procedures for orthonormalization in RKHMs with those theoretical properties in numerical computations.
Moreover, we apply those to generalize with RKHM kernel principal component analysis and the analysis of dynamical systems with Perron-Frobenius operators.
The empirical performance of our methods is also investigated by using synthetic and real-world data.
\end{abstract}

\section{Introduction}
\label{sec:intro}

Kernel methods have been among the most popular techniques in machine learning (cf.~\citep{scholkopf01}), where learning tasks are solved using the property of reproducing kernel Hilbert space (RKHS). 
RKHS is the space of complex-valued functions equipped with an inner product that is determined with a positive-definite kernel. 
Here, the {\em orthonormality} is defined on the basis of the inner products calculated by evaluating a kernel function, which plays an important role in developing practical algorithms.

Whereas almost all the classical literature on RKHS approaches has focused on complex-valued functions, RKHSs of vector-valued functions, i.e., vector-valued RKHSs (vv-RKHSs), have received increasing attention~\cite{micchelli05,mauricio11,lim15,quang16,kadri16}. This paradigm allows us to incorporate information about structures among observation variables into analyses with RKHSs.
Many kernel methods such as SVM can be generalized to vector-valued functions by employing vv-RKHSs~\cite{quang16}.

However, the use of vv-RKHS may not always be the best option to analyze structured data, mainly for two reasons.
First, although a sample is transformed into a unique function in RKHSs, 
a function in vv-RKHSs corresponding to a sample cannot be determined uniquely. 
Second, although for each sample composed of several elements, relations between all combinations of pairs of the elements should not be lost, information about the relations may be lost and can be difficult to recover when using vv-RKHSs.
This is caused by the fact that an inner product in vv-RKHSs, which measures the similarity between a pair of samples in data, is represented with one complex value.

In this paper, we propose a data analysis framework with reproducing kernel Hilbert $C^*$-module (RKHM), which is another generalization of RKHS.
The theory of RKHM was originally studied in mathematical physics to analyze covariance kernels arising in the theory of generalized stochastic processes and the dilations of operators~\cite{itoh90,heo08}.
Unlike vv-RKHSs, the theory of RKHMs enables us to transform each sample as a unique $C^*$-algebra-valued function, with $C^*$-algebra-valued inner products.
A $C^*$-algebra is a generalization of the space of complex values.
An example of $C^*$-algebra is the space of matrices.
Therefore, we can define a matrix-valued inner product that encodes the similarities of all combinations of pairs of variables in data.

As well as RKHS, the construction of orthonormal systems plays an important role in applying the theory of RKHMs to algorithm developments. 
To the best of our knowledge, although the generalization of the notion of orthonormal to $C^*$-algebra-valued inner-products has been proposed in mathematics~\cite{bakic01}, its application to data analysis has not been discussed so far.
Meanwhile, as for theoretical analyses of RKHMs for data analysis, we can find the only literature by \citet{ye17}, where only the case of RKHMs for data represented as matrices is considered without addressing orthonormality.
Thus, its applicability to data analysis is limited. 
In this paper, we develop novel algorithms using RKHMs through the construction of orthonormal systems with $C^*$-algebra-valued inner-products. 
We first show the theoretical validity for constructing orthonormal systems in Hilbert C*-modules.
Then, we derive concrete procedures for orthonormalization in RKHMs and show some theoretical results on those properties in the numerical computations.
Moreover, we apply those to generalize with RKHM kernel principal component analysis (PCA) and the analysis of dynamical systems with Perron-Frobenius operators, under the motivation described above.
Note that the generalizations are not straightforward because an element of $C^*$-algebra is not always commutative or invertible.
However, 
our generalizations based on the developed theories enable us to access and extract rich information in structured data, which is shown partly here in the context of PCA and the analysis of interacting dynamical systems. 
Note that, to the best of our knowledge, this is the first paper to describe practical algorithms to utilize the notion of orthonormal with respect to $C^*$-algebra-valued inner products in RKHMs for the analysis of structured data.

The remainder of this paper is organized as follows. 
First, in Section~\ref{sec:bg}, we briefly review the theory of RKHMs.
In Section~\ref{sec:orthonormal}, we introduce the definition of the orthonormality with $C^*$-algebra-valued inner products and then show the theoretical validity to consider it.
In Section~\ref{sec:RKHM}, we derive concrete procedures for constructing orthonormal systems in RKHMs with theoretical analyses in their numerical computations.
In Sections~\ref{sec:pca} and \ref{sec:PF}, we develop the applications of the developed results to generalize with RKHM kernel PCA and the analysis of dynamical systems with Perron-Frobenius operators, respectively.
Finally, we empirically investigate the performance of our methods in Section~\ref{sec:results} and conclude the paper in Section~\ref{sec:concl}. 
The notations in this paper are explained in Appendix~\ref{ap:notation}, 
and all proofs are given in Appendix~\ref{ap:proof} in the supplementary material.

\section{Backgrounds}
\label{sec:bg}
In this section, we first briefly review $C^*$-algebras and $C^*$-modules in Section~\ref{sec:c_star_alg}, and RKHMs in Subection~\ref{sec:rkhm_review}.
The precise statements and definitions about them are detailed in Appendix~\ref{ap:rkhm}. 
RKHMs are generalizations of RKHSs.
The theoretical backgrounds of RKHSs are given in Appendices~\ref{ap:review_rkhs}.
We also add a review on vv-RKHSs, 
which are other generalizations of RKHSs, in Appendix~\ref{ap:vv-RKHS_review}.
\subsection{$C^*$-algebra and $C^*$-module}\label{sec:c_star_alg}
A $C^*$-algebra and a $C^*$-module are suitable generalizations of the space of complex numbers $\mathbb{C}$ and a vector space, respectively.  
In this paper, we denote a $C^*$-algebra by $\mathcal{A}$ and a $C^*$-module by $\modu$, respectively. 
As we see below, many complex-valued notions can be generalized to $\mathcal{A}$-valued. 

Mathematically, a $C^*$-algebra is defined as a Banach space equipped with a product structure and an involution  $(\cdot)^*:\mathcal{A}\rightarrow\mathcal{A}$ (Definition~\ref{def:c*_algebra}). 
We denote the norm of $\mathcal{A}$ by $\Vert\cdot\Vert_\mathcal{A}$.  
A main example of $C^*$-algebras 
is the space of matrices $\mat$. 
In this case, the product structure is the usual product of matrices,
the involution is the Hermitian conjugate,
and the norm $\Vert\cdot\Vert_\mathcal{A}$ is the operator norm.

The notion of {\em positiveness} (Definition \ref{def:positive}) is also important in $C^*$-algebras.  
We denote $c\ge0$ if $c\in\mathcal{A}$ is positive. 
In the case of $\mathcal{A}=\mat$, the positiveness corresponds to Hermitian positive semi-definiteness of matrices.  
The notion of positiveness provides us the (pre) order in $\mathcal{A}$, and thus, enables us to consider optimization problems in $\mathcal{A}$.

A {\em $C^*$-module} $\mathcal{M}$ over $\mathcal{A}$ (Definition~\ref{def:c*module}) is  
a linear space $\mathcal{M}$ equipped with a {\em right $\mathcal{A}$-multiplication}.  
For $u\in \mathcal{M},c\in\mathcal{A}$, the right $\alg$-multiplication of $u$ with $c$ is denoted as $uc$. 

A linear map 
$K:\modu\to\modu$ is called an {\em $\mathcal{A}$-linear map} if $K(uc)=(Ku)c$ holds for any $u\in\mathcal{M}$ and $c\in\mathcal{A}$.

\subsection{$C^*$-algebra-valued inner product and RKHM}\label{sec:rkhm_review}
We consider an $\alg$-valued inner product in $\modu$, which is a straightforward generalization of a complex-valued one.
\begin{defin}[$\alg$-valued inner product]\label{def:innerproduct}
A map $\blacket{\cdot,\cdot}:\modu\times\modu\to\alg$ is called an $\alg$-valued {\em inner product} if it satisfies the following properties for $u,v,w\in\modu$ 
and $c,d\in\alg$:

 1. $\blacket{u,vc+wd}=\blacket{u,v}c+\blacket{u,w}d$\;,\\
 2. $\blacket{v,u}=\blacket{u,v}^*$\;,\\
 3. $\blacket{u,u}\ge 0$ and if $\blacket{u,u}=0$ then $u=0$\;.
\end{defin}
For $u\in\modu$, we define the {\em $\alg$-valued absolute value} $\vert u\vert$ on $\modu$ by the positive element  $\vert u \vert$ of $\alg$ such that $\vert u\vert^2=\blacket{u,u}$.   
Since $\vert\cdot\vert$ takes values in more structured space $\alg$, it behaves more complicatedly, but provides us with more information.

The $\alg$-valued absolute value $\vert\cdot\vert$ defines a norm $\Vert \cdot\Vert$ on $\modu$ by $\Vert u\Vert :=\big\Vert\vert u\vert\big\Vert_\alg$. 
By definition, the norm $\Vert\cdot\Vert$ is regarded as an aspect of the $\alg$-valued absolute value $\vert\cdot\vert$.  We call $\modu$ equipped with $\blacket{\cdot,\cdot}$ a {\em Hilbert $C^*$-module} if $\modu$ is complete w.r.t.\@ the norm $\Vert\cdot\Vert$.
An RKHM is an example of Hilbert $C^*$-modules.
Another example is $\alg^n$ (Lemma~\ref{lem:A^m}), which is utilized for orthonormalization (Remark~\ref{rem:A^n}).

We now summarize the theory of RKHMs, which is discussed, for example, in~\cite{heo08}.
Similar to the case of RKHSs, we begin with an $\alg$-valued generalization of a positive definite kernel on a non-empty set $\mcl{Y}$.
\begin{defin}[$\alg$-valued positive definite kernel]\label{def:pdk_rkhm}
 An $\mcl{A}$-valued map $k:\ \mcl{Y}\times \mcl{Y}\to\mcl{A}$ is called a {\em positive definite kernel} if it satisfies the following conditions:
 
 1. $k(x,y)=k(y,x)^*$ \;for $x,y\in\mcl{Y}$,\\
 2. $\sum_{t,s=1}^nc_t^*k(x_t,x_s)c_s\ge 0$ \;for $n\in\mathbb{N}$, $c_1,\ldots,c_{n}\in\alg$, $x_1,\ldots,x_{n}\in\mcl{Y}$.
\end{defin}
We remark that in the case where $\alg$ is the space of bounded linear operators on a Hilbert space, Definition~\ref{def:pdk_rkhm} is equivalent to the operator valued positive definite kernel for the theory of vv-RKHSs (Lemma~\ref{lem:pdk_equiv}).

Let $\phi:\mcl{Y}\to\alg^{\mcl{Y}}$ be the {\em feature map} associated with $k$, which is defined as $\phi(x)=k(\cdot,x)$ for $x\in\mcl{Y}$.
Similar to the case of RKHSs, we construct the following $C^*$-module composed of $\alg$-valued functions by means of $\phi$: 
\begin{equation*}
\modu_{k,0}:=\bigg\{\sum_{t=1}^{n}\phi(x_t)c_t\bigg|\ n\in\mathbb{N},\ c_t\in\alg,\ x_t\in\mcl{Y}\bigg\}.
\end{equation*}
An $\alg$-valued map $\blacket{\cdot,\cdot}_{k}:\modu_{k,0}\times \modu_{k,0}\to\alg$ is defined as follows with $k$:
\begin{equation*}
\bblacketg{\sum_{s=1}^{n}\phi(x_s)c_s,\sum_{t=1}^{l}\phi(y_t)d_t}_{k}:=\sum_{s=1}^{n}\sum_{t=1}^{l}c_s^*k(x_s,y_t)d_t.
\end{equation*}
By the properties in Definition~\ref{def:pdk_rkhm} of $k$, $\blacket{\cdot,\cdot}_{k}$ is well-defined and has the reproducing property.
Also, it satisfies the properties in Definition~\ref{def:innerproduct} (Proposition~\ref{prop:inner_product}).
Therefore, $\blacket{\cdot,\cdot}_{k}$ is an $\alg$-valued inner product.  

The {\em reproducing kernel Hilbert $\alg$-module (RKHM)} associated with $k$ is defined as the completion of $\modu_{k,0}$.  
Similar to the cases of RKHSs, $\blacket{\cdot,\cdot}_{k}$ is extended continuously to the RKHM and 
has the reproducing property (Proposition~\ref{prop:reproducing}).
Also, the RKHM is uniquely determined  (Proposition~\ref{prop:rkhm_unique}).

We denote by $\modu_k$ the RKHM associated with $k$.  
We also denote by $\vert\cdot\vert_k$ and $\Vert \cdot \Vert_k$ the absolute value and norm on $\modu_k$, respectively. 

\section{Orthonormality with $C^*$-algebra-valued Inner Products}
\label{sec:orthonormal}
In this section, we define the orthonormality for $C^*$-algebra-valued inner products, 
and develop theoretical foundations of its validity to apply it to the analysis of structured data.

Orthonormality plays an important role in data analysis because an orthonormal basis constructs orthogonal projections and an orthogonally projected vector minimizes the deviation from its original vector in the projected space.

We begin with, the definition of an orthonormal system and orthonormal basis in a Hilbert $C^*$-module $\modu$.
We refer to, for example, Definition 1.2 in~\cite{bakic01}.
\begin{defin}[Normalized]
A vector $q\in\mcl{M}$ is {\em normalized} if 
 $0\neq\blacket{q,q}=\blacket{q,q}^2$.
\end{defin}
Note that in the case of a general $C^*$-valued-inner product, for a normalized vector $q$ , $\blacket{q,q}$ is not always equal to the identity of $\alg$ in contrast to the case of a complex-valued inner product.
\begin{defin}[Orthonormal system and basis]
Let $\mcl{T}$ be an index set.
A set $\mathcal{S}=\{q_t\}_{t\in\mcl{T}}\subseteq\modu$ is called an {\em orthonormal system (ONS)} of $\mcl{M}$ if 
$q_t$ is normalized for any $t\in\mcl{T}$
and $\blacket{q_s,q_t}=0$ for $s\neq t$. 
We call $\mathcal{S}$ an {\em orthonormal basis (ONB)} if $\mathcal{S}$ is an orthonormal system and dense in $\modu$.
\end{defin}

We show basic properties of the above orthonormal systems. 
First, we derive the existence of an orthonormal basis.
\begin{prop}\label{prop:orthonormal_existance}
The Hilbert $C^*$-module over $\mat$ has an orthonormal basis.
\end{prop}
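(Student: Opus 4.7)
The plan is to construct an orthonormal basis by combining Zorn's lemma with a Gram--Schmidt-type orthogonal projection, where the finite-dimensionality of $\mat$ is what makes the projection converge.

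First I would apply Zorn's lemma to the family of all ONSs of $\modu$ ordered by inclusion; the union of a chain of ONSs is itself an ONS, so a maximal ONS $\mathcal{S}=\{q_t\}_{t\in T}$ exists. Writing $p_t:=\blacket{q_t,q_t}$, a routine expansion of $\blacket{q_t-q_tp_t,q_t-q_tp_t}$ using $p_t^2=p_t$ gives $q_t=q_tp_t$, and hence $\blacket{q_t,u}=p_t\blacket{q_t,u}$ for every $u\in\modu$. The goal is to show that the closed $\alg$-submodule $\mathcal{N}$ generated by $\mathcal{S}$ coincides with $\modu$; then $\mathcal{S}$ is an ONB.

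The key technical step is a Bessel-type inequality: for every finite $F\subset T$ and $u\in\modu$, $\sum_{t\in F}\blacket{u,q_t}\blacket{q_t,u}\le \blacket{u,u}$ as positive elements of $\alg$. This follows from expanding $\blacket{u-\sum_{t\in F}q_t\blacket{q_t,u},\,u-\sum_{t\in F}q_t\blacket{q_t,u}}\ge 0$ via orthogonality and the identity $\blacket{q_t,u}=p_t\blacket{q_t,u}$. Since $\mat$ is finite-dimensional, this increasing bounded net of positive elements is norm-convergent, and in particular only countably many $\blacket{q_t,u}$ are nonzero. A direct calculation using orthogonality yields $\big\Vert\sum_{t\in F_1\setminus F_2}q_t\blacket{q_t,u}\big\Vert^2=\big\Vert\sum_{t\in F_1\setminus F_2}\blacket{u,q_t}\blacket{q_t,u}\big\Vert_{\alg}$, so the partial sums $\sum_{t\in F}q_t\blacket{q_t,u}$ form a Cauchy net and converge to some $Pu\in\mathcal{N}$ with $\blacket{q_t,u-Pu}=0$ for every $t\in T$.

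Finally, to derive a contradiction from $\mathcal{N}\ne\modu$, pick $u\in\modu\setminus\mathcal{N}$, so that $v:=u-Pu\ne 0$ (since $Pu\in\mathcal{N}$). Let $A=\blacket{v,v}$, a nonzero positive element of $\mat$, and form $B=(A^{1/2})^{\dagger}$, the Moore--Penrose pseudo-inverse of $A^{1/2}$, via functional calculus in $\mat$. Setting $q:=vB$, one computes $\blacket{q,q}=B^*AB=BAB$, which equals the support projection of $A$, hence is a nonzero idempotent, so $q$ is normalized; and $\blacket{q_t,q}=\blacket{q_t,v}B=0$ for every $t$. Thus $\mathcal{S}\cup\{q\}$ is a strictly larger ONS, contradicting maximality, so $\mathcal{N}=\modu$. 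The main obstacle I anticipate is the convergence of the Bessel-type sums in the third step: in an arbitrary Hilbert $C^*$-module, monotone boundedness of the partial sums need not imply norm convergence, but the finite-dimensionality of $\mat$ makes it so, and this is the hinge of the whole argument.
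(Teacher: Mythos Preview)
Your proposal is correct and follows essentially the same route as the paper's proof: Zorn's lemma yields a maximal ONS, a Bessel-type inequality together with the finite-dimensionality of $\mat$ gives convergence of the orthogonal projection $Pu=\sum_t q_t\blacket{q_t,u}$ (the paper packages this as a lemma that the closed span of an ONS is orthogonally complemented), and a nonzero element of the orthogonal complement is normalized to contradict maximality. Your Moore--Penrose construction of the normalized vector is exactly the $\epsilon=0$ case of the paper's Proposition~\ref{prop:normalized_property}, and your identity $q_t=q_tp_t$ is the paper's Corollary~\ref{cor:inprodequiv}.
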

Unlike Hilbert spaces, Hilbert $C^*$-modules do not always have an orthonormal basis for general $\alg$~\cite{lance95,landi09}.
However, Proposition~\ref{prop:orthonormal_existance} guarantees 
the validity of considering orthonormal bases in RKHMs over $\mat$ for practical applications.
Next, we show a minimization property.
\begin{prop}\label{prop:min_projection}
Let $\{q_t\}_{t\in\mcl{T}}$ be an orthonormal system of $\modu$ and let 
$\mcl{V}$ be the completion of the space spanned by $\{q_t\}_{t\in\mcl{T}}$.
For $w\in\modu_k$, let $P:\modu\to\mcl{V}$ be the projection operator defined as $Pw:=\sum_{t\in\mcl{T}}q_t\blacket{q_t,w}$.
Then $Pw$ is the unique solution of the following minimization problem: 
\begin{equation}
\min_{v\in\mcl{V}}\vert w-v\vert^2\label{eq:min_projection}
\end{equation}
\end{prop}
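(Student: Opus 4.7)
The strategy adapts the classical Hilbert-space Pythagorean argument, with care required because $\alg$-valued quantities need not commute and the normalization condition implies $\blacket{q_t,q_t}$ is only a projection in $\alg$ rather than the identity. I would proceed in three steps: (i) show $\blacket{q_s, Pw} = \blacket{q_s, w}$ for every $s\in\mcl{T}$; (ii) deduce $\blacket{v, w-Pw}=0$ for every $v\in\mcl{V}$; and (iii) conclude via an $\alg$-valued Pythagoras identity together with axiom~3 of Definition~\ref{def:innerproduct} for uniqueness.

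The technical heart is step (i). For a normalized $q_s$, set $p_s := \blacket{q_s,q_s}$; by axiom~2 of Definition~\ref{def:innerproduct} and the normalization $p_s^2=p_s$, the element $p_s$ is a self-adjoint projection in $\alg$. I would first establish the module identity $q_s p_s = q_s$ by computing
\begin{equation*}
\blacket{q_s - q_s p_s,\ q_s - q_s p_s} \;=\; p_s - p_s p_s - p_s p_s + p_s p_s p_s \;=\; 0,
\end{equation*}
where I used conjugate-linearity in the first slot together with $p_s^* = p_s$ and $p_s^2=p_s$, and then invoked axiom~3 to conclude $q_s = q_s p_s$. Combining this with orthonormality and right $\alg$-linearity in the second slot gives, for each $s$,
\begin{equation*}
\blacket{q_s, Pw} \;=\; \sum_{t\in\mcl{T}} \blacket{q_s, q_t}\,\blacket{q_t, w} \;=\; p_s\,\blacket{q_s, w} \;=\; \blacket{q_s p_s, w} \;=\; \blacket{q_s, w},
\end{equation*}
with the infinite-$\mcl{T}$ case handled by continuity of $\blacket{\cdot,\cdot}$.

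Given (i), any finite combination $v=\sum_s q_s c_s$ satisfies $\blacket{v, w-Pw} = \sum_s c_s^*\,\blacket{q_s, w-Pw} = 0$, and the passage to arbitrary $v\in\mcl{V}$ is by continuity; in particular $\blacket{Pw-v, w-Pw}=0$. Expanding
\begin{equation*}
\vert w-v\vert^2 \;=\; \blacket{(w-Pw)+(Pw-v),\ (w-Pw)+(Pw-v)}
\end{equation*}
then makes the two cross terms vanish, yielding
\begin{equation*}
\vert w-v\vert^2 \;=\; \vert w-Pw\vert^2 + \vert Pw - v\vert^2 \;\ge\; \vert w-Pw\vert^2
\end{equation*}
in the positiveness order, since $\vert Pw-v\vert^2 \ge 0$. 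Uniqueness follows because equality forces $\vert Pw - v\vert^2=0$, hence $Pw=v$ by axiom~3. The main obstacle I anticipate is the key identity in (i): without first proving $q_s p_s = q_s$ inside $\modu$, the stray factor $p_s$ produced by orthonormality cannot be absorbed, so the Fourier-coefficient identity $\blacket{q_s,Pw}=\blacket{q_s,w}$ does not come for free as it does in the Hilbert-space case.
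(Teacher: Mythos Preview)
Your proposal is correct and takes essentially the same approach as the paper: the paper isolates your identity $q_s\blacket{q_s,q_s}=q_s$ as a separate corollary (proved exactly as you do, by expanding $\blacket{q_s-q_s p_s,\,q_s-q_s p_s}$) and packages the resulting orthogonality $w-Pw\in\mcl{V}^{\perp}$ into a lemma, then finishes with the same $\alg$-valued Pythagoras identity $\vert w-v\vert^2=\vert w-Pw\vert^2+\vert Pw-v\vert^2$ and the same uniqueness argument.
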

Unlike the case of complex-valued inner products, the existence of the solution of 
$\alg$-valued
minimization problem~\eqref{eq:min_projection} 
is not obvious since the positive elements in $\alg$ are not totally ordered.
However, Proposition~\ref{prop:min_projection} shows the orthogonally projected vector uniquely minimizes the deviation from an original vector in $\mcl{V}$.

\section{Matrix-valued Positive Definite Kernel and Orthonormality in RKHMs}
\label{sec:RKHM}
In this section, we 
first construct a $\mat$-valued positive definite kernel, which encodes the similarities between all combinations of pairs ($m^2$ pairs) of elements for two samples, each of which is composed of $m$ elements, in Section~\ref{sec:settings}. 
Then, in Section~\ref{sec:gram-schmidt}, we develop theoretical foundations for calculating the orthonormalization in RKHMs.

\subsection{Matrix-valued positive definite kernel}\label{sec:settings}
First, we introduce a matrix-valued positive definite kernel. 
Let $\mcl{X}$ be a topological space.
We consider data composed of $m$ elements of $\mcl{X}$.
Let $\tilde{k}:\mcl{X}\times \mcl{X}\to\mathbb{C}$ be a complex-valued positive definite kernel, $\tilde{\phi}$ be the feature map, and $\hil_{\tilde{k}}$ be the RKHS associated with $\tilde{k}$.
On the basis of the above setting for $\mcl{X}$ and RKHSs, we introduce a $\mat$-valued positive definite kernel $k$, which is also proposed in the framework of vv-RKHSs~\cite{lim15}, to construct an RKHM.
\begin{lemma}\label{lem:pdk}
Let $k:\mcl{X}^m\times \mcl{X}^m\to\mat$ be a matrix valued map where the $(i,j)$-elements of $k(x_1,x_2)$ are defined as $\tilde{k}(x_{1,i},x_{2,j})$ for $x_t=[x_{t,1},\ldots,x_{t,m}]\in\mcl{X}^m$ for $t=1,2$. 
Then, $k$ is a $\mat$-valued positive definite kernel.
\end{lemma}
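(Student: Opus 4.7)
The plan is to verify the two conditions of Definition~\ref{def:pdk_rkhm} directly, using the positive definiteness of the underlying complex-valued kernel $\tilde{k}$.

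For the Hermitian symmetry $k(x,y) = k(y,x)^*$, I would simply compute both sides entrywise. The $(i,j)$-entry of the right-hand side is the complex conjugate of the $(j,i)$-entry of $k(y,x)$, i.e.\ $\overline{\tilde{k}(y_j, x_i)}$. Since $\tilde{k}$ is a complex-valued positive definite kernel, $\overline{\tilde{k}(y_j, x_i)} = \tilde{k}(x_i, y_j)$, which matches the $(i,j)$-entry of $k(x,y)$.

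For the positivity condition, the key idea is to reduce the $\mat$-valued sum to a single scalar quadratic form in $\tilde{k}$ by flattening two indices into one. Given $c_1, \ldots, c_n \in \mat$ and $x_1, \ldots, x_n \in \mcl{X}^m$, it suffices to show that $v^* \bigl(\sum_{t,s} c_t^* k(x_t, x_s) c_s\bigr) v \ge 0$ for every $v \in \mathbb{C}^m$. Setting $w_t := c_t v \in \mathbb{C}^m$ with components $w_{t,i}$, and expanding the matrix-product entrywise, I would rewrite this scalar as
\begin{equation*}
\sum_{t,s=1}^{n} \sum_{i,j=1}^{m} \overline{w_{t,i}}\, \tilde{k}(x_{t,i}, x_{s,j})\, w_{s,j}.
\end{equation*}
Now I would re-index the pairs $(t,i) \in \{1,\ldots,n\} \times \{1,\ldots,m\}$ as single indices $\alpha \in \{1,\ldots,nm\}$, setting $y_\alpha := x_{t,i} \in \mcl{X}$ and $a_\alpha := w_{t,i} \in \mathbb{C}$. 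The sum becomes $\sum_{\alpha,\beta=1}^{nm} \overline{a_\alpha}\, \tilde{k}(y_\alpha, y_\beta)\, a_\beta$, which is non-negative by the positive definiteness of $\tilde{k}$.

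Frankly, I do not anticipate a genuine obstacle here: the statement is a structural observation whose content is entirely carried by the positive definiteness of $\tilde{k}$ and the flattening of the index set. The only minor bookkeeping point to be careful about is making sure the reduction to scalars holds for arbitrary test vectors $v \in \mathbb{C}^m$ (so that positivity of the resulting matrix is established, not just non-negativity of a trace or a single quadratic form), but this is handled automatically because $v$ is fixed arbitrarily before the reindexing.
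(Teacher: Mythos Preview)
Your proposal is correct and follows essentially the same approach as the paper's proof: both verify Hermitian symmetry entrywise using the conjugate symmetry of $\tilde{k}$, and both establish positivity by testing against an arbitrary vector in $\mathbb{C}^m$, setting $w_t=c_t v$ (the paper writes $\tilde{d}_t=c_t d$), expanding the quadratic form into a double sum over $(t,i)$ and $(s,j)$, and invoking the scalar positive definiteness of $\tilde{k}$ on the flattened index set. Your explicit reindexing via $\alpha\in\{1,\ldots,nm\}$ is slightly more detailed than the paper's presentation, but the argument is the same.
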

The $(i,j)$-element of $k(x_1,x_2)$ for $x_1,x_2\in\mcl{X}^m$ equals $\tilde{k}(x_{1,i},x_{2,j})$, 
which represents the similarities between $x_{1,i}$ and $x_{2,j}$ in $\hil_{\tilde{k}}$.
Thus, the inner product between $\phi(x_1)$ and $\phi(x_2)$ describes the similarities of all combinations of pairs of elements of $x_1$ and $x_2$.


\subsection{Gram-Schmidt orthonormalization}\label{sec:gram-schmidt}
We now consider the practical approach for orthonormalization with matrix-valued inner products.
\begin{prop}[Normalization]\label{prop:normalized_property}
Let $\epsilon\ge 0$ and let $\hat{q}\in\mcl{M}$ satisfy $\Vert \hat{q}\Vert>\epsilon$.
Then, there exists $\hat{b}\in\mat$ such that $\Vert \hat{b}\Vert_{\mat}<1/\epsilon$ and $q:=\hat{q}\hat{b}$ is normalized.
In addition, there exists $b\in\mat$ such that 
$\Vert \hat{q}-qb\Vert\le\epsilon$.
\end{prop}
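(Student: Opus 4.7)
My plan is to construct $\hat{b}$ as a truncated inverse-square-root of the positive matrix $A:=\blacket{\hat{q},\hat{q}}\in\mat$ via spectral functional calculus. Writing the spectral decomposition $A=U\Lambda U^*$ with $\Lambda=\opn{diag}(\lambda_1,\ldots,\lambda_m)$ and $\lambda_i\ge 0$, I note that $\Vert\hat{q}\Vert^2=\big\Vert\vert\hat{q}\vert\big\Vert_{\mat}^2=\Vert A\Vert_{\mat}=\max_i\lambda_i$, so the hypothesis $\Vert\hat{q}\Vert>\epsilon$ is equivalent to the index set $I:=\{i:\lambda_i>\epsilon^2\}$ being nonempty. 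The only genuine obstacle is that $A$ may be singular or have eigenvalues in the dangerous range $\le\epsilon^2$, so a literal $A^{-1/2}$ is unavailable; truncating at the threshold $\epsilon^2$ is exactly what balances the norm bound on $\hat{b}$ against the residual error in the second estimate.

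Let $\Lambda^-$ be the diagonal matrix with $i$-th entry $\lambda_i^{-1/2}$ for $i\in I$ and $0$ otherwise, and set $\hat{b}:=U\Lambda^- U^*$. A direct computation gives $\blacket{q,q}=\hat{b}^*A\hat{b}=UPU^*$, where $P$ is the diagonal projection onto the coordinates indexed by $I$, because on these coordinates $\lambda_i^{-1/2}\cdot\lambda_i\cdot\lambda_i^{-1/2}=1$ and elsewhere the product vanishes. Since $UPU^*$ is a nonzero orthogonal projection, $\blacket{q,q}^2=\blacket{q,q}\neq 0$, so $q:=\hat{q}\hat{b}$ is normalized. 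The bound $\Vert\hat{b}\Vert_{\mat}=\Vert\Lambda^-\Vert_{\mat}=\max_{i\in I}\lambda_i^{-1/2}<1/\epsilon$ is then immediate (with the convention $1/\epsilon=\infty$ when $\epsilon=0$), because every $\lambda_i$ with $i\in I$ strictly exceeds $\epsilon^2$.

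For the second claim, I set $b:=\hat{b}^*A=U\Lambda^+U^*$, where $\Lambda^+$ has entries $\lambda_i^{1/2}$ for $i\in I$ and $0$ otherwise; this is the natural choice, since $b=\blacket{q,\hat{q}}$ is precisely the coefficient that the projection formula of Proposition~\ref{prop:min_projection} attaches to $q$ when approximating $\hat{q}$. A short calculation yields $qb=\hat{q}\hat{b}\hat{b}^*A=\hat{q}\,UPU^*$, hence $\hat{q}-qb=\hat{q}(I-UPU^*)$. Using the identity $\blacket{\hat{q}c,\hat{q}c}=c^*Ac$ for $c\in\mat$ together with the self-adjointness of $I-UPU^*$, I obtain
\[
\blacket{\hat{q}-qb,\,\hat{q}-qb}=U(I-P)\Lambda(I-P)U^*,
\]
whose operator norm equals $\max_{i\notin I}\lambda_i\le\epsilon^2$. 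Taking square roots yields $\Vert\hat{q}-qb\Vert\le\epsilon$, completing the proof.
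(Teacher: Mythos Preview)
Your proof is correct and follows essentially the same approach as the paper: both diagonalize $\blacket{\hat{q},\hat{q}}$, define $\hat{b}$ as the truncated inverse square root keeping only eigenvalues above $\epsilon^2$, and take $b$ as the corresponding truncated square root, with the same norm computations. Your added remark that $b=\blacket{q,\hat{q}}$ is the projection coefficient from Proposition~\ref{prop:min_projection} is a nice touch not made explicit in the paper, but the construction and verification are otherwise identical.
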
\vspace{-.3cm}
\begin{proof}[Sketch of the proof]
Let $\lambda_1\ge\ldots\ge\lambda_{m}\ge 0$ be the eigenvelues of $\blacket{\hat{q},\hat{q}}$.
Since $\blacket{\hat{q},\hat{q}}$ is positive, there exists an unitary matrix $c$ such that $\blacket{\hat{q},\hat{q}}=c^*\opn{diag}\{\lambda_1,\ldots,\lambda_{m}\}c$.
Let $m':=\max\{j\mid\ \lambda_j>\epsilon^2\}$ and let $\hat{b}:=c^*\opn{diag}\{1/\sqrt{\lambda_1},\ldots,1/\sqrt{\lambda_{m'}},0,\ldots,0\}c$.
The existance of $m'$ follows by the inequality $\Vert \hat{q}\Vert>\epsilon$.
Then, it can be shown that $q:=\hat{q}\hat{b}$ is normalized.
In addition, let $b:=c^*\opn{diag}\{\sqrt{\lambda_1},\ldots,\sqrt{\lambda_{m'}},0,\ldots,0\}c$.
Then, it can be shown that $\Vert \hat{q}-qb\Vert\le\epsilon$ holds.
\end{proof}
Proposition~\ref{prop:normalized_property} and its proof provide a concrete procedure to obtain normalized vectors in $\modu$ in practical situations.
This enables us to compute an orthonormal basis practically by applying Gram-Schmidt orthonormalization, which is introduced in~\cite{cnops92} in a theoretical framework.
\begin{prop}[Gram-Schmidt orthonormalization]\label{prop:gram-schmidt}
Let $\{w_t\}_{t=1}^{\infty}$ be a sequence in $\modu$.
Consider the following scheme for $t=1,2,\ldots$ and $\epsilon\ge 0$:
\begin{equation}
 \begin{aligned}
 \hat{q}_t&=w_t-\sum_{s=1}^{t-1}q_s\blacket{q_s,w_t},\\
q_t&=\hat{q}_t\hat{b}_t\quad \mbox{if }\;\Vert \hat{q}_t\Vert>\epsilon,\quad q_t=0\quad\mbox{o.w.},
\end{aligned}\label{eq:gram-schmidt}
\end{equation}
where $\hat{b}_t$ is defined as $\hat{b}$ in Proposition~\ref{prop:normalized_property} by setting $\hat{q}=\hat{q}_t$.
Then, $\{q_t\}_{t=1}^{\infty}$ is an orthonormal system in $\modu$ such that any $w_t$ is contained in the $\epsilon$-neighborhood of the space spanned by $\{ q_t\}_{t=1}^{\infty}$.
\end{prop}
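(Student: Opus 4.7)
The plan is to prove the two assertions separately. Orthonormality is the part that actually requires care and I will establish by induction on $t$; the $\epsilon$-approximation then follows directly by rearranging~\eqref{eq:gram-schmidt} and invoking Proposition~\ref{prop:normalized_property}.

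The algebraic fact I would isolate first as a small lemma is: \emph{if $q \in \mathcal{M}$ is normalized, then $q\langle q,q\rangle = q$, and consequently $\langle q,q\rangle\langle q,v\rangle = \langle q,v\rangle$ for every $v \in \mathcal{M}$.} Since $P := \langle q,q\rangle$ satisfies $P^{2} = P$ and $P^{*} = P$ by the definition of normalized, expanding
\begin{equation*}
\langle qP - q,\, qP - q\rangle = P^{*}PP - P^{*}P - PP + P = P - P - P + P = 0,
\end{equation*}
and using property~3 of Definition~\ref{def:innerproduct} yields $qP = q$; the second identity then follows by taking $\langle \cdot, v\rangle$ of both sides. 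This plays the role that $\langle q, q\rangle = 1$ plays in Gram--Schmidt over a classical Hilbert space.

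With the lemma in hand, the inductive step is mostly mechanical. Assuming $\{q_s\}_{s < t}$ satisfies $\langle q_s, q_r\rangle = 0$ for $s \neq r$ and that each $q_s$ is either normalized or zero, I would compute, for each $s < t$ with $q_s$ normalized,
\begin{equation*}
\langle q_s, \hat{q}_t\rangle = \langle q_s, w_t\rangle - \sum_{r=1}^{t-1} \langle q_s, q_r\rangle\langle q_r, w_t\rangle;
\end{equation*}
by the induction hypothesis only the $r = s$ term survives, and by the lemma above that term equals $\langle q_s, w_t\rangle$, leaving $\langle q_s, \hat{q}_t\rangle = 0$. Right-multiplying by $\hat{b}_t$ then gives $\langle q_s, q_t\rangle = 0$ regardless of whether $q_t$ is zero or normalized, and normalization of $q_t = \hat{q}_t\hat{b}_t$ when $\|\hat{q}_t\| > \epsilon$ is exactly the content of Proposition~\ref{prop:normalized_property}.

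For the $\epsilon$-approximation, rearranging the first line of~\eqref{eq:gram-schmidt} gives $w_t = \hat{q}_t + \sum_{s=1}^{t-1} q_s \langle q_s, w_t\rangle$. If $\|\hat{q}_t\| > \epsilon$, Proposition~\ref{prop:normalized_property} supplies $b_t \in \mathbb{C}^{m\times m}$ with $\|\hat{q}_t - q_t b_t\| \leq \epsilon$, so $w_t$ lies within distance $\epsilon$ of $q_t b_t + \sum_{s<t} q_s \langle q_s, w_t\rangle$, which belongs to the span of $\{q_s\}_{s \leq t}$; otherwise $q_t = 0$ but $\|\hat{q}_t\| \leq \epsilon$ itself witnesses the approximation of $w_t$ by $\sum_{s<t} q_s \langle q_s, w_t\rangle$. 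I expect the main obstacle to be precisely the algebraic lemma: in the $C^*$-module setting $\langle q, q\rangle$ is only an idempotent rather than the identity, and without the identity $\langle q, q\rangle\langle q, v\rangle = \langle q, v\rangle$ the off-diagonal cross-terms in the Gram--Schmidt computation fail to cancel.
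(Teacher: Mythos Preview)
Your proposal is correct and follows essentially the same route as the paper. The algebraic lemma you isolate---$q\langle q,q\rangle=q$ for normalized $q$---is exactly the paper's Corollary~\ref{cor:inprodequiv} (proved in the same way via $\langle qP-q,qP-q\rangle=0$), and your inductive computation of $\langle q_s,\hat q_t\rangle$ together with the $\epsilon$-approximation via Proposition~\ref{prop:normalized_property} mirrors the paper's argument; you are slightly more explicit about the case $q_t=0$, but otherwise the proofs coincide.
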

\begin{cor}\label{cor:gram-schmidt}
If $\epsilon=0$, and the space spanned by $\{w_t\}_{t=1}^{\infty}$ is dense in $\modu$, then $\{q_t\}_{t=1}^{\infty}$ is an orthonormal basis of $\modu$.
\end{cor}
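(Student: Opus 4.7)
The plan is to deduce the corollary directly from Proposition~\ref{prop:gram-schmidt} together with the hypothesis of density, using essentially no new computation. The setting of Proposition~\ref{prop:gram-schmidt} with $\epsilon=0$ already gives us half of what we want for free, namely that $\{q_t\}_{t=1}^{\infty}$ is an orthonormal system in $\modu$. So the only remaining task is to verify the density condition required for an orthonormal basis.

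Concretely, I would proceed as follows. First, apply Proposition~\ref{prop:gram-schmidt} with $\epsilon=0$ and record the conclusion that each $w_t$ lies in the $0$-neighborhood of $\mcl{V}:=\overline{\operatorname{span}}\{q_t\}_{t=1}^{\infty}$; in other words, $w_t\in\mcl{V}$ for every $t$. Since $\mcl{V}$ is a closed $\alg$-submodule (taking closures of linear spans in $\modu$ produces such a submodule because the norm and the right $\alg$-action are compatible), it contains the $\alg$-linear span of $\{w_t\}_{t=1}^{\infty}$, and therefore its closure as well. By hypothesis the span of $\{w_t\}_{t=1}^{\infty}$ is dense in $\modu$, so the closure equals $\modu$. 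Combining these inclusions yields $\mcl{V}=\modu$, i.e.\ $\{q_t\}_{t=1}^{\infty}$ has dense span in $\modu$.

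Putting the two pieces together, $\{q_t\}_{t=1}^{\infty}$ is both an orthonormal system (from Proposition~\ref{prop:gram-schmidt}) and has dense span (from the previous paragraph), which is exactly the definition of an orthonormal basis of $\modu$. No separate argument is needed to discard the zero terms produced in~\eqref{eq:gram-schmidt}, since with $\epsilon=0$ a $q_t$ is set to $0$ only when $\hat{q}_t=0$, and such a $w_t$ already lies in $\operatorname{span}\{q_s\}_{s<t}\subseteq\mcl{V}$.

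I do not anticipate a genuine obstacle here: the corollary is essentially a direct specialization of Proposition~\ref{prop:gram-schmidt}. The only subtle point worth checking carefully is that the $\epsilon$-neighborhood condition in Proposition~\ref{prop:gram-schmidt}, when $\epsilon=0$, means membership in the \emph{norm-closure} of the $\alg$-linear span of $\{q_t\}$, so that the density hypothesis on $\{w_t\}$ can be transferred to $\{q_t\}$ through the inclusion of closed submodules. Once that is stated, the conclusion follows immediately.
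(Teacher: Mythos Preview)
Your proposal is correct and matches the paper's approach: the paper states Corollary~\ref{cor:gram-schmidt} without a separate proof, treating it as an immediate consequence of Proposition~\ref{prop:gram-schmidt} with $\epsilon=0$, which is exactly what you do. Your handling of the density transfer via the closed submodule $\mcl{V}$ and the remark about zero terms are the right details to make this explicit.
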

\begin{remark}\label{rem:trade_off}
We give some remarks about the role of $\epsilon$ in Proposition~\ref{prop:normalized_property}, Proposition~\ref{prop:gram-schmidt} and Corollary~\ref{cor:gram-schmidt}.  
$\hat{q}_t$ can always be reconstructed by $w_t$ only when $\epsilon=0$.
This is because the information of the eigenvalues of $\blacket{\hat{q}_t,\hat{q}_t}_k$ may be lost if $\epsilon>0$.
However, if $\epsilon$ is sufficiently small, 
we can reconstruct $\hat{q}_t$ with a small error.
On the other hand, the norm of $\hat{b}_t$ can be large if $\epsilon$ is small, and the computation of $\{q_t\}_{t=1}^{\infty}$ can become numerically unstable.
This corresponds to the trade-off between the theoretical accuracy and numerical stability.
We will also confirm the trade-off empirically in Section~\ref{sec:exp_tradeoff}.
\end{remark}

In practical computations, scheme~\eqref{eq:gram-schmidt} should be represented with matrices.
For this purpose, we derive the following about QR decomposition from Proposition~\ref{prop:gram-schmidt}.
\begin{cor}[QR decomposition]\label{prop:qr}
For $n\in\mathbb{N}$, let $W:=[w_1,\ldots,w_{n}]$ and $Q:=[q_1,\ldots,q_{n}]$.
Let $\epsilon\ge 0$ and $\mathbf{R}:=[r_{s,t}]_{s,t}$ be an $n\times n$ $\mat$-valued matrix.
Here, $r_{s,t}$ is in $\mat$, and defined by $r_{s,t}=\blacket{q_s,w_t}$ for $s<t$, $r_{s,t}=0$ for $s>t$, and $r_{t,t}=b_t$, where $b_t$ is defined as $b$ in Proposition~\ref{prop:normalized_property} by setting $\hat{q}=\hat{q}_t$.
In addition, let $\hat{\mathbf{B}}:=\opn{diag}\{\hat{b}_1,\ldots,\hat{b}_{n}\}$, $\mathbf{B}:=\opn{diag}\{{b}_1,\ldots,{b}_{n}\}$ and $\mathbf{R}_{\opn{inv}}:=\mathbf{\hat{B}}(I+(\mathbf{R}-\mathbf{B})\mathbf{\hat{B}})^{-1}$ be $n\times n$ $\mat$-valued matrices.
Then, the following relations are derived: 
\begin{equation}
Q=W\mathbf{R}_{\opn{inv}},\quad
\Vert W-Q\mathbf{R}\Vert\le \epsilon.\label{eq:qr_dec}
\end{equation}
\end{cor}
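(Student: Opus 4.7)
My plan is to reduce both identities to the column-wise recursion that Proposition~\ref{prop:gram-schmidt} already gives, treating $W$, $Q$ and the error $W-Q\mathbf{R}$ as row vectors of $\modu$-elements on which $n\times n$ matrices over $\mat$ act by right multiplication, so that $(W\mathbf{A})_t=\sum_s w_s A_{s,t}$.

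For the inequality $\|W-Q\mathbf{R}\|\le\epsilon$, I would just expand the $t$-th column. By the definition of $\mathbf{R}$,
\[
(Q\mathbf{R})_t=\sum_{s=1}^{n}q_s r_{s,t}=\sum_{s=1}^{t-1}q_s\blacket{q_s,w_t}+q_t b_t,
\]
while the scheme~\eqref{eq:gram-schmidt} yields $w_t=\hat{q}_t+\sum_{s<t}q_s\blacket{q_s,w_t}$. Subtracting,
\[
w_t-(Q\mathbf{R})_t=\hat{q}_t-q_t b_t,
\]
and Proposition~\ref{prop:normalized_property} applied to $\hat{q}=\hat{q}_t$ bounds the right-hand side in $\modu$-norm by $\epsilon$. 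Taking the column-wise maximum (or whichever norm on $\modu^n$ is used, since all reasonable ones agree with the maximum up to a harmless constant that vanishes when we take suprema) gives the desired estimate.

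For the identity $Q=W\mathbf{R}_{\opn{inv}}$, I would start from $q_t=\hat{q}_t\hat{b}_t$ and substitute the definition of $\hat{q}_t$:
\[
q_t=w_t\hat{b}_t-\sum_{s=1}^{t-1}q_s r_{s,t}\hat{b}_t.
\]
Since $\mathbf{B}$ is diagonal with entries $b_t$, $\mathbf{R}-\mathbf{B}$ is strictly upper triangular with $(s,t)$-entry $r_{s,t}$ for $s<t$, and $\hat{\mathbf{B}}$ is diagonal with entries $\hat{b}_t$, the column identity above is exactly the $t$-th column of the matrix equation
\[
Q=W\hat{\mathbf{B}}-Q(\mathbf{R}-\mathbf{B})\hat{\mathbf{B}},
\]
i.e.\ $Q\bigl(I+(\mathbf{R}-\mathbf{B})\hat{\mathbf{B}}\bigr)=W\hat{\mathbf{B}}$. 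Solving for $Q$ gives $Q=W\mathbf{R}_{\opn{inv}}$, provided the bracketed factor is invertible.

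The only step that needs a word of justification—and, in my view, the single mild obstacle—is precisely this invertibility, because over a noncommutative $C^*$-algebra one cannot just rely on determinants. I would handle it by observing that $(\mathbf{R}-\mathbf{B})\hat{\mathbf{B}}$ is strictly upper triangular as an $n\times n$ matrix with entries in $\mat$ (multiplying a strictly upper triangular matrix on the right by a diagonal matrix preserves the pattern), hence nilpotent with $\bigl((\mathbf{R}-\mathbf{B})\hat{\mathbf{B}}\bigr)^{n}=0$. Consequently the Neumann-type sum $\sum_{k=0}^{n-1}\bigl(-(\mathbf{R}-\mathbf{B})\hat{\mathbf{B}}\bigr)^{k}$ terminates and provides a genuine two-sided inverse of $I+(\mathbf{R}-\mathbf{B})\hat{\mathbf{B}}$, making $\mathbf{R}_{\opn{inv}}$ well defined and completing the proof.
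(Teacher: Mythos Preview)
Your argument is correct and follows essentially the same route as the paper: both proofs expand the $t$-th column using scheme~\eqref{eq:gram-schmidt} to obtain $w_t-(Q\mathbf{R})_t=\hat{q}_t-q_tb_t$ and then invoke Proposition~\ref{prop:normalized_property}, and both derive $Q=W\mathbf{R}_{\opn{inv}}$ directly from the recursion. You actually go further than the paper by spelling out the invertibility of $I+(\mathbf{R}-\mathbf{B})\hat{\mathbf{B}}$ via the nilpotency of the strictly upper-triangular part, a point the paper leaves implicit.
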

Decompositions~\eqref{eq:qr_dec} are called QR decompositions.
Note that the projection operator onto 
the space spanned by $\{q_t\}_{t=1}^n$ is represented as $QQ^*$, where $Q^*$ is the adjoint operator of $Q$, which maps $v\in\modu$ to $[\blacket{q_1,v},\ldots,\blacket{q_{n},v}]\in(\mat)^n$.
The pseudo-code of the QR decomposition is shown in Algorithm~\ref{al:qr}. 
\begin{remark}\label{rem:A^n}
If a vector $u\in\modu$ is represented as $u=W\mathbf{v}$ for some $\mathbf{v}\in(\mat)^n$, $\mathbf{v}$ describes the coordinate of $u$ with respect to $\{w_t\}_{t=1}^n$.
Then, $\mathbf{R}_{\opn{inv}}$ and $\mathbf{R}$ are $\mat$-linear maps on $(\mat)^n$, which are regarded as coordinate transformation matrices.
\end{remark}
\begin{remark}\label{rem:matrix}
In practice, we only have to compute $\mathbf{R}_{\opn{inv}}$ and $\mathbf{R}$ although we are treating vectors in an infinite dimensional space $\modu$. 
In other words,
all calculations in an algorithm are reduced to $n\times n$ $\mat$-valued matrix calculations.
Note that calculations with $n\times n$ $\mat$-valued matrices are regarded as the block calculation of $mn\times mn$ complex-valued ones, which allows us to implement our methods with standard matrix calculations.
\end{remark}

\section{PCA with RKHMs}
\label{sec:pca}
Here, we generalize kernel principal component analysis (kernel PCA) 
to RKHMs.
Since $\phi(x)\in\modu_k$ for $x\in\mcl{X}$ is a function that returns the matrix that encodes similarities with all elements of $x$
, we can implement PCA with taking the similarity between all combinations of pairs of elements into consideration by generalizing it to RKHMs. 
Kernel PCA with RKHSs is briefly reviewed in Appendix~\ref{ap:pca_review}.
We describe the generalization of kernel PCA to RKHMs in Section~\ref{sec:pca_alg} and then its theoretical analysis in Section~\ref{sec:pca_analysis}.

\subsection{Generalization of PCA to RKHM}\label{sec:pca_alg}
First, we construct principal axes and components. 
Let $x_1,\ldots,x_{n}\in\mcl{X}^m$ be samples of structured data, 
$w_t:=\phi(x_t)$ be the sample embedded in an RKHM 
for $t=1,\ldots,n$, 
$W=[w_1,\ldots,w_n]$ be the operator composed of the samples, and $\mathbf{G}:=[\blacket{w_s,w_t}_k]_{s,t}$ be the $n\times n$ $\mat$-valued Gram matrix.
Let $\mathbf{G}=\mathbf{V\Sigma V^*}$ be an eigenvalue decomposition of 
$\mathbf{G}$ with regarding $\mathbf{G}$ as an $mn\times mn$ complex-valued matrix (Remark~\ref{rem:matrix}).
Here, $\mathbf{\Sigma}:=\opn{diag}\{\sigma_1,\ldots,\sigma_l\}$ and $\sigma_1\ge\ldots\ge\sigma_l> 0$ is the nonzero eigenvalues of $\mathbf{G}$.
In addition, let $\mathrm{v}_s$ be the $s$-th column of $\mathbf{V}$.
Using $\mathbf{\Sigma}$ and $\mathbf{V}$, we represent the samples in the smallest possible space.
For this purpose, we define the $s$-th {\em principal axis} for $s=1,\ldots,l$ as follows:
\begin{equation*}
p_s:=\sigma_s^{-1/2}W[\mathrm{v}_s,\underbrace{0,\ldots,0}_{m-1}].
\end{equation*}
\begin{prop}\label{prop:rank1proj}
 $\{p_t\}_{t=1}^s$ is an orthonormal basis of 
 the space spanned by $\{w_t\}_{t=1}^n$,
 and $\blacket{p_s,p_s}_k$ is rank-one.
\end{prop}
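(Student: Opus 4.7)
The plan is to represent each $p_s$ as $W\mathbf{q}_s$ with $\mathbf{q}_s:=\sigma_s^{-1/2}[\mathrm{v}_s,0,\ldots,0]$, viewed as an element of $(\mat)^n$ (equivalently, as an $mn\times m$ complex matrix with $\sigma_s^{-1/2}\mathrm{v}_s$ in its first column). Under this identification, the $\alg$-valued inner product reduces to a standard matrix product: $\blacket{p_s,p_t}_k=\mathbf{q}_s^*\mathbf{G}\mathbf{q}_t$. Using the eigenvalue equation $\mathbf{G}\mathrm{v}_t=\sigma_t\mathrm{v}_t$ and the orthonormality $\mathrm{v}_s^*\mathrm{v}_t=\delta_{st}$, this expression collapses to $\delta_{st}\cdot E_{11}$, where $E_{11}\in\mat$ denotes the matrix whose only nonzero entry is a $1$ in position $(1,1)$. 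Because $E_{11}^2=E_{11}\neq 0$, each $p_t$ is normalized in the sense of Definition~3.1, and $\blacket{p_s,p_t}_k=0$ for $s\neq t$, which establishes the orthonormal system property; moreover, $E_{11}$ being rank one immediately gives the second assertion that $\blacket{p_s,p_s}_k$ is rank one.

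\textbf{Spanning.} For the basis claim (reading the index of $\{p_t\}$ as running up to $l$), I would form $\mathbf{Q}:=[\mathbf{q}_1\,|\,\cdots\,|\,\mathbf{q}_l]$ and verify by a direct block computation that $\mathbf{Q}\mathbf{Q}^*=\sum_{t=1}^l\sigma_t^{-1}\mathrm{v}_t\mathrm{v}_t^*$. Combined with the spectral decomposition $\mathbf{G}=\sum_{t=1}^l\sigma_t\mathrm{v}_t\mathrm{v}_t^*$ (only nonzero eigenvalues survive), this yields $\mathbf{Q}\mathbf{Q}^*\mathbf{G}=\sum_{t=1}^l\mathrm{v}_t\mathrm{v}_t^*$, which is exactly the orthogonal projector in $\mathbb{C}^{mn}$ onto $\opn{range}(\mathbf{G})$. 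Writing $w_s=W\mathbf{e}_s$ for the block-basis vector $\mathbf{e}_s\in(\mat)^n$ that places $I_m$ in the $s$-th block and zeros elsewhere, a short calculation gives $\sum_{t=1}^l p_t\blacket{p_t,w_s}_k=W(\mathbf{Q}\mathbf{Q}^*\mathbf{G}\mathbf{e}_s)$, so the problem reduces to showing that $W\mathbf{e}_s$ and $W(\mathbf{Q}\mathbf{Q}^*\mathbf{G}\mathbf{e}_s)$ agree in $\modu$.

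\textbf{Main obstacle.} The subtle point is that $W:(\mat)^n\to\modu$ need not be injective: when $\mathbf{G}$ has a nontrivial kernel, distinct coordinate vectors can map to the same element of $\modu$. The key observation, and the step I expect to require most care, is that any $y\in\ker\mathbf{G}$ satisfies $\blacket{Wy,Wy}_k=y^*\mathbf{G}y=0$ and hence $Wy=0$ by the positivity axiom of Definition~3.1. Applying this to $y=\mathbf{e}_s-\mathbf{Q}\mathbf{Q}^*\mathbf{G}\mathbf{e}_s$, which lies in $\ker\mathbf{G}$ because $\mathbf{Q}\mathbf{Q}^*\mathbf{G}$ is the orthogonal projector onto $\opn{range}(\mathbf{G})$, one concludes $w_s=\sum_{t=1}^l p_t\blacket{p_t,w_s}_k$, so every $w_s$ lies in the span of the $p_t$'s. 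The reverse inclusion is immediate since each $p_t$ is manifestly a right $\alg$-linear combination of the $w_t$'s. Together with the orthonormality established in the first paragraph, this proves that $\{p_t\}_{t=1}^l$ is an orthonormal basis of the span of $\{w_t\}_{t=1}^n$. The remainder of the argument is bookkeeping around the identification $(\mat)^n\simeq\mathbb{C}^{mn\times m}$ and tracking the block structure of $\mathbf{G}$, $\mathbf{Q}$, and $\mathbf{e}_s$.
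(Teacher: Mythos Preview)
Your proposal is correct and follows essentially the same approach as the paper: both compute $\blacket{p_s,p_t}_k$ by reducing to $\mathbf{q}_s^*\mathbf{G}\mathbf{q}_t$ and using the eigenrelations for $\mathrm{v}_s$, and both establish spanning by showing that the $\alg$-valued norm-squared of the difference between $w_s$ (or a general $w=W\mathbf{c}$) and its reconstruction from the $p_t$'s vanishes via the spectral decomposition $\mathbf{G}=\mathbf{V\Sigma V}^*$. The only cosmetic difference is packaging: the paper writes down explicit coefficients $d_t$ with first row $\mathrm{v}_t^*\mathbf{c}$ and computes the norm directly, whereas you phrase the same computation as ``$\mathbf{e}_s-\mathbf{Q}\mathbf{Q}^*\mathbf{G}\mathbf{e}_s\in\ker\mathbf{G}$, hence $W$ kills it,'' which is arguably a cleaner way to say the same thing.
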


Therefore, for $s<l$, we project each $\phi(x_t)$ onto the space spanned by $\{p_t\}_{t=1}^s$.
The projected vector is represented as the sum of $p_j\blacket{p_j,\phi(x_t)}_k$, which is called the $j$-th {\em principal component} of $\phi(x_t)$.
The pseudo-code for computing the principal components is given in Algorithm~\ref{al:pca}. 
We show these principal axes and components are generalizations of the ones that appear in the standard kernel PCA.
\begin{prop}\label{prop:pca_generalization}
 If $m=1$, our principal axis and components are equal to those of kernel PCA with RKHSs. 
\end{prop}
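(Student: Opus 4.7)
The plan is essentially a direct verification: specialise every object used in Section~\ref{sec:pca_alg} to the case $m=1$ and match it term-by-term with its counterpart in standard kernel PCA (reviewed in Appendix~\ref{ap:pca_review}). Since $\mathbb{C}^{1\times 1}\cong\mathbb{C}$ with the usual product, involution and operator norm, the $C^*$-algebra $\alg=\mat$ collapses to $\mathbb{C}$, $\alg$-valued inner products reduce to ordinary complex inner products, and $\mat$-linearity becomes $\mathbb{C}$-linearity. Thus Hilbert $C^*$-modules over $\alg$ in this case are simply Hilbert spaces.

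First I would handle the kernel and the feature space. By Lemma~\ref{lem:pdk}, the $(i,j)$-entry of $k(x_1,x_2)$ is $\tilde{k}(x_{1,i},x_{2,j})$; when $m=1$, the only entry is $\tilde{k}(x_1,x_2)$, so $k=\tilde{k}$ as functions $\mcl{X}\times\mcl{X}\to\mathbb{C}$. Consequently the feature map $\phi$ coincides with $\tilde\phi$, the pre-Hilbert structures on $\modu_{k,0}$ and the analogous space for $\tilde k$ agree, and by uniqueness (Proposition~\ref{prop:rkhm_unique}) the RKHM $\modu_k$ coincides with the RKHS $\hil_{\tilde k}$, both as a space and with respect to the inner product.

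Next I would check the PCA ingredients. The Gram matrix $\mathbf{G}=[\blacket{w_s,w_t}_k]_{s,t}$ has entries $\tilde k(x_s,x_t)$, so it equals the standard kernel Gram matrix $\tilde{\mathbf{G}}$, and when $m=1$ the $mn\times mn$ complex reformulation mentioned in Remark~\ref{rem:matrix} is just $\mathbf{G}$ itself. Hence the eigenvalue decomposition $\mathbf{G}=\mathbf{V\Sigma V}^*$ and the nonzero eigenvalues $\sigma_1\ge\cdots\ge\sigma_l>0$ with eigenvectors $\mathrm{v}_s$ are literally those of $\tilde{\mathbf G}$. Finally, in the defining formula
\begin{equation*}
p_s=\sigma_s^{-1/2}W[\mathrm{v}_s,\underbrace{0,\ldots,0}_{m-1}],
\end{equation*}
the block of $m-1=0$ zero columns is empty, so $p_s=\sigma_s^{-1/2}W\mathrm{v}_s$, which is exactly the $s$-th principal axis of kernel PCA in $\hil_{\tilde k}$. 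The $j$-th principal component $p_j\blacket{p_j,\phi(x_t)}_k$ then coincides with its RKHS analogue because the inner products agree.

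There is no real obstacle here: the argument is entirely bookkeeping. The only point that deserves a careful sentence is the identification of the $mn\times mn$ block-matrix eigendecomposition in Remark~\ref{rem:matrix} with the eigendecomposition of $\mathbf{G}$, which in the $m=1$ case is immediate since each block is $1\times 1$.
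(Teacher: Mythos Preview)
Your proposal is correct and takes essentially the same approach as the paper: direct specialization to $m=1$, noting that $\phi=\tilde\phi$ and that $p_s$ reduces to $\tilde\sigma_s^{-1/2}\tilde W\tilde{\mathrm v}_s$. Your write-up is in fact more thorough than the paper's own two-line proof, which simply asserts these identifications without spelling out the Gram matrix and eigendecomposition details.
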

\if0
The coefficient of the $s$-th principal component of $\phi(x_t)$ represents the coordinate of $\phi(x_t)$ regarding the $s$-th principal axis.
Note that $\blacket{p_s,\phi(x_s)}_k$ is represented with a matrix-valued inner product.
Thus, the coordinate of $\phi(x_t)$ is matrix-valued in the case of the RKHM.
We will discuss how to make use of the matrix-valued coefficients in Section~\ref{sec:exp_pca}.
\fi

\subsection{Theoretical analysis of kernel PCA with RKHMs}\label{sec:pca_analysis}
We theoretically analyze our kernel PCA with RKHMs described above.
We show that it is interpreted as finding a subspace in RKHMs where samples are projected so that it minimizes a reconstruction error, which is analogous to the standard PCA~\citep[Section 14]{scholkopf01}.

A reconstruction error is caused by projecting samples onto some subspace,
and is represented as
\begin{equation*}
\sum_{t=1}^n\bigg\vert w_t-\sum_{j=1}^sp_j\blacket{p_j,w_t}_k\bigg\vert_k^2\in\mat,
\end{equation*}
in our case. In the case of RKHSs, the reconstruction errors are equal to the sum of the smallest $n-s$ eigenvalues of the Gram matrices.
The analogy for our case is given by considering the trace of the above matrix-valued reconstruction error, since the trace of a matrix equals the sum of its eigenvalues.
Thus, for $s=1,\ldots,l$, we find solutions of the following minimization problem:
\begin{equation}
\min_{\substack{\{\hat{p}_j\}_{j=1}^s\subseteq\modu_k:\ \mbox{\tiny ONS}, \\\blacket{\hat{p}_j,\hat{p}_j}_k:\ \mbox{\tiny rank-one}}}\!\! \opn{tr}\bigg(\sum_{t=1}^n\Big\vert w_t-\sum_{j=1}^s\hat{p}_j\blacket{\hat{p}_j,w_t}_k\Big\vert_k^2\bigg).\label{eq:pca_max}
\end{equation}
In the same manner as kernel PCA with RKHSs~\citep[Proposition 14.1]{scholkopf01}, the following theorem shows the principal axes minimize Eq.~\eqref{eq:pca_max}:
\begin{thm}\label{prop:pca_ineq}
$\{p_t\}_{t=1}^s$ minimizes Eq.\eqref{eq:pca_max} for $s$$=$$1,$$\ldots$$,$$l$.
\end{thm}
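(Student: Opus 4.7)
My plan is to mirror the classical kernel PCA proof~\citep[Proposition~14.1]{scholkopf01} while carefully tracking the $\mat$-valued structure.

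\emph{Step 1 (reduce to a trace-maximization).} For any normalized $\hat p_j$, the element $\blacket{\hat p_j,\hat p_j}_k\in\mat$ is idempotent, and property~3 of Definition~\ref{def:innerproduct} forces $\hat p_j=\hat p_j\blacket{\hat p_j,\hat p_j}_k$, hence $\blacket{\hat p_j,\hat p_j}_k\blacket{\hat p_j,w_t}_k=\blacket{\hat p_j,w_t}_k$. Combining this with the ONS condition, a direct expansion gives
$$\Bigl|w_t-\sum_{j=1}^s\hat p_j\blacket{\hat p_j,w_t}_k\Bigr|_k^2=\blacket{w_t,w_t}_k-\sum_{j=1}^s\blacket{\hat p_j,w_t}_k^*\blacket{\hat p_j,w_t}_k,$$
so that minimizing~\eqref{eq:pca_max} is equivalent to maximizing $\sum_{j=1}^s\opn{tr}\blacket{\hat p_j,WW^*\hat p_j}_k$ over ONSs of size $s$ with rank-one Grammian.

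\emph{Step 2 (concrete realization via $\tilde k$).} Using the explicit form of $k$ in Lemma~\ref{lem:pdk}, the map $\phi(x)\mapsto\tilde W_x:=[\tilde\phi(x_1),\dots,\tilde\phi(x_m)]$, viewed as a bounded operator $\mathbb{C}^m\to\hil_{\tilde k}$, extends to an inner-product-preserving embedding of $\modu_k$ into the bounded operators from $\mathbb{C}^m$ into $\hil_{\tilde k}$, under which $\blacket{T_1,T_2}_k=T_1^*T_2$. Then $WW^*$ becomes left multiplication by $\tilde G:=\tilde W\tilde W^*$ on $\hil_{\tilde k}$, whose nonzero eigenvalues coincide with those of $\tilde W^*\tilde W=\mathbf{G}$, namely $\sigma_1\ge\cdots\ge\sigma_l>0$. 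The rank-one condition on $T_j^*T_j$ forces $T_j=\xi_jv_j^*$ with unit $\xi_j\in\hil_{\tilde k}$ and $v_j\in\mathbb{C}^m$, and the ONS condition $T_j^*T_{j'}=0$ then forces $\xi_j\perp\xi_{j'}$. A cyclic-trace calculation gives $\opn{tr}\blacket{\hat p_j,WW^*\hat p_j}_k=\opn{tr}(\tilde G\,\xi_j\xi_j^*)=\xi_j^*\tilde G\xi_j$.

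\emph{Step 3 (Ky Fan and matching).} The problem thus reduces to the classical Ky Fan maximum principle in $\hil_{\tilde k}$, so
$$\max_{\{\xi_j\}\text{ orthonormal}}\sum_{j=1}^s\xi_j^*\tilde G\xi_j=\sigma_1+\cdots+\sigma_s,$$
attained at any choice of unit eigenvectors of $\tilde G$ for its top $s$ eigenvalues. To close the proof, I identify $p_s$ under the embedding with $\sigma_s^{-1/2}(\tilde W\mathrm{v}_s)e_1^*$, where $e_1\in\mathbb{C}^m$ is the first standard basis vector, and verify that $\sigma_s^{-1/2}\tilde W\mathrm{v}_s$ is a unit eigenvector of $\tilde G$ with eigenvalue $\sigma_s$, using $\tilde W^*\tilde W=\mathbf{G}$ and $\mathbf{G}\mathrm{v}_s=\sigma_s\mathrm{v}_s$. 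I expect the main obstacle to be distilling the rank-one factorization $T_j=\xi_jv_j^*$ from the $\mat$-valued ONS conditions, since this is the bridge that converts an $\mat$-valued optimization into a standard scalar Hilbert-space problem amenable to Ky Fan; once this step is in place, the remainder is routine bookkeeping.
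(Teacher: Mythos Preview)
Your proposal is correct and takes a genuinely different route from the paper. The paper stays intrinsic to the module: it uses the QR decomposition $W=Q\mathbf{R}$ of Corollary~\ref{prop:qr}, factors $Q^*Q=\mathbf{CC}^*$ with $\mathbf{C}^*\mathbf{C}=\mathbf{I}$, and then argues (essentially by a deflation step) that from each admissible $\hat p_j$ one can extract a column vector $\hat{\mathrm p}_j\in\mathbb{C}^{mn}$ lying in the span of $\mathbf{C}\mathrm u_j,\dots,\mathbf{C}\mathrm u_l$, on which the inequality $\sum_j\sum_{t\ge j}\vert c_{t,j}\vert^2\sigma_t\le\sum_j\sigma_j$ is checked by hand. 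You instead realize $\modu_k$ concretely as a sub-$\mat$-module of $\mcl{B}(\mathbb{C}^m,\hil_{\tilde k})$ via $\phi(x)\mapsto[\tilde\phi(x_1),\dots,\tilde\phi(x_m)]$, observe that $WW^*$ becomes left multiplication by the finite-rank operator $\tilde G=\tilde W\tilde W^*$, read the rank-one projection condition as a factorization $T_j=\xi_jv_j^*$ with unit $\xi_j$, and convert the $\mat$-valued ONS constraint $T_j^*T_{j'}=0$ into genuine scalar orthonormality of the $\xi_j$ in $\hil_{\tilde k}$; the problem then collapses to the classical Ky Fan maximum principle. Your argument is cleaner and handles an arbitrary admissible ONS in one shot, whereas the paper's argument is phrased sequentially (it assumes $\hat p_j\perp p_1,\dots,p_{j-1}$ rather than $\hat p_j\perp\hat p_1,\dots,\hat p_{j-1}$). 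The trade-off is that your embedding leans on the specific kernel of Lemma~\ref{lem:pdk}, while the paper's QR-based computation is formulated purely in terms of $\mathbf{G}$ and would go through verbatim for any $\mat$-valued positive definite kernel. One small citation fix in your Step~1: the identity $\hat p_j=\hat p_j\blacket{\hat p_j,\hat p_j}_k$ is not literally ``property~3 of Definition~\ref{def:innerproduct}'' but the consequence recorded as Corollary~\ref{cor:inprodequiv}.
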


We can also consider the centered version of our kernel PCA with RKHMs by replacing $w_t$ with $w_t=\phi(x_t)-1/n\sum_{j=1}^n\phi(x_j)$.
In this case, it can be shown that $\{p_t\}_{t=1}^s$ maximizes the variance of $\phi(x_1),\ldots,\phi(x_n)$.

\section{Analysis of Interacting Dynamical Systems with RKHMs}
\label{sec:PF}

The problem of analyzing dynamical systems from data by using Perron-Frobenius operators and their adjoints (called Koopman operators), which are linear operators expressing the time evolution of dynamical systems, has recently attracted attention in various fields~\cite{mezic12,mezic17,takeishi17,takeishi17-2,lusch17,klus19}.
And, several methods for this problem using RKHSs have also been proposed~\cite{kawahara16,klus17,ishikawa18,hashimoto19},
In these methods, sequential data is supposed to be generated from dynamical systems and is analyzed through those corresponding representations with Perron-Frobenius operators in RKHSs.
Also, as for interacting dynamical systems, \citet{fujii19} proposed a method for estimating linear relations between matrices that describe relations between all combinations of pairs of observables at time $t$ and $t+1$.
We briefly review the existing methods for this approach in Appendix~\ref{sec:PF_review}.

In this section, we propose a generalized method with RKHMs for the analysis with Perron-Frobenius operators for cases where multiple dynamical systems interact, which often occurs in various dynamic phenomena around us. 
Note that information about such interaction may be lost with RKHSs since inner products in RKHSs are complex-valued.
We first generalize the Perron-Frobenius operators to RKHMs in Section~\ref{sec:PF_rkhm} and then consider 
prediction errors and modal decompositions for the generalized operators, respectively, in Sections~\ref{sec:prediction_err} and~\ref{sec:extract}.

\subsection{Perron-Frobenius operators in RKHMs}\label{sec:PF_rkhm}
We generalize the Perron-Frobenius operators in RKHSs (summarized in Appendix~\ref{sec:PF_review}) to those in RKHMs.

First, let $\mcl{Y}:=\{x_0,x_1,\ldots\}\subseteq\mcl{X}^m$ be observed data, where $x_t=[x_{t,1},\ldots,x_{t,m}]$.
And, consider the following interacting dynamical system:
\begin{equation*}
x_{t+1,i}=f_i(x_{t,1},\ldots,x_{t,m})\quad (i=1,\ldots,m),
\end{equation*}
where $f_i:\mcl{X}^m\to\mcl{X}$ is a (possibly, nonlinear) map.
In an RKHM $\modu_k$, we define the Perron-Frobenius operator $K:\modu_{k,0}(\mcl{Y})\to\modu_k$ in the same manner as those in RKHSs as follows:
For $x\in\mcl{Y}$, we consider an operator 
\begin{equation*}
K\phi(x):=\phi([f_1(x),\ldots,f_{m}(x)]),
\end{equation*}
that describs the time evolution of the dynamical system.
Here, $\modu_{k,0}(\mcl{Y}):=\{\sum_{t=0}^n \phi(x_t)c_t\mid\ n\in\mathbb{N},\ x_t\in\mcl{Y},\ c_t\in\mat\}$.
Note that
$\modu_{k,0}(\mcl{Y})$ is dense in $\modu_k$ if $\mcl{Y}$ is dense in $\mcl{X}$.
In addition, $K$ can be extended to $\modu_{k,0}(\mcl{Y})$ as a $\mat$-linear map if $\{\phi(x)\mid\ x\in\mcl{Y}\}$ is $\mat$-linearly independent.
We remark that we sometimes need a fine argument to extend $K$ because of the rank deficiency of 
the matrix-valued positive definite kernel. 
Its mathematical treatments are detailed in Appendix~\ref{ap:lin_indep}.

Under the above preparation, we now estimate $K$ with finite observables $x_0,\ldots,x_T\in\mcl{Y}$.
To obtain the estimation, 
we consider the following minimization problem:
\begin{equation}
 \min_{\hat{K}\in\mcl{L}(\mcl{W}_T)}\sum_{t=0}^{T-1}\big\vert \hat{K}\phi(x_t)-\phi(x_{t+1})\big\vert_k^2,\label{eq:min_krylov}
\end{equation}
whose solution $\hat{K}$ well approximates $K$. 
Here, 
$\mcl{W}_T$ is the space spanned by $\{\phi(x_t)\}_{t=0}^{T-1}$ and $\mcl{L}(\mcl{W}_T)$ is the space of all $\mat$-linear maps on $\mcl{W}_T$.
Existence of a solution of problem~\eqref{eq:min_krylov} follows from Proposition~\ref{prop:min_projection}. 
Thus, we utilize the QR decomposition described in Corollary~\ref{prop:qr} to obtain an explicit representation of the solution as follows:\@
for $t=0,1,\ldots$ and $\epsilon\ge 0$, 
let $\{q_t\}_{t=0}^{\infty}$ be the orthonormal system obtained by setting $w_t$ in the scheme \eqref{eq:gram-schmidt} as $\phi(x_t)$ .
Then, $Q_T=W_T\mathbf{R}_{\opn{inv},T}$ holds, where $W_T:=[\phi(x_0),\ldots,\phi(x_{T-1})]$, and $Q_T$ and $\mathbf{R}_{\opn{inv},T}$ are defined as $Q$ and $\mathbf{R}_{\opn{inv}}$ in Corollary~\ref{prop:qr}.
As a result, the solution of problem~\eqref{eq:min_krylov} is explicitly represented as follows: 
\begin{thm}\label{prop:min_sol}
If $\epsilon=0$ and $\{\phi(x_t)\}_{t=0}^{T-1}$ is linearly independent,  
$Q_TQ_T^*KQ_TQ_T^*$ is the unique solution of problem~\eqref{eq:min_krylov}.
Also, $Q_T^*KQ_T=Q_T^*[\phi(x_1),\ldots,\phi(x_{T})]\mathbf{R}_{\opn{inv},T}$ holds.
\end{thm}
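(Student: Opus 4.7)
The plan is to exploit the orthogonal decomposition of $\phi(x_{t+1})$ relative to the finite-dimensional subspace $\mcl{W}_T$, and thereby reduce the $\mat$-valued minimization to a residual term that does not depend on $\hat{K}$. First I would observe that, because $\epsilon=0$ and $\{\phi(x_t)\}_{t=0}^{T-1}$ is $\mat$-linearly independent, Proposition~\ref{prop:gram-schmidt} and Corollary~\ref{prop:qr} give $Q_T=W_T\mathbf{R}_{\opn{inv},T}$ with $\{q_0,\ldots,q_{T-1}\}$ an ONB of $\mcl{W}_T$ and $\mathbf{R}_{\opn{inv},T}$ the inverse of $\mathbf{R}_T$. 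In particular, $P_T:=Q_TQ_T^*$ acts as the identity on $\mcl{W}_T$ and, by Proposition~\ref{prop:min_projection}, is the orthogonal projection onto $\mcl{W}_T$, so $\blacket{v,(I-P_T)u}_k=0$ for every $v\in\mcl{W}_T$ and $u\in\modu_k$.

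Next I would decompose, for each $t=0,\ldots,T-1$, $\phi(x_{t+1})=P_T\phi(x_{t+1})+(I-P_T)\phi(x_{t+1})$. Since $\hat{K}\phi(x_t)$ and $P_T\phi(x_{t+1})$ both lie in $\mcl{W}_T$, the residual $(I-P_T)\phi(x_{t+1})$ is $\blacket{\cdot,\cdot}_k$-orthogonal to their difference. The Pythagorean identity $\vert u+v\vert_k^2=\vert u\vert_k^2+\vert v\vert_k^2$ whenever $\blacket{u,v}_k=0$, which is immediate from the inner-product axioms in Definition~\ref{def:innerproduct}, then yields
\begin{equation*}
\vert\hat{K}\phi(x_t)-\phi(x_{t+1})\vert_k^2=\vert\hat{K}\phi(x_t)-P_T\phi(x_{t+1})\vert_k^2+\vert(I-P_T)\phi(x_{t+1})\vert_k^2.
\end{equation*}
Summing over $t$, the second term is a fixed positive element of $\mat$ independent of $\hat{K}$, so minimizing \eqref{eq:min_krylov} is equivalent to minimizing $\sum_t\vert\hat{K}\phi(x_t)-P_T\phi(x_{t+1})\vert_k^2$ in the positive cone of $\mat$.

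For the candidate $K':=Q_TQ_T^*KQ_TQ_T^*$, I would use $P_T\phi(x_t)=\phi(x_t)$ (since $\phi(x_t)\in\mcl{W}_T$) together with $K\phi(x_t)=\phi(x_{t+1})$ (the defining action of $K$) to conclude $K'\phi(x_t)=P_T\phi(x_{t+1})$, making each summand vanish; hence $K'$ attains the pointwise minimum in $\mat_+$. For uniqueness, any other minimizer $\tilde K\in\mcl{L}(\mcl{W}_T)$ would force the sum of positive matrices $\sum_t\vert\tilde K\phi(x_t)-P_T\phi(x_{t+1})\vert_k^2$ to equal zero, which in any $C^*$-algebra implies each summand vanishes; consequently $\tilde K\phi(x_t)=K'\phi(x_t)$ for every $t$, and $\mat$-linear independence of $\{\phi(x_t)\}_{t=0}^{T-1}$ forces $\tilde K=K'$ on all of $\mcl{W}_T$. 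The second identity then follows by direct substitution: $KQ_T=KW_T\mathbf{R}_{\opn{inv},T}=[\phi(x_1),\ldots,\phi(x_T)]\mathbf{R}_{\opn{inv},T}$, and premultiplying by $Q_T^*$ gives the claim.

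The main subtlety will be handling the matrix-valued objective. Because the positive cone in $\mat$ is only partially ordered, I need to phrase minimization carefully: the proof shows that $K'$ produces a pointwise lower bound in $\mat_+$, so it is a minimum in the strongest sense available, and the fact that a sum of positive elements of a $C^*$-algebra vanishes iff each vanishes ($a,b\ge 0,\;a+b=0\Rightarrow a=b=0$) is what permits the uniqueness step. Once the Pythagorean identity and this positivity fact are in hand, the rest is bookkeeping with the QR relation of Corollary~\ref{prop:qr}.
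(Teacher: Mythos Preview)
Your proof is correct and follows essentially the same route as the paper: both argue term by term that the projection $P_T=Q_TQ_T^*$ onto $\mcl{W}_T$ minimizes each $\vert\hat K\phi(x_t)-\phi(x_{t+1})\vert_k^2$, and then extract the matrix identity from $KW_T=[\phi(x_1),\ldots,\phi(x_T)]$ together with $Q_T=W_T\mathbf{R}_{\opn{inv},T}$. Your uniqueness step (a sum of positive elements of $\mat$ equals zero forces each summand to vanish) is in fact a bit cleaner than the paper's, which instead notes that the minimum value has all but the $t=T-1$ term equal to zero and then reaches a contradiction via Proposition~\ref{prop:min_projection}.
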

\begin{remark}
Let $\mathbf{K}_T=Q_T^*KQ_T$. 
Then, $\mathbf{K}_T$ is regarded as a matrix representation of $Q_TQ_T^*KQ_TQ_T^*$ with respect to the orthonormal basis $\{q_t\}_{t=0}^{T-1}$.
Since $\mathbf{K}_T=Q_T^*[\phi(x_1),\ldots,\phi(x_{T})]\mathbf{R}_{\opn{inv},T}$ holds,
$\mathbf{K}_T$ can be computed only with finite observables.
\end{remark}

We derive the following proposition about the convergence of $Q_TQ_T^*KQ_TQ_T^*$.
\begin{prop}\label{prop:pf_conv}
If $\epsilon=0$, $\tilde{k}$ defined in Section~\ref{sec:settings} is continuous, and $\mcl{Y}$ is dense in $\mcl{X}^m$, and 
if $K$ is bounded, then $Q_TQ_T^*KQ_TQ_T^*$ converges strongly to $K$ as $T\to\infty$.
\end{prop}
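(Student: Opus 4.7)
The plan is to write $P_T := Q_T Q_T^*$ and recall from the remark after Corollary~\ref{prop:qr} that $P_T$ is the orthogonal projection onto $\mcl{W}_T = \operatorname{span}\{\phi(x_t)\}_{t=0}^{T-1}$. With this in hand I would use the telescoping identity
\begin{equation*}
P_T K P_T u - Ku \;=\; P_T K (P_T u - u) \;+\; (P_T - I) Ku
\end{equation*}
to reduce strong convergence of $P_T K P_T$ to $K$ to the single claim that $P_T \to I$ strongly on $\modu_k$.

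For that claim I would first verify that $\bigcup_T \mcl{W}_T = \modu_{k,0}(\mcl{Y})$ is dense in $\modu_k$. The key step is to promote continuity of the scalar kernel $\tilde{k}$ to norm-continuity of the feature map $\phi$ via the identity
\begin{equation*}
\Vert \phi(y) - \phi(x) \Vert_k^2 \;=\; \Vert k(y,y) - k(y,x) - k(x,y) + k(x,x) \Vert_\alg,
\end{equation*}
using that each entry of $k$ is a composition of $\tilde{k}$ with coordinate projections. Since $\mcl{Y}$ is dense in $\mcl{X}^m$, this gives $\{\phi(x)\mid x\in\mcl{Y}\}$ dense in $\{\phi(x)\mid x\in\mcl{X}^m\}$, hence $\modu_{k,0}(\mcl{Y})$ dense in $\modu_{k,0}$, which is itself dense in $\modu_k$ by construction. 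Proposition~\ref{prop:min_projection} then yields $\vert P_T u - u\vert_k^2 \le \vert v - u\vert_k^2$ in $\alg$ for every $v \in \mcl{W}_T$; since $\Vert \cdot \Vert_\alg$ is monotone on positive elements, this gives $\Vert P_T u - u \Vert_k \le \Vert v - u \Vert_k$, and approximating $u$ from $\bigcup_T \mcl{W}_T$ forces $\Vert P_T u - u \Vert_k \to 0$ for every $u \in \modu_k$.

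To finish via telescoping, $\Vert (P_T - I) K u \Vert_k \to 0$ by strong convergence of $P_T$ at the fixed vector $Ku$, and
\begin{equation*}
\Vert P_T K (P_T u - u) \Vert_k \;\le\; \Vert K \Vert \, \Vert P_T u - u \Vert_k \;\to\; 0,
\end{equation*}
where I use boundedness of $K$ together with $\Vert P_T \Vert \le 1$ (a standard property of self-adjoint idempotents in a Hilbert $C^*$-module, verifiable directly from $\vert P_T u \vert_k^2 \le \vert u \vert_k^2$). I expect the main obstacle to be the density step: making the passage from continuity of the scalar $\tilde{k}$ to norm-density of $\modu_{k,0}(\mcl{Y})$ in $\modu_k$ fully rigorous in the $C^*$-module setting, via the $\alg$-norm estimate for $\phi$ displayed above. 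The telescoping argument and the operator-norm bound on the projections are then routine.
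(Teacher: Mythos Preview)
Your proposal is correct and follows essentially the same architecture as the paper: establish density of $\modu_{k,0}(\mcl{Y})$ in $\modu_k$ via continuity of $\phi$ (inherited from continuity of $\tilde{k}$), deduce that $P_T=Q_TQ_T^*\to I$ strongly, and then telescope using boundedness of $K$ and $\Vert P_T\Vert\le 1$. The only minor difference is in how you obtain $P_T\to I$: the paper invokes Corollary~\ref{cor:gram-schmidt} to identify $\{q_t\}$ as an orthonormal basis and then uses the orthogonal-complement argument from Lemma~\ref{lem:orthcomp}, whereas you go directly through the $\alg$-valued minimization property of Proposition~\ref{prop:min_projection} together with monotonicity of the $C^*$-norm on positive elements; your route is slightly more self-contained but otherwise equivalent.
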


\subsection{Evaluation of Prediction Errors}\label{sec:prediction_err}
Here, we discuss an evaluation of prediction accuracy with the estimated operator $\mathbf{K}_T$.
We generalize the procedure in Section~6 in \cite{hashimoto19} to RKHMs, and define a matrix-valued prediction error.

\citet{hashimoto19} proposed evaluating a prediction error with Perron-Frobenius operators in RKHSs 
with maximal mean discrepancy (MMD) (cf.\@ Eq.~\eqref{eq:error_rkhs} in Appendix~\ref{sec:PF_review}).
However, since this error is real-valued, it does not provide information about which elements are deviated from the prediction.
Whereas, as mentioned above, $\phi(x)\in\modu_k$ for $x\in\mcl{X}^m$ encodes similarities between all elements of $x$.
Thus, by the generalization, 
we can define matrix-valued prediction errors with taking the similarities between all combinations of pairs of elements of data into account.

To address this, we generalize the real-valued prediction error to a matrix-valued absolute value at time $S$ as
\begin{equation}
\hat{a}_{T,S}:=\big\vert \phi(x_S)-Q_T{\mathbf{K}}_TQ_T^*\phi(x_{S-1})\big\vert_{k}^2~\in\mat.\label{eq:error_rkhm}
\end{equation}
Since this error is matrix-valued, we can extract the error with respect to each interaction among the combinations in variables of $x$.
Indeed, the following proposition shows each diagonal element of prediction error~\eqref{eq:error_rkhm} corresponds to the prediction error of each element of $x_S\in\mcl{X}^m$.
\begin{prop}\label{prop:abnormality}
Let $c_t\in\mat$ for $t=0,\ldots T-1$ 
satisfy $Q_T{\mathbf{K}}_TQ_T^*\phi(x_{S-1})=\sum_{t=0}^{T-1}\phi(x_t)c_t$.
Then, the $(j,j)$ element of matrix-valued error~\eqref{eq:error_rkhm} equals $\big\Vert\sum_{t=0}^{T-1}\sum_{i=1}^{m}(c_t)_{i,j}\tilde{\phi}(x_{t,i})-\tilde{\phi}(x_{S,j})\big\Vert_{\tilde{k}}^2$.  
Here, $\tilde{k}$ and $\tilde{\phi}$ is defined in Section \ref{sec:settings}.
\end{prop}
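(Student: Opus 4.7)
The plan is a direct expansion of the matrix-valued squared norm, followed by bookkeeping on the $(j,j)$-entry. Set $v := Q_T\mathbf{K}_T Q_T^*\phi(x_{S-1}) = \sum_{t=0}^{T-1}\phi(x_t)c_t$. First I would use sesquilinearity of $\blacket{\cdot,\cdot}_k$ (conjugate-linear in the first slot, $\mat$-linear in the second) to expand
\begin{equation*}
\vert \phi(x_S)-v\vert_k^2 = k(x_S,x_S) - \sum_t k(x_S,x_t)c_t - \sum_t c_t^* k(x_t,x_S) + \sum_{s,t} c_s^* k(x_s,x_t) c_t,
\end{equation*}
using $\blacket{\phi(x),\phi(y)}_k=k(x,y)$.

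Next I would read off the $(j,j)$-entries term by term, invoking Lemma~\ref{lem:pdk} which identifies $[k(x_s,x_t)]_{i,i'}=\tilde{k}(x_{s,i},x_{t,i'})$. Carrying out the matrix products carefully, the four pieces become
\begin{align*}
[k(x_S,x_S)]_{j,j} &= \tilde k(x_{S,j},x_{S,j}),\\
[k(x_S,x_t)c_t]_{j,j} &= \sum_i \tilde k(x_{S,j},x_{t,i})(c_t)_{i,j},\\
[c_t^*k(x_t,x_S)]_{j,j} &= \sum_i \overline{(c_t)_{i,j}}\,\tilde k(x_{t,i},x_{S,j}),\\
[c_s^*k(x_s,x_t)c_t]_{j,j} &= \sum_{i,i'} \overline{(c_s)_{i,j}}\,\tilde k(x_{s,i},x_{t,i'})(c_t)_{i',j}.
\end{align*}

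Finally I would recognize each $\tilde{k}(\cdot,\cdot)$ factor as $\blacket{\tilde\phi(\cdot),\tilde\phi(\cdot)}_{\tilde k}$, pull the scalars $(c_t)_{i,j}$ inside the RKHS inner product (linear in the second slot, so no conjugation on the outgoing side; matching the $c_t^*$ terms by conjugate-linearity in the first slot), and collapse the four sums into a single squared RKHS-norm
\begin{equation*}
\bigg\Vert \sum_{t=0}^{T-1}\sum_{i=1}^{m}(c_t)_{i,j}\tilde\phi(x_{t,i})-\tilde\phi(x_{S,j})\bigg\Vert_{\tilde k}^2,
\end{equation*}
which is exactly the claim.

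The proof is essentially bookkeeping; the only mild obstacle is making sure the conjugation conventions align. In particular one must keep track that the $\mat$-valued inner product on $\modu_k$ is conjugate-linear in the first argument while the entries $[c_t^*]_{j,i}=\overline{(c_t)_{i,j}}$ supply precisely the conjugates needed to match the conjugate-linearity in the first slot of $\blacket{\cdot,\cdot}_{\tilde k}$. Once indices are lined up correctly, no nontrivial estimate is required.
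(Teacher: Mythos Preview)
Your proposal is correct and follows essentially the same approach as the paper: expand $\vert\phi(x_S)-v\vert_k^2$ via the $\mat$-valued inner product into the four terms $k(x_S,x_S)$, $k(x_S,x_t)c_t$, $c_t^*k(x_t,x_S)$, $c_s^*k(x_s,x_t)c_t$, extract the $(j,j)$-entry of each using the definition of $k$ in terms of $\tilde{k}$, and recognize the resulting scalar expression as the expansion of the claimed RKHS norm squared. The paper's proof is slightly more compressed (it packages the cross terms into a single quantity $\mu_j$), but the substance is identical.
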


\subsection{Modal Decomposition}\label{sec:extract}
We introduce a notion of eigenpairs of the estimated operator $\mathbf{K}_T$ and give a method for extracting relations invariant with respect to time.
This method is applicable to, for example, causal estimation of time-series data.
The notions of eigenvalues, eigenvectors, and diagonalization in a Hilbert $C^*$-module have been considered for orthonormal eigenvectors~\cite{kadison83,frank95}.
However, in our case, eigenvectors are not necessarily orthnormal.
Therefore, we extend their definition.
An eigenpair of $\mathbf{K}_T$ is a pair $(a,v)$ that satisfies $\mathbf{K}_Tv=va$.
Mathematically, by taking the non-commutativeness and rank deficiency into account, eigenpairs of a $\mat$-linear map on a Hilbert $C^*$-module $\modu$ are defined as follows:
\begin{defin}[Eigenpair]
Let $K$ be a $\mat$-linear map on $\modu$.
Let $(\mat)^{\times}$ be the set of invertible matrices.
We define an {\em eigenpair} of $K$ as an equivalence class of the quotient set $\{(a,v)\mid\ \lambda\in\mat,\ v\in\modu,\ a_{\mid\mcl{V}^{\perp}}=0,\ Kv=va\}/\sim$, where $\sim$ is defined as
\begin{equation*}
(a,v)\sim(b,u)\ \Leftrightarrow\ ^{\exists}c\in(\mat)^{\times},\ b=c^{-1}a c,\ u=vc.
\end{equation*}
\end{defin}

Although the mathematical definition is slightly complicated,
in practical situations, eigenpairs of $\mathbf{K}_T$ are easy to find as follows:\@
Let $\lambda_1,\ldots,\lambda_{mT}\in\mathbb{C}$ be the eigenvalues of $\mathbf{K}_T$  and $\mathrm{v}_1,\ldots,\mathrm{v}_{mT}\in\mathbb{C}^{mT}$ be the eigenvectors of $\mathbf{K}_T$ with respect to $\lambda_1,\ldots,\lambda_{mT}$.
Here, $\mathbf{K}_T$ is regarded as an $mT\times mT$ complex-valued matrix (Remark~\ref{rem:matrix}).
We set $\mathbf{v}_t:=[\mathrm{v}_{t},0,\ldots,0]\in(\mat)^T$ and 
\begin{equation*}
a_t:=\opn{diag}\{\lambda_{t},0,\dots,0\}\in\mat.
\end{equation*}
Then, we can see, for $t=1,\ldots,mT$, the pair $(a_t,\mathbf{v}_t)$ is a representative of an eigenpair.
Since the relation $\phi(x_s)=K^s\phi(x_0)$ holds for time $s$, we approximate 
$\phi(x_s)$ as $Q_T\mathbf{K}_T^sQ_T^*\phi(x_0)$, and apply the above eigenpairs for extracting time-invariant relations.
\begin{prop}\label{prop:eigsum}
Assume $[\mathrm{v}_{1},\ldots,\mathrm{v}_{mT}]\in\mathbb{C}^{mT\times mT}$ is invertible. 
Let $c_t\in\mat$ satisfy $Q_T^*\phi(x_0)=\sum_{t=1}^{mT}\mathbf{v}_tc_t$.
Then, 
$\vert Q_T\mathbf{K}_T^sQ_T^*\phi(x_0)\vert_k^2$
equals the following sequence:
\begin{align}
&\sum_{t,l=1}^{mT}c_t^*(a_t^*)^s\blacket{\mathbf{v}_t,\mathbf{v}_{l}}a_{l}^sc_{l}\label{eq:eigsum}.
\end{align}
Let $c_{\opn{inv}}:=\sum_{t\in\mcl{T}}c_t^*(a_t^*)^s\blacket{\mathbf{v}_t,\mathbf{v}_{t}}a_{t}^sc_t$, where $\mcl{T}:=\{t\mid\ \vert \lambda_t\vert=1\}$.
Then, $c_{\opn{inv}}$ is invariant with respect to $s$.
\end{prop}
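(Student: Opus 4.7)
The plan is to show first that $\mathbf{K}_T^s$ acts on each block eigenvector $\mathbf{v}_t$ by right multiplication by $a_t^s$, then to expand the $\mat$-valued absolute value by sesquilinearity, and finally to read off the invariance of $c_{\opn{inv}}$ from the block structure of $a_t$.

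I would first verify the identity $\mathbf{K}_T\mathbf{v}_t=\mathbf{v}_t a_t$. Since $\mathbf{v}_t=[\mathrm{v}_t,0,\ldots,0]\in(\mat)^T$ has only its first column nonzero, and $\mathbf{K}_T$ is viewed as an $mT\times mT$ complex matrix acting columnwise (Remark~\ref{rem:matrix}), the product equals $[\lambda_t\mathrm{v}_t,0,\ldots,0]$. Because $a_t=\opn{diag}\{\lambda_t,0,\ldots,0\}$ rescales exactly the first column and kills the others, this coincides with $\mathbf{v}_t a_t$, and iterating gives $\mathbf{K}_T^s\mathbf{v}_t=\mathbf{v}_t a_t^s$. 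Substituting $Q_T^*\phi(x_0)=\sum_t\mathbf{v}_t c_t$ then yields
\begin{equation*}
Q_T\mathbf{K}_T^sQ_T^*\phi(x_0)=\sum_{t=1}^{mT}Q_T\mathbf{v}_t a_t^s c_t.
\end{equation*}
Expanding $\vert\cdot\vert_k^2=\blacket{\cdot,\cdot}_k$ with the sesquilinearity of Definition~\ref{def:innerproduct} gives $\sum_{t,l}c_t^*(a_t^*)^s\blacket{Q_T\mathbf{v}_t,Q_T\mathbf{v}_l}_k a_l^s c_l$. Under linear independence of $\{\phi(x_t)\}$ and $\epsilon=0$, each normalization step in Proposition~\ref{prop:normalized_property} uses a full-rank $\blacket{\hat{q}_t,\hat{q}_t}_k$, so $\blacket{q_r,q_s}_k=\delta_{r,s}I_m$; consequently $Q_T^*Q_T$ is the identity on $(\mat)^T$ and each cross term reduces to the canonical $(\mat)^T$-inner product $\blacket{\mathbf{v}_t,\mathbf{v}_l}$, matching the claimed expression.

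For the invariance of $c_{\opn{inv}}$ under $s$, observe that $a_t^s=\opn{diag}\{\lambda_t^s,0,\ldots,0\}$, so for any $X\in\mat$ the matrix $(a_t^*)^s X a_t^s$ has all entries zero except the $(1,1)$ entry, which equals $\vert\lambda_t\vert^{2s}X_{1,1}$. When $t\in\mcl{T}$, $\vert\lambda_t\vert=1$ forces this factor to be $1$ for every $s$, so each diagonal summand $c_t^*(a_t^*)^s\blacket{\mathbf{v}_t,\mathbf{v}_t}a_t^s c_t$ is independent of $s$, and hence so is the finite sum $c_{\opn{inv}}$.

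The main obstacle I anticipate is the identification $\blacket{Q_T\mathbf{v}_t,Q_T\mathbf{v}_l}_k=\blacket{\mathbf{v}_t,\mathbf{v}_l}$, since an abstract normalized vector in a Hilbert $C^*$-module only satisfies $\blacket{q,q}=\blacket{q,q}^2$ (a nonzero idempotent) rather than $\blacket{q,q}=I_m$. Making this step rigorous requires tracking the Gram-Schmidt construction of Proposition~\ref{prop:gram-schmidt} under the standing hypotheses to conclude that $\hat{b}_t=(\blacket{\hat{q}_t,\hat{q}_t}_k)^{-1/2}$ is well defined and produces an ONS with $\blacket{q_t,q_t}_k=I_m$; everything else is routine expansion and block-matrix arithmetic.
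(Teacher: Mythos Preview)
Your proof is correct in spirit and reaches the same conclusion as the paper, but you take a slightly different route at the one step you yourself flag as delicate, and the paper's route is cleaner there.

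Both arguments use $\mathbf{K}_T\mathbf{v}_t=\mathbf{v}_ta_t$ and a sesquilinear expansion; both handle the invariance of $c_{\opn{inv}}$ the same way. The divergence is in passing from $\blacket{Q_T\,\cdot\,,Q_T\,\cdot\,}_k$ to $\blacket{\,\cdot\,,\,\cdot\,}_{(\mat)^T}$. You expand in eigenvectors first and then need $\blacket{Q_T\mathbf{v}_t,Q_T\mathbf{v}_l}_k=\blacket{\mathbf{v}_t,\mathbf{v}_l}$, which forces you to argue that $Q_T^*Q_T=I$, i.e.\ that every $\blacket{q_t,q_t}_k=I_m$. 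That is true under $\mat$-linear independence of $\{\phi(x_t)\}$ and $\epsilon=0$, but those hypotheses are not part of the proposition, and you correctly identify this as the weak point.

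The paper avoids this entirely. It first uses the identity $Q_T^*Q_T\mathbf{K}_T=Q_T^*Q_TQ_T^*KQ_T=\mathbf{K}_T$, which follows from $Q_TQ_T^*Q_T=Q_T$ (Corollary~\ref{cor:inprodequiv}: $q\blacket{q,q}=q$ for any normalized $q$) and holds regardless of whether $\blacket{q_t,q_t}_k$ has full rank. Since $\sum_t\mathbf{v}_tc_t=Q_T^*\phi(x_0)$ lies in the range of $Q_T^*$, and $\mathbf{K}_T^s$ applied to it stays in a set on which $Q_T^*Q_T$ acts as the identity, one gets
\[
\bblacket{Q_T\mathbf{K}_T^s\!\sum_t\mathbf{v}_tc_t,\,Q_T\mathbf{K}_T^s\!\sum_t\mathbf{v}_tc_t}_k
=\bblacket{\mathbf{K}_T^s\!\sum_t\mathbf{v}_tc_t,\,\mathbf{K}_T^s\!\sum_t\mathbf{v}_tc_t}_{(\mat)^T}
\]
directly, and only then substitutes $\mathbf{K}_T^s\mathbf{v}_t=\mathbf{v}_ta_t^s$. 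So the obstacle you anticipate simply does not arise: you can replace your appeal to $Q_T^*Q_T=I$ by the weaker, hypothesis-free identity $Q_T^*Q_T\mathbf{K}_T=\mathbf{K}_T$ (together with $Q_T^*Q_TQ_T^*=Q_T^*$ for the case $s=0$), and the rest of your argument goes through unchanged.
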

If the $(i,j)$ element of $c_{\opn{inv}}$ is large, the $i$-th and $j$-th elements of $x_s$ are similar for arbitrary $s$.
This is because the $(i,j)$ element of $\vert\phi(x_s)\vert_k^2=\blacket{\phi(x_s),\phi(x_s)}_k=k(x_s,x_s)$ represents the similarity between the $i$-th and $j$-th elements of $x_s$.
As a result, we can extract the information of the similarities that are invariant with respect to time.    

\section{Experimental Evaluations}
\label{sec:results}
Here, we show some empirical results by our methods with RKHMs. 
In Section~\ref{sec:exp_tradeoff}, we first empirically investigate the trade-off between the theoretical accuracy and numerical stability described in Remark~\ref{rem:trade_off}. Then, we empirically illustrate the behavior of PCA with RKHMs in Section~\ref{sec:exp_pca}, and the analysis of time-series data with Perron-Frobenius operators in RKHMs in Section~\ref{sec:exp_PF}. 
All the experiments were implemented with Python 3.7, and we used the Laplacian kernel, $\tilde{k}(x,y)=e^{-\Vert x-y\Vert_1}$ for $x,y\in\mathcal{X}$, for constructing $\mat$-valued positive definite kernel $k$.

\subsection{Trade-off between the theoretical accuracy and numerical stability}
\label{sec:exp_tradeoff}
We evaluated the trade-off in Remark~\ref{rem:trade_off} on the basis of $\hat{a}_{T,S}$ defined in Eq.~\eqref{eq:error_rkhm}.
We used $100$ synthetic sequences randomly generated by the following interacting dynamical system with $\mcl{X}\subseteq\mathbb{R}$ for $i=1,\ldots,m$:
\begin{equation}
x_{t+1,i}=1+\frac1n\sum_{j=1}^n(x_{t,i}-x_{t,j})+\xi_{t,i},\label{eq:intercting_dyn}
\end{equation}
where $x_0=[0,\ldots,m]/m$, $m=50$ and $\xi_{t,i}$ is a random number generated from the Gaussian distribution with mean $0$ and standard deviation $0.01$ for $i=1,\ldots,50$ and $t=1,\ldots,T$.
Figure~\ref{fig:tradeoff} shows the averaged value of the criterion 
\begin{equation}
\Vert \hat{a}_{T+1,S}-\hat{a}_{T,S}\Vert_{\mat},\label{eq:a_conv}
\end{equation}
for $T=0,\ldots,19$, $S=30$, and $\epsilon^2=10^{-8},10^{-5},10^{-2}$. In this evaluation, we used the 32 bit floating-point arithmetic, instead of the 64 bit one, to emphasize the numerical stability.
Theoretically, if $\epsilon^2=0$ and Perron-Frobenius operator $K$ is bounded, then the value~\eqref{eq:a_conv} converges to $0$.
And, according to Proposition~\ref{prop:normalized_property}, as $\epsilon$ becomes smaller, the theoretical accuracy is improved and the value~\eqref{eq:a_conv} approaches to $0$ as $T$ becomes large.
On the other hand, as $\epsilon$ becomes smaller, computations can be numerically unstable.
This trade-off is apparent in Figure~\ref{fig:tradeoff}.
\begin{figure}[t]
\begin{center}
\includegraphics[width=.7\linewidth,height=.48\linewidth]{./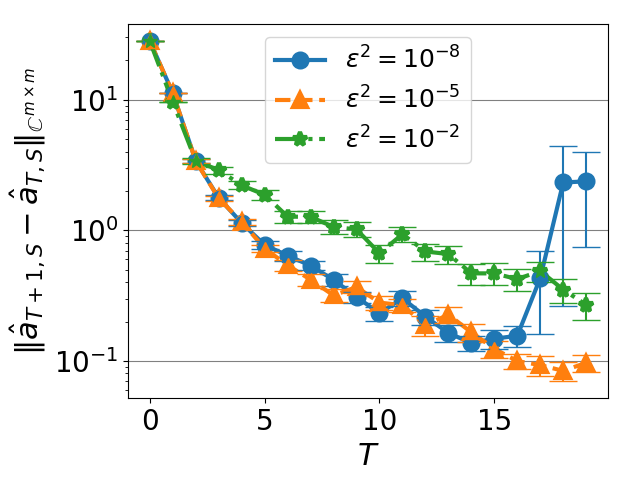}
 \caption{Convergence of $\hat{a}_{T,S}$ along $T$ for different $\epsilon^2$. 
Error bars correspond to the standard errors.
}\label{fig:tradeoff}
\end{center}
\end{figure}
\subsection{Kernel PCA with RKHMs}\label{sec:exp_pca}
{\bf Synthetic Data:\ }
We first randomly generated the following four kinds of sample-sets, each of which has 20 samples composed of three elements in $\mathbb{R}^2$:
\begin{align*}
&([0,0], [0,0], [0,0])+\xi_{t,1},\ 
([1,1], [0,0], [0,0])+\xi_{t,2},\\
&([0,0], [1,1], [0,0])+\xi_{t,3},\ 
([0,0], [0,0], [1,1])+\xi_{t,4},
\end{align*}
where $\xi_{t,j}\in (\mathbb{R}^2)^3$ is random numbers generated from the Gaussian distribution with mean $0$ and standard deviation $0.1$ for $j=1,\ldots,4$ and $t=1,\ldots,20$.
Thus, in this case, the number of elements of each sample $m$ is $3$, and the number of samples $n$ is $80$.
We call the above four sample-sets data1, data2, data3, and data4, respectively.

We applied our kernel PCA with RKHMs to this dataset and computed the matrix-valued coefficients of the first and second principal components (PCs) $\blacket{p_s,\phi(x_t)}_k\ (s=1,2)$ for each sample.
The graphs in Figure~\ref{fig:pca_syn} show the embedding of the components $\blacket{p_s,\phi(x_t)}_k\ (s=1,2)$ with 
the standard PCA in $\mathbb{R}^m$ and the norm in $\mat$ for $m=3$, respectively.
Concerning the former method, since $\blacket{p_s,\phi(x_t)}_k$ is represented as $\blacket{1/\sqrt{\sigma_s}[\mathrm{v}_s,0,\ldots,0],W^*\phi(x_t)}_{(\mat)^n}$, whose rows are $1/\sqrt{\sigma_s}\mathrm{v}_s^*W^*\phi(x_t)$ for the first one and ${0}$ otherwise, we computed the real-valued coefficients of the first PC of $1/\sqrt{\sigma_s}\mathrm{v}_s^*W^*\phi(x_t)$ for $s=1,2$ and $t=1,\ldots,n$ with PCA in $\mathbb{R}^m$. 
And, concerning the latter one, $\Vert \blacket{p_s,\phi(x_t)}_k\Vert_{\mat}$ was computed for $s=1,2$ and $t=1,\ldots,n$.
As can be seen, whereas all the four sample-sets are separated with the PCA in $\mathbb{R}^m$ (left), data2, data3 and data4 are plotted in one cluster with the norm in $\mat$ (right).
This is because the norm of a matrix is invariant with respect to permutations of columns or rows in the matrix.
This implies that, unlike kernel PCA with RKHSs, our kernel PCA with RKHMs can extract not only the similarities between all combinations of pairs of elements but also the invariance with respect to those permutations.

{\bf Human Sckeleton Data:\ }
Next, we applied our kernel PCA with RKHMs to real-world human skeleton data:
SBU-Kinect-Interaction dataset ver.\@ 2.0~\cite{yun12}, 
which includes human skeleton data of 30 points in three dimensions depicting two-person interactions.
We used the middle frame of each video sequence for three kinds of human activities: ``Hug,'' ``Shake hands (S.H.),'' and ``Punch.'' 
The data include 13 pairs of people doing all three activities.
That, $m=3$ and $n=39$ in this case.

The results are shown in Figure~\ref{fig:pca_real}.
Here, for comparison, we also applied the standard kernel PCA with an RKHS associated with the Laplacian kernel on $\mathbb{R}^{90}$.
Our kernel PCA with the RKHM seems to separate the activities more clearly than the kernel PCA with the RKHS.
This would be because, whereas the activities are recognized as three dimensional spatial data in the case of the RKHM, they are recognized as the combination of one dimensional data in the case of the RKHS.
\begin{figure}[t]
\begin{center}
\includegraphics[width=.48\linewidth,height=.365\linewidth]{./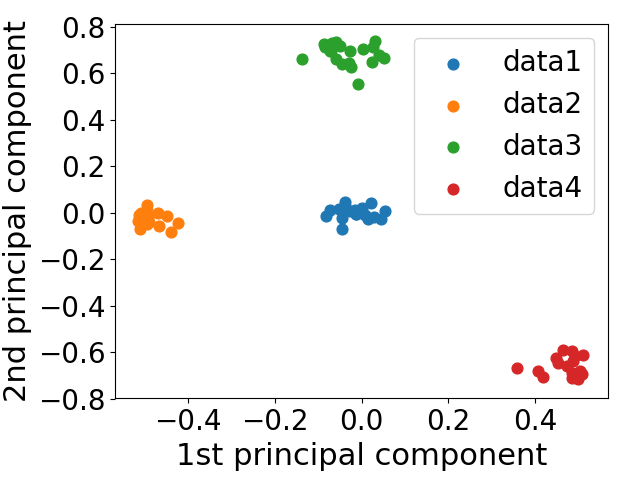} 
\includegraphics[width=.48\linewidth,height=.365\linewidth]{./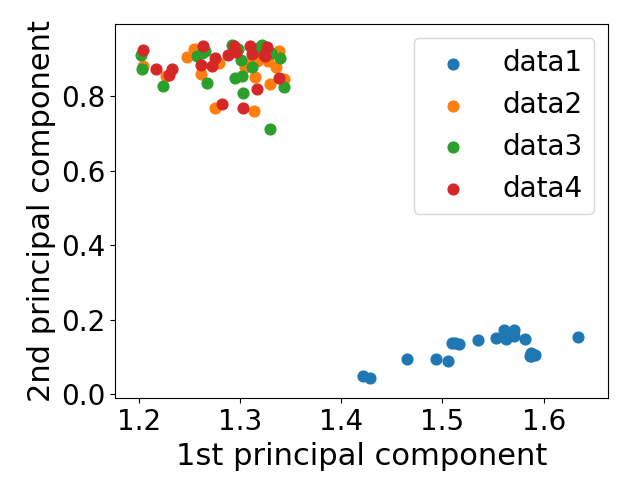} 
 \caption{1st and 2nd PCs of our kernel PCA with RKHMs for synthetic data through PCA in $\mathbb{R}^m$\! (left) and norm in $\mat$\! (right).}\label{fig:pca_syn}
\end{center}
\end{figure}
\begin{figure}[t]
\begin{center}
\includegraphics[width=.48\linewidth,height=.365\linewidth]{./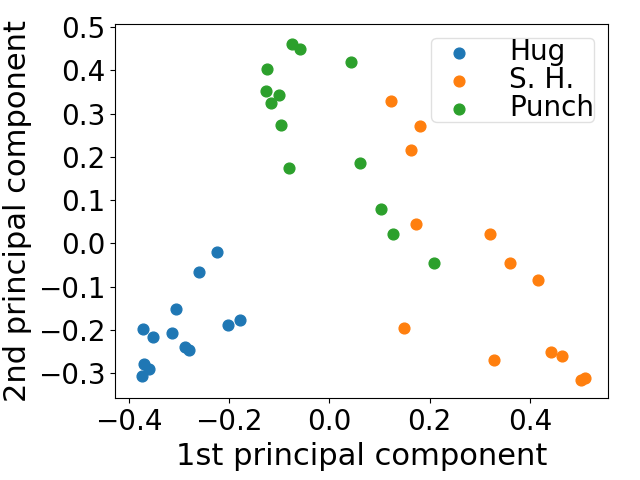} 
\includegraphics[width=.48\linewidth,height=.365\linewidth]{./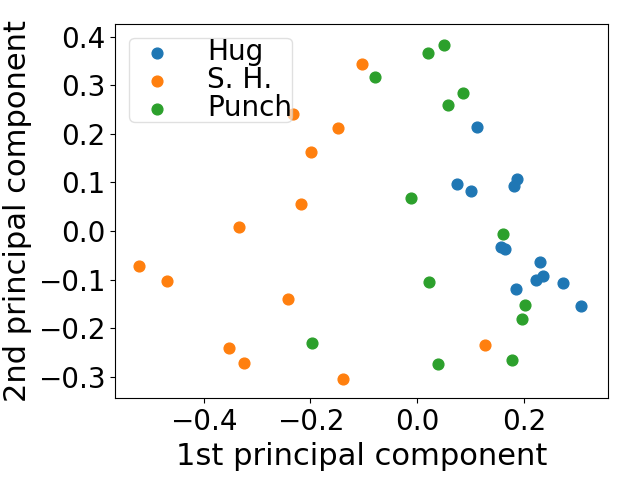} 
 \caption{1st and 2nd PCs of our kernel PCA with RKHMs (left) and of the standard kernel PCA with RKHSs (right) for real-world human skeleton data.}\label{fig:pca_real}
\end{center}
\end{figure}
\begin{figure}[t]
\begin{minipage}{0.475\linewidth}
\begin{center}
\includegraphics[width=.98\linewidth,height=.67\linewidth]{./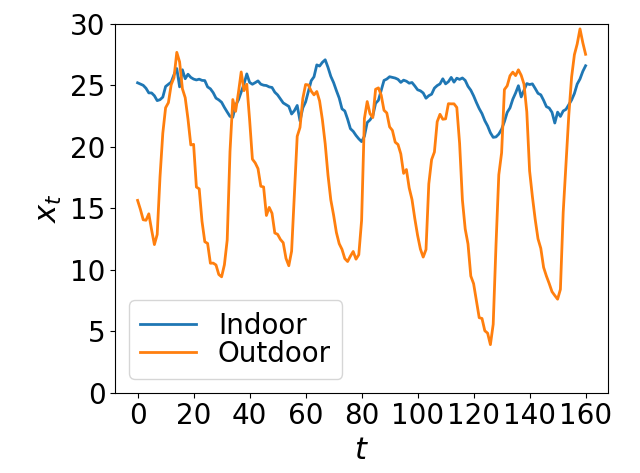}
 \caption{Example of a pair of indoor and outdoor temperature sequences.}\label{fig:data}
\end{center}
\end{minipage}\quad
\begin{minipage}{0.475\linewidth}
\begin{center}
\includegraphics[width=\linewidth,height=.67\linewidth]{./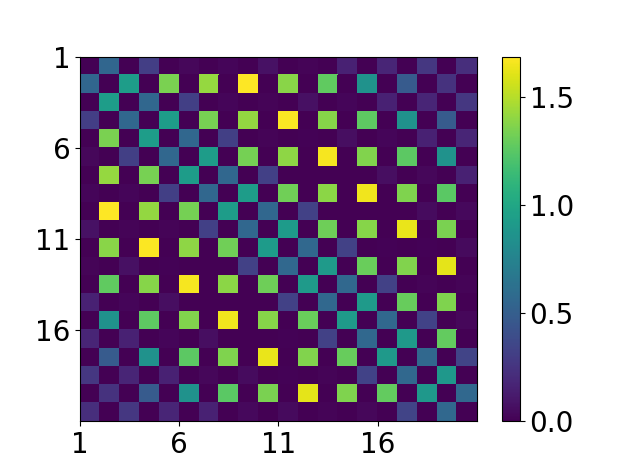}
 \caption{Color map representing the values of elements of estimated matrix $c_{\opn{inv}}$.}\label{fig:causal}
\end{center}
\end{minipage}%
\end{figure}
\subsection{Analysis of dynamical systems with RKHMs}\label{sec:exp_PF}
Finally, we applied our method for extracting time-invariant relations in interacting dynamical systems by using real-world time-series data from a dataset with cause-effect pairs~\cite{mooij16}.
Two time-series data about indoor and outdoor temperatures (pairs0048) at time $0\sim 160$ were used.
The original sequences are shown in Figure~\ref{fig:data}.
We can see that the indoor temperature becomes high $2\sim 5$ time steps after the outdoor one does.
For the mathematical treatments mentioned in Section~\ref{sec:PF_rkhm}
, we added small noises to the original data by using random numbers generated by the Gaussian distribution with mean $0$ and standard deviation $0.2$. 
After that, we normalized the data so that the mean is $0$ and the standard deviations is $1$.
We set $x_t=[y_t,z_t,y_{t+1},z_{t+1},\ldots, y_{t+9},z_{t+9}]\in\mathbb{R}^{20}$, where $y_t$ and $z_t$ are the indoor and outdoor temperatures at $t$, respectively.
Then, we computed the estimation of the Perron-Frobenius operator $K$ and the time invariant term $c_{\opn{inv}}$ in Eq.~\eqref{eq:eigsum}.
In this case, $m=20$, and we set $T$ as $150$.

The values of elements of matrix $c_{\opn{inv}}$ are shown in Figure~\ref{fig:causal}.
We also add figures in Appendix~\ref{ap:figure}.
A point at $(i,j)$ represents the value of $(i,j)$-element of $c_{\opn{inv}}$.
We can see the values of $(2i-1,2j)$ and $(2j,2i-1)$ for $i=j+2\sim j+5$ 
are large, which implies that $y_{j+2}\sim y_{j+5}$ and $z_{i}$ are similar.

\section{Conclusions}
\label{sec:concl}
In this paper, we proposed a new data analysis framework with RKHM.
We showed the theoretical validity for constructing orthonormal systems in RKHMs. 
Then, we derived concrete procedures for orthonormalization in RKHMs,
and applied those to generalize with RKHM kernel principal component analysis and the analysis of dynamical systems with Perron-Frobenius operators.
This enables us to access and extract the rich information of structured data by introducing a positive definite kernel that describes similarities between all combinations of pairs of elements of data.
Numerical results with synthetic and real-world data also show the advantage of our methods with RKHMs.


\newpage
\quad
\newpage
\appendix
\allowdisplaybreaks[4]
We explain the notations used in this paper in Section~\ref{ap:notation}.
Then, we briefly review existing methods in Sections~\ref{ap:review_rkhs}, \ref{ap:vv-RKHS_review}, \ref{ap:pca_review}, and \ref{sec:PF_review} and detailed statements and definitions about RKHMs with their proofs in Section~\ref{ap:rkhm}.
In addition, we explain in detail the mathematical treatments for defining Perron-Frobenius operators discussed in Section~\ref{sec:PF_rkhm} 
and show figures about the numerical results in Section~\ref{ap:figure}.
Finally, we provide proofs of theorems, propositions, corollaries, and lemmas in Section~\ref{ap:proof} and pseudo-codes in Section~\ref{ap:psudocode}.

\section{Notations}\label{ap:notation}
In this section, we describe notations used in this paper.
Small letters denote $\alg$-valued coefficients (often by $a,b,c,d$) or vectors in $\modu$ (often by $p,q,u,v,w$).
Small Greek letters denote $\mathbb{C}$-valued coefficients.
Calligraphic capital letters denote sets.
Bold capital letters denote finite dimentional $\alg$-linear map and bold small letters denote vectors in $\alg^n$ for $n\in\mathbb{N}$ (a finite dimentional Hilbert $C^*$-module).
Small Roman letters denote vectors in $\mathbb{C}^n$ (a finite dimensional vector space).
Also, we use $\sim$ for objects related to RKHSs.

The typical notations in this paper are listed in Table~\ref{tab1} at the last page of this document.

\begin{table}[t]
\caption{Notation table}\label{tab1}\vspace{.3cm}
\renewcommand{\arraystretch}{1.25}
 \begin{tabularx}{\linewidth}{|c|X|}
\hline
$\mat$  & A set of all complex-valued $m\times m$ matrix\\
$\alg$ & A $C^*$-algebra\\
$\Vert\cdot\Vert_{\alg}$ & The norm in $\alg$ (For $\alg=\mat$, $\Vert c\Vert_{\mat}:=\sup_{\Vert\mathrm{d}\Vert_2=1}\Vert c\mathrm{d}\Vert_2$)\\
$\modu$ & A (right) $\alg$-module\\
$\vert\cdot\vert$ & The $\alg$-valued absolute value in $\modu$ defined as a positive element $\vert u\vert$ such that $\vert u\vert^2=\blacket{u,u}$\\
$\Vert\cdot\Vert$ & The norm in $\modu$ defined as $\Vert u\Vert:=\Vert\vert \blacket{u,u}\vert\Vert_{\alg}$\\
$\mcl{X}$ & A topological space\\
$m$ & A natural number that represents the number of elements of data\\
$n$, $T$ & Natural numbers that represent the amount of observed data used for the estimation ($T$ for time-series data)\\
$\mcl{Y}$ & A set of observed data $\mcl{Y}:=\{x_0,x_1,\ldots\}\subseteq \mcl{X}^m$ (a countable and dense subset of $\mcl{X}^m$ whose elements are completely different)\\
$k$ & An $\alg$-valued positive definite kernel\\
$\phi$ & The feature map endowed with $k$\\
$\modu_k$ & The RKHM associated with $k$\\
$\alg^{\mcl{Y}}$ & The set of all functions from $\mcl{Y}$ to $\alg$\\
$\vert\cdot\vert_k$ & The $\alg$-valued absolute value in $\modu_k$ \\
$\Vert\cdot\Vert_k$ & The norm in $\modu_k$ \\
$\tilde{k}$ & A complex-valued positive definite kernel\\
$\tilde{\phi}$ & The feature map endowed with $\tilde{k}$\\
$\hil_{\tilde{k}}$ & The RKHS associated with $\tilde{k}$\\
$\epsilon$ & The parameter that determines the theoretical accuracy and numerical stability\\
$W$ & The linear operator from $\alg^n$ to $\modu_k$ composed of observed data  $\phi(x_1),\ldots,\phi(x_n)$\\
$\mathbf{G}$ & A Gram matrix\\
$\{q_t\}_{t=1}^{\infty}$ & An orthonormal system in $\modu$ or $\modu_k$\\
$Q$ & The linear operator from $\alg^n$ to $\modu_k$ composed of $q_1,\ldots,q_n$\\
$\mathbf{R}$ & The $n\times n$ $\alg$-valued matrix that satisfies $\Vert W-Q\mathbf{R}\Vert\le\epsilon$\\
$\mathbf{R}_{\opn{inv}}$ & The $n\times n$ $\alg$-valued matrix that satisfies $Q=W\mathbf{R}_{\opn{inv}}$\\
$p_s$ & The $s$-th principal axis generated by kernel PCA with an RKHM\\ 
$K$ & A Perron-Frobenius operator on $\modu_k$\\
$\mathbf{K}_T$ & The estimation of $K$ with observed data $\{x_0,\ldots,x_{T-1}\}$\\
$\hat{a}_{T,S}$ & The abnormality at $S$ computed with $\mathbf{K}_T$\\
\hline
 \end{tabularx}
\renewcommand{\arraystretch}{1} 
\end{table}

\section{RKHS}\label{ap:review_rkhs}
In this section, we review the theory of RKHSs.
RKHSs are Hilbert spaces to extract nonlinearity or higher-order moments of data~\cite{scholkopf01,saitoh16}.

We begin with a positive difinite kernel.
Let $\mcl{Y}$ be a non-empty set for data, and $\tilde{k}$ be a positive definite kernel, which is defined as follows:
\begin{defin}\label{def:pdk_rkhs}
A map $\tilde{k}:\mcl{Y}\times \mcl{Y}\to\mathbb{C}$ is called a {\em positive definite kernel} if it satisfies the following conditions:

1. $\tilde{k}(x,y)=\overline{\tilde{k}(y,x)}$\; for $x,y\in\mcl{Y}$\\
2. $\sum_{s,t=1}^{n}\overline{c_s}c_t\tilde{k}(x_s,x_t)\ge 0$\; for $n\in\mathbb{N}$, $c_1,\ldots,c_{n}\in\mathbb{C}$, $x_1,\ldots,x_{n}\in\mcl{Y}$.
\end{defin}
Let $\tilde{\phi}:\mcl{Y}\to\mathbb{C}^{\mcl{Y}}$ be a map defined as $\tilde{\phi}(x)=\tilde{k}(\cdot,x)$.
With $\tilde{\phi}$, the following space as the subset of $\mathbb{C}^{\mcl{Y}}$ is constructed: 
\begin{equation*}
\hil_{\tilde{k},0}:=\bigg\{\sum_{t=1}^{n}c_t\tilde{\phi}(x_t)\bigg|\ n\in\mathbb{N},\ c_t\in\mathbb{C},\ x_t\in\mcl{Y}\bigg\}.
\end{equation*}
Then, a map $\blacket{\cdot,\cdot}_{\tilde{k}}:\hil_{\tilde{k},0}\times \hil_{\tilde{k},0}\to\mathbb{C}$ is defined as follows:
\begin{equation*}
\Bblacket{\sum_{s=1}^{n}c_s\tilde{\phi}(x_s),\sum_{t=1}^{l}d_t\tilde{\phi}(y_t)}_{\tilde{k}}:=\sum_{s=1}^{n}\sum_{t=1}^{l}\overline{c_s}d_t\tilde{k}(x_s,y_t).
\end{equation*}
By the properties in Definition~\ref{def:pdk_rkhs} of $\tilde{k}$, $\blacket{\cdot,\cdot}_{\tilde{k}}$ is well-defined, satisfies the axiom of inner products, and has the reproducing property, that is,
\begin{equation*}
\sblacket{\tilde{\phi}(x),v}_{\tilde{k}}=v(x),
\end{equation*}
for $v\in\hil_{\tilde{k},0}$ and $x\in\mcl{Y}$.

The completion of $\hil_{\tilde{k},0}$ 
is called {\em RKHS} associated with $\tilde{k}$ and denoted as $\hil_{\tilde{k}}$.
It can be shown that $\blacket{\cdot,\cdot}_{\tilde{k}}$ is extended continuously to $\hil_{\tilde{k}}$ and the map $\hil_{\tilde{k}}\ni v\mapsto (x\mapsto\sblacket{\tilde{\phi}(x),v}_{\tilde{k}})\in\mathbb{C}^{\mcl{Y}}$ is injective.
Thus, $\hil_{\tilde{k}}$ is regarded to be the subset of $\mathbb{C}^{\mcl{Y}}$ and has the reproducing property.
Also, by the Moore-Aronszajn theorem, $\hil_{\tilde{k}}$ is determined uniquely.

$\tilde{\phi}$ maps data into $\hil_{\tilde{k}}$, whose dimension is generally higher (often infinite dimensional) than that of $\mcl{Y}$, and is called the {\em feature map}.
Since the dimension of $\hil_{\tilde{k}}$ is higher than that of $\mcl{Y}$, complicated behaviors of data in $\mcl{Y}$ are often transformed into simple ones in $\hil_{\tilde{k}}$~\cite{scholkopf01}.

\section{vv-RKHS}\label{ap:vv-RKHS_review}
In this section, we review the theory of vv-RKHSs.

Similar to the case of RKHSs, we begin with a positive definite kernel.
Let $\mcl{Y}$ be a non-empty set for data, $\mcl{W}$ be a Hilbert space equipped with an inner product $\blacket{\cdot,\cdot}_{\mcl{W}}$,
and $\mcl{L}(\mcl{W})$ be the space of bounded linear operators on $\mcl{W}$.
In addition, $k$ be an operator valued positive definite kernel, which is defined as follows:
\begin{defin}\label{def:pdk_vv-rkhs}
A map map $k:\mcl{Y}\times \mcl{Y}\to\mcl{L}(\mcl{W})$ is called an {\em operator valued positive definite kernel} if it satisfies the following conditions:\vspace{.2cm}\\
1. $k(x,y)=k(y,x)^*$\; for $x,y\in\mcl{Y}$\\
2. $\sum_{s,t=1}^{n}\blacket{v_s,k(x_s,x_t)v_t}_{\mcl{W}}\ge 0$\; for $n\in\mathbb{N}$, $v_1,\ldots,v_{n}\in\mcl{W}$, $x_1,\ldots,x_{n}\in\mcl{Y}$.
\end{defin}
For $v\in\mcl{W}$, let $\phi_v:\mcl{Y}\to\mcl{W}^{\mcl{Y}}$ be a map defined as $\phi_v(x)=k(\cdot,x)v$.
With $\phi$, the following space as the subset of $\mcl{W}^{\mcl{Y}}$ is constructed: 
\begin{equation*}
\hil_{k,0}^{\opn{v}}:=\bigg\{\sum_{t=0}^n\phi_{v_t}(x_t)\bigg| n\in\mathbb{N},\ v_t\in\mcl{W},\ x_t\in\mcl{Y},\bigg\}.
\end{equation*}
Then, a map $\blacket{\cdot,\cdot}_{k,{\opn{v}}}:\hil_{k,0}^{\opn{v}}\times \hil_{k,0}^{\opn{v}}\to\mathbb{C}$ is defined as follows:
\begin{align*}
&\Bblacket{\sum_{s=1}^n\phi_{v_t}(x_t),\sum_{t=0}^l\phi_{w_t}(y_t)}_{k,{\opn{v}}}\\
&\qquad:=\sum_{s=1}^n\sum_{t=1}^l\blacket{v_s,k(x_s,y_t)w_t}_{\mcl{W}}.
\end{align*}
By the properties in Definition~\ref{def:pdk_vv-rkhs} of $k$, $\blacket{\cdot,\cdot}_{k,{\opn{v}}}$ is well-defined, satisfies the axiom of inner products and has the reproducing property, that is,
\begin{equation*}
\blacket{\phi_v(x),u}_{k,{\opn{v}}}=\blacket{v,u(x)}_{\mcl{W}},
\end{equation*}
for $u\in\hil_{k,0}^{{\opn{v}}}$ and $x\in\mcl{Y}$.

The completion of $\hil_{k,0}^{\opn{v}}$ 
is called {\em vv-RKHS} associated with $k$ and denoted as $\hil_{k,{\opn{v}}}$.
Note that since the inner product in $\hil_k^{\opn{v}}$ is defined with the complex-valued inner product in $\mcl{W}$, it is complex-valued.

\section{Statements and definitions about RKHMs and their proofs}\label{ap:rkhm}
In this section, we provide the precise statements and definitions about RKHMs introduced in Section~\ref{sec:bg} and show their proofs.
\begin{defin}[$C^*$-algebra]~\label{def:c*_algebra}
A set $\mcl{A}$ is called a {\em $C^*$-algebra} if it satisfies the following conditions:

1. $\mcl{A}$ is an algebra over $\mathbb{C}$, and there exists a bijection $(\cdot)^*:\mcl{A}\to\mcl{A}$ that satisfies the following conditions for $\lambda,\mu\in\mathbb{C}$ and $c,d\in\mcl{A}$:

\leftskip=10pt
$\bullet$ $(\lambda c+\mu d)^*=\overline{\lambda}c^*+\overline{\mu}d^*$\\
\ $\bullet$ $(cd)^*=d^*c^*$\\
\ $\bullet$ $(c^*)^*=c$

\leftskip=0pt
2. $\mcl{A}$ is a norm space with $\Vert\cdot\Vert_{\alg}$, and for $c,d\in\mcl{A}$, $\Vert cd\Vert_{\alg}\le\Vert c\Vert_{\alg}\Vert d\Vert_{\alg}$ holds.
In addition, $\mcl{A}$ is complete with respect to $\Vert\cdot\Vert_{\alg}$.

3. For $c\in\mcl{A}$, $\Vert c^*c\Vert_{\alg}=\Vert c\Vert_{\alg}^2$ holds.
\end{defin}

\begin{defin}[Positive]~\label{def:positive}
$c\in\mcl{A}$ is called {\em positive} if there exists $d\in\mcl{A}$ such that $c=d^*d$ holds.
For a positive element $c\in\mcl{A}$, we denote $c\ge 0$.
\end{defin}

\begin{defin}[(Right) multiplication]\label{def:multiplication}
Let $\modu$ be an abelian group with operation $+$.
For $c,d\in\alg$ and $u,v\in\modu$, if an operation $\cdot:\modu\times\alg\to\modu$ satisfies

1. $(u+v)\cdot c=u\cdot c+v\cdot c$\\
2. $u\cdot (c+d)=u\cdot c+u\cdot d$\\
3. $u\cdot (cd)=(u\cdot d)\cdot c$\\
4. $u\cdot 1_{\alg}=u$,

where $1_{\alg}$ is the multiplicative identity of $\alg$,
then, $\cdot$ is called (right) {\em $\alg$-multiplication}.
The multiplication $u\cdot c$ is usually denoted as $uc$.
\end{defin}

\begin{remark}
For practical applications of the theory of RKHSs, considering column vectors rather than row vectors is standard for representing coefficients.
Column vectors act on the right.
Therefore, we consider right multiplications for making algorithms with RKHMs  (with $\alg$-multiplications) compatible with those with RKHSs.
\end{remark}

\begin{defin}[$C^*$-module]\label{def:c*module}
Let $\modu$ be an abelian group with operation $+$.
If $\modu$ has the structure of a (right) $\alg$-multiplication, $\modu$ is called a (right) {\em $C^*$-module} over $\alg$.
\end{defin}
 
\begin{defin}[Hilbert $C^*$-module]\label{def:hil_c*module}
Let $\modu$ be a (right) $C^*$-module over $\alg$ which is equipped with an $\alg$-valued inner product defined in Definition~\ref{def:innerproduct}.
If $\modu$ is complete with respect to the norm $\Vert \cdot\Vert$, it is called a {\em Hilbert $C^*$-module} over $\alg$.
\end{defin}

\begin{lemma}[Cauchy-Schwarz inequality~\cite{lance95}]\label{lem:c-s}
For $u,v\in\modu$, the following inequality holds:
\begin{equation*}
\vert\blacket{u,v}\vert^2\le\Vert u\Vert^2\blacket{v,v}.
\end{equation*}
\end{lemma}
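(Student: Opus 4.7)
The plan is to adapt the classical Hilbert space argument, which considers the non-negativity of $\blacket{uc - v, uc - v}$ for a well-chosen element $c$, to the $C^*$-module setting. The subtlety is that we cannot form inverses in $\alg$ as freely as in $\mathbb{C}$, so a preliminary step is to dominate an $\alg$-valued quantity by a scalar multiple.

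First I would record the operator-theoretic fact that for any $a \ge 0$ in $\alg$ and any $c \in \alg$,
$$c^*ac \le \Vert a\Vert_{\alg}\, c^*c,$$
which follows because $\Vert a\Vert_{\alg}\cdot 1 - a \ge 0$ (in the unitization if $\alg$ is non-unital), and sandwiching a positive element between $c^*$ and $c$ preserves positivity. Second, I would expand
$$0 \le \blacket{uc - v, uc - v} = c^*\blacket{u,u}c - c^*\blacket{u,v} - \blacket{v,u}c + \blacket{v,v},$$
using the sesquilinearity axioms of Definition~\ref{def:innerproduct} (together with $\blacket{uc,w} = c^*\blacket{u,w}$, which follows by combining axioms 1 and 2). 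Applying the preliminary inequality with $a = \blacket{u,u}$ and noting $\Vert\blacket{u,u}\Vert_{\alg} = \Vert u\Vert^2$ by definition of the norm on $\modu$ yields
$$0 \le \Vert u\Vert^2\, c^*c - c^*\blacket{u,v} - \blacket{v,u}c + \blacket{v,v}.$$

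The key maneuver is then to substitute $c = \lambda\blacket{u,v}$ for a real scalar $\lambda \ge 0$. With this choice, all three non-$\blacket{v,v}$ terms collapse to scalar multiples of the positive element $\vert\blacket{u,v}\vert^2 = \blacket{v,u}\blacket{u,v}$, and the inequality reduces to the one-real-parameter family
$$0 \le \bigl(\Vert u\Vert^2\lambda^2 - 2\lambda\bigr)\vert\blacket{u,v}\vert^2 + \blacket{v,v}.$$
Assuming $u \ne 0$, substituting the minimizing value $\lambda = 1/\Vert u\Vert^2$ of the scalar coefficient yields exactly $\vert\blacket{u,v}\vert^2 \le \Vert u\Vert^2\blacket{v,v}$; the case $u = 0$ is immediate since both sides vanish by axiom 3 of Definition~\ref{def:innerproduct}.

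The main obstacle is the choice of $c$: the naive scalar-case choice $c = \blacket{u,u}^{-1}\blacket{u,v}$ is unavailable because $\blacket{u,u}$ need not be invertible in $\alg$. Replacing $\blacket{u,u}$ by the scalar upper bound $\Vert u\Vert^2$ via the preliminary operator inequality sidesteps this entirely, after which the remaining optimization uses only a real parameter and a standard scalar quadratic analysis, and the non-commutativity of $\alg$ never interferes.
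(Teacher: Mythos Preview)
Your argument is correct and is essentially the standard proof (as in Lance, whom the paper cites); the paper does not supply its own proof of this lemma but only references Lance, so there is nothing to compare against beyond noting that your approach matches the classical one.

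One small remark: in the $u=0$ case you invoke axiom~3 of Definition~\ref{def:innerproduct}, but in fact the vanishing of both sides follows already from the linearity axioms~1 and~2 (since $\blacket{0,v}=\blacket{v,0}^*=0$). This is worth getting right because the paper explicitly notes in Remark~\ref{rem:c-s} that the definiteness part of axiom~3 is \emph{not} needed for Cauchy--Schwarz, and it later relies on this fact (in the proof of Proposition~\ref{prop:inner_product}) to apply the inequality before definiteness has been established. Your main argument respects this: the only use of axiom~3 is the positivity $\blacket{w,w}\ge 0$, not the implication $\blacket{w,w}=0\Rightarrow w=0$.
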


\begin{remark}\label{rem:c-s}
The proof of Cauchy-Shwarz inequality only requires properties 1 and 2 about $\alg$-valued inner products in Definition~\ref{def:innerproduct}.
\end{remark}

\begin{lemma}\label{lem:pdk_equiv}
Let $\mcl{W}$ be a Hilbert space equipped with an inner product $\blacket{\cdot,\cdot}_{\mcl{W}}$,
and $\mcl{L}(\mcl{W})$ be the space of bounded linear operators on $\mcl{W}$.
If $\alg=\mcl{L}(\mcl{W})$, then, the $\alg$-valued positive definite kernel defined in Definition~\ref{def:pdk_rkhm} is equivalent to the operator valued positive definite kernel defined in Definition~\ref{def:pdk_vv-rkhs}.
\end{lemma}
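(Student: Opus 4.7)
The plan is to observe that condition~1 is literally the same in both Definitions~\ref{def:pdk_rkhm} and~\ref{def:pdk_vv-rkhs} (both say $k(x,y)^*=k(y,x)$, where $(\cdot)^*$ is the $C^*$-involution on $\alg=\mcl{L}(\mcl{W})$, i.e.\ the adjoint of a bounded operator). So the whole content of the lemma is the equivalence of the two positivity conditions, and I would prove each implication directly by rewriting one sum as the other.

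First I would note the standard $C^*$-algebraic fact: in $\mcl{L}(\mcl{W})$, a self-adjoint element $a$ is positive in the sense of Definition~\ref{def:positive} iff $\blacket{w,aw}_{\mcl{W}}\ge 0$ for every $w\in\mcl{W}$. I would also check once that $S:=\sum_{t,s=1}^n c_t^*k(x_t,x_s)c_s$ is self-adjoint: taking adjoints and relabeling indices, $S^*=\sum_{t,s}c_s^*k(x_t,x_s)^*c_t=\sum_{t,s}c_s^*k(x_s,x_t)c_t=S$ by condition~1. Hence positivity of $S$ in $\alg$ is equivalent to $\blacket{w,Sw}_{\mcl{W}}\ge 0$ for all $w\in\mcl{W}$.

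For the direction ``Definition~\ref{def:pdk_vv-rkhs} $\Rightarrow$ Definition~\ref{def:pdk_rkhm},'' I would fix $c_1,\ldots,c_n\in\mcl{L}(\mcl{W})$ and an arbitrary $w\in\mcl{W}$, set $v_i:=c_iw\in\mcl{W}$, and compute
\begin{equation*}
\blacket{w,Sw}_{\mcl{W}}=\sum_{t,s=1}^n\blacket{c_tw,k(x_t,x_s)c_sw}_{\mcl{W}}=\sum_{t,s=1}^n\blacket{v_t,k(x_t,x_s)v_s}_{\mcl{W}}\ge 0
\end{equation*}
by the vv-RKHS positivity, giving $S\ge 0$ in $\alg$. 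For the converse direction, given $v_1,\ldots,v_n\in\mcl{W}$ I would fix any unit vector $w_0\in\mcl{W}$ and define rank-one operators $c_i\in\mcl{L}(\mcl{W})$ by $c_i(x):=\blacket{w_0,x}_{\mcl{W}}\,v_i$, so that $c_i^*(y)=\blacket{v_i,y}_{\mcl{W}}\,w_0$. A direct computation then yields, for any $x\in\mcl{W}$,
\begin{equation*}
Sx=\blacket{w_0,x}_{\mcl{W}}\Bigl(\sum_{t,s=1}^n\blacket{v_t,k(x_t,x_s)v_s}_{\mcl{W}}\Bigr)w_0,
\end{equation*}
and evaluating $\blacket{w_0,Sw_0}_{\mcl{W}}$ (using $\Vert w_0\Vert=1$) shows this equals $\sum_{t,s}\blacket{v_t,k(x_t,x_s)v_s}_{\mcl{W}}$. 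Positivity of $S$ in $\alg$ then forces this scalar to be nonnegative, which is exactly the vv-RKHS condition.

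The main obstacle is purely bookkeeping in the ``$\Rightarrow$ Definition~\ref{def:pdk_vv-rkhs}'' direction: one must choose the $c_i$ so that $S$ reduces, on a suitable test vector, to the scalar $\sum\blacket{v_t,k(x_t,x_s)v_s}_{\mcl{W}}$ multiplied by something strictly positive. The rank-one construction above accomplishes this, and the computation only uses the linearity of $k(x_t,x_s)$ and the definition of the adjoint, so no deeper machinery is needed.
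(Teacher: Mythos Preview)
Your proposal is correct and follows essentially the same approach as the paper: both directions hinge on the rank-one trick, passing between operators $c_i\in\mcl{L}(\mcl{W})$ and vectors $v_i\in\mcl{W}$ via $v_i=c_i w$ one way and $c_i=\langle w_0,\cdot\rangle_{\mcl{W}}\,v_i$ the other. The paper's construction $c_t u:=\blacket{v,u}_{\mcl{W}}/\blacket{v,v}_{\mcl{W}}\,v_t$ is your rank-one operator with $w_0=v/\Vert v\Vert$, and your explicit check that $S$ is self-adjoint (so that numerical positivity suffices) is a point the paper leaves implicit.
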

\begin{proof}
Let $k$ be an $\alg$-valued positive definite kernel defined in Definition~\ref{def:pdk_rkhm}.
Let $v\in\mcl{W}$.
For $n\in\mathbb{N}$, $v_1,\ldots,v_n\in\mcl{W}$, let $c_t\in\mcl{L}(\mcl{W})$ be defined as $c_tu:=\blacket{v,u}_{\mcl{W}}/\blacket{v,v}_{\mcl{W}}v_t$ for $u\in\mcl{W}$.  
Since $v_t=c_tv$ holds, the following equalities are derived for $x_1,\ldots,x_n\in\mcl{Y}$:
\begin{align*}
\sum_{s,t=1}^n\blacket{v_t,k(x_t,x_s)v_s}_{\mcl{W}}
&=\sum_{s,t=1}^n\blacket{c_tv,k(x_t,x_s)c_sv}_{\mcl{W}}\\
&=\Bblacket{v,\sum_{s,t=1}^nc_t^*k(x_t,x_s)c_sv}_{\mcl{W}}.
\end{align*}
By the positivity of $\sum_{s,t=1}^nc_t^*k(x_t,x_s)c_s$, $\sblacket{v,\sum_{s,t=1}^nc_t^*k(x_t,x_s)c_sv}_{\mcl{W}}\ge 0$ holds, which implies $k$ is an operator valued positive definite kernel defined in Definition~\ref{def:pdk_vv-rkhs}.

On the other hand, let $k$ be an operator valued positive definite kernel defined in Definition~\ref{def:pdk_vv-rkhs}.
Let $v\in\mcl{W}$.
For $n\in\mathbb{N}$, $c_1,\ldots,c_n\in\alg$ and $x_1,\ldots,x_n\in\mcl{Y}$, the following equality is derived:
\begin{align*}
\Bblacket{v,\sum_{s,t=1}^nc_t^*k(x_t,x_s)c_sv}_{\mcl{W}}\!\!\!\!
=\sum_{s,t=1}^n\blacket{c_tv,k(x_t,x_s)c_sv}_{\mcl{W}}.
\end{align*}
By Definition~\ref{def:pdk_vv-rkhs}, $\sum_{s,t=1}^n\blacket{c_tv,k(x_t,x_s)c_sv}_{\mcl{W}}\ge 0$ holds, which implies $k$ is an $\alg$-valued positive definite kernel defined in Definition~\ref{def:pdk_rkhm}.
\end{proof}

\begin{prop}\label{prop:inner_product}
$\blacket{\cdot,\cdot}_k$ is an $\alg$-valued inner product.
\end{prop}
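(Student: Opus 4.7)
The task is to verify that $\blacket{\cdot,\cdot}_k$ on $\modu_{k,0}$ satisfies the three axioms of Definition~\ref{def:innerproduct}. Since elements of $\modu_{k,0}$ are functions in $\alg^{\mcl{Y}}$ that admit many representations as finite sums $\sum_t \phi(x_t) c_t$, the defining formula $\sum_{s,t} c_s^* k(x_s, y_t) d_t$ first needs to be shown independent of the chosen representations; only then can the three axioms be checked piece by piece.

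The first step is well-definedness (which also yields the reproducing property as a byproduct). My plan is to rewrite the defining sum in two ways: using $\phi(y)(x) = k(x, y)$ together with the kernel symmetry $k(x, y) = k(y, x)^*$, the defining double sum factors as
\begin{equation*}
\blacket{u, v}_k = \sum_{s} c_s^{*}\, v(x_s) = \sum_{t} u(y_t)^{*}\, d_t,
\end{equation*}
where $u = \sum_s \phi(x_s) c_s$ and $v = \sum_t \phi(y_t) d_t$ are the underlying functions in $\alg^{\mcl{Y}}$. The first expression depends only on $v$ (not on its representation) and on the pairs $\{(x_s, c_s)\}$; the second depends only on $u$ and on $\{(y_t, d_t)\}$. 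Combining the two, the value depends on neither representation. Specializing $u = \phi(x) = \phi(x) \cdot 1_{\alg}$ furnishes the reproducing identity $\blacket{\phi(x), v}_k = v(x)$, which I record for later use.

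With well-definedness in hand, axioms~1 and~2 are quick. Axiom~1 (right $\alg$-linearity in the second slot) follows from the bilinear form of the defining sum together with the identity $vc + wd = \sum_t \phi(y_t)(d_t c) + \sum_{t'} \phi(z_{t'})(e_{t'} d)$. Axiom~2 (conjugate symmetry) is the direct calculation $(\blacket{u, v}_k)^{*} = \sum_{s,t} d_t^{*}\, k(x_s, y_t)^{*}\, c_s = \sum_{s,t} d_t^{*}\, k(y_t, x_s)\, c_s = \blacket{v, u}_k$, which uses $(ab)^{*} = b^{*} a^{*}$ and the kernel symmetry. The positivity half of axiom~3, $\blacket{u, u}_k \ge 0$, is exactly the second defining condition of an $\alg$-valued positive definite kernel in Definition~\ref{def:pdk_rkhm} applied to any representation of $u$.

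The only delicate part, and the main obstacle of the argument, is the definiteness half of axiom~3: proving that $\blacket{u, u}_k = 0$ forces $u = 0$. The ordering of the previous steps is chosen precisely so that this can be handled by a short bootstrap. With axioms~1 and~2 already verified, Remark~\ref{rem:c-s} permits me to invoke the Cauchy--Schwarz inequality of Lemma~\ref{lem:c-s}, yielding
\begin{equation*}
\vert \blacket{\phi(x), u}_k \vert^2 \leq \Vert \phi(x)\Vert^2\, \blacket{u, u}_k = 0
\end{equation*}
in $\alg$. Combined with the reproducing property from the second paragraph, this forces $u(x)^{*} u(x) = 0$, whence $u(x) = 0$ by the $C^{*}$-identity $\Vert c^{*} c\Vert_{\alg} = \Vert c\Vert_{\alg}^{2}$, for every $x \in \mcl{Y}$. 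Therefore $u = 0$ as an element of $\alg^{\mcl{Y}}$, completing the verification.
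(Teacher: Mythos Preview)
Your proof is correct and follows essentially the same route as the paper's: axioms~1 and~2 by direct computation from the defining double sum, positivity from Definition~\ref{def:pdk_rkhm}, and definiteness via the Cauchy--Schwarz inequality (Lemma~\ref{lem:c-s}, justified by Remark~\ref{rem:c-s}) combined with the reproducing property to force $u(x)=0$ pointwise. The one difference is that you make well-definedness explicit via the two-sided rewriting $\sum_s c_s^* v(x_s) = \sum_t u(y_t)^* d_t$, whereas the paper asserts well-definedness in the surrounding text without spelling it out in the proof; your treatment is a welcome addition rather than a departure.
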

\begin{proof}
Property 1 in Definition~\ref{def:innerproduct} is followed by the definition of $\blacket{\cdot,\cdot}_k$.
The following equality for $v:=\sum_{t=0}^n\phi(x_t)c_t$ and $w:=\sum_{s=0}^l\phi(y_s)d_s$ implies $\blacket{\cdot,\cdot}_k$ satisfies property 2:
\begin{align*}
\blacket{v,w}_k&=\Bblacket{\sum_{t=0}^n\phi(x_t)c_t,\sum_{s=0}^l\phi(y_s)d_s}_k\\
&=\sum_{t=0}^n\sum_{s=1}^lc_t^*k(x_t,y_s)d_s\\
&=\bigg(\sum_{t=0}^n\sum_{s=0}^ld_s^*k(y_s,x_t)c_t\bigg)^*\\
&=\Bblacket{\sum_{s=0}^l\phi(y_s)d_s,\sum_{t=0}^n\phi(x_t)c_t}_k^*\\
&=\blacket{w,v}_k^*.
\end{align*}
Concerning property 3, $\blacket{u,u}_k\ge 0$ for $u\in\modu_k$ holds by the positive definiteness of $k$ (Definition~\ref{def:pdk_rkhm}.2).
In addition, by Cauchy-Schwarz inequality (Lemma~\ref{lem:c-s} and Remark~\ref{rem:c-s}), the following inequality holds for $x\in\mcl{Y}$: 
\begin{equation*}
\vert v(x)\vert^2_{\alg}=\vert\blacket{\phi(x),v}_k\vert^2_{\alg}\le\Vert k(x,x)\Vert_{\alg}\blacket{v,v}_k.
\end{equation*}
Thus, if $\blacket{v,v}_k=0$, then $\vert v(x)\vert_{\alg}=0$ for all $x\in\mcl{Y}$, which implies $v=0$.
\end{proof}

\begin{prop}\label{prop:reproducing}
$\blacket{\cdot,\cdot}_{k}$ is extended continuously to $\modu_k$ and the map $\modu_k\ni v\mapsto (x\mapsto\blacket{\phi(x),v}_{k})\in\alg^{\mcl{Y}}$ is injective.
Thus, $\modu_k$ is regarded to be the subset of $\alg^{\mcl{Y}}$ and has the reproducing property.
\end{prop}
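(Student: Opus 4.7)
The plan is to emulate the completion argument familiar from the scalar RKHS case, using the $\alg$-valued Cauchy--Schwarz inequality to promote $\blacket{\cdot,\cdot}_k$ from the dense pre-Hilbert module $\modu_{k,0}$ to its norm completion $\modu_k$, and then to use the resulting continuous evaluation functionals to identify $\modu_k$ with a subspace of $\alg^{\mcl{Y}}$.

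First, I would establish joint continuity of $\blacket{\cdot,\cdot}_k$ on $\modu_{k,0}\times\modu_{k,0}$. By Proposition~\ref{prop:inner_product} the form satisfies the three axioms of Definition~\ref{def:innerproduct} on $\modu_{k,0}$, so Lemma~\ref{lem:c-s} yields $\vert\blacket{u,v}_k\vert^2\le\Vert u\Vert_k^2\blacket{v,v}_k$. Taking $\alg$-norms and invoking the $C^*$-identity gives the operator bound $\Vert\blacket{u,v}_k\Vert_{\alg}\le\Vert u\Vert_k\Vert v\Vert_k$, so by density $\blacket{\cdot,\cdot}_k$ extends uniquely to a jointly continuous sesquilinear map $\modu_k\times\modu_k\to\alg$. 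Axioms 1 and 2 of Definition~\ref{def:innerproduct} transfer to the extension by continuity of the $\alg$-algebraic operations; axiom 3 transfers because the positive cone of $\alg$ is norm-closed (so $\blacket{v,v}_k\ge 0$), and definiteness follows from the identity $\Vert v\Vert_k^2=\Vert\blacket{v,v}_k\Vert_{\alg}$ which characterizes the completion norm.

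Next, for each fixed $x\in\mcl{Y}$ the map $v\mapsto\blacket{\phi(x),v}_k$ is continuous on $\modu_k$, since $\Vert\blacket{\phi(x),v}_k\Vert_{\alg}\le\Vert\phi(x)\Vert_k\Vert v\Vert_k$, so the evaluation map $v\mapsto(x\mapsto\blacket{\phi(x),v}_k)$ into $\alg^{\mcl{Y}}$ is well-defined. For injectivity, suppose $\blacket{\phi(x),v}_k=0$ for every $x\in\mcl{Y}$. Choose a sequence $v_n=\sum_{t}\phi(x_t^{(n)})c_t^{(n)}\in\modu_{k,0}$ with $v_n\to v$; right $\alg$-linearity in the second slot, combined with axiom 2, gives
\begin{equation*}
\blacket{v_n,v}_k=\sum_t(c_t^{(n)})^*\blacket{\phi(x_t^{(n)}),v}_k=0
\end{equation*}
for every $n$. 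Joint continuity then forces $\blacket{v,v}_k=\lim_n\blacket{v_n,v}_k=0$, whence $\Vert v\Vert_k=0$ and $v=0$. Once injectivity is in hand, we identify each $v\in\modu_k$ with the function $x\mapsto\blacket{\phi(x),v}_k$, so the reproducing property $v(x)=\blacket{\phi(x),v}_k$ becomes the defining identification.

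The step that requires the most care, rather than being a genuine obstacle, is the transfer of the three inner-product axioms across the completion in the $\alg$-valued setting: in the scalar case this is routine, but here one must explicitly invoke norm-closedness of the positive cone for positivity, the $C^*$-identity for the Cauchy--Schwarz bound in operator norm, and the relation $\Vert v\Vert_k^2=\Vert\blacket{v,v}_k\Vert_{\alg}$ for definiteness. Beyond these checks, the argument is a direct generalization of the complex-valued RKHS completion.
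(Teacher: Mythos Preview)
Your proposal is correct and follows essentially the same approach as the paper: both use the Cauchy--Schwarz inequality (Lemma~\ref{lem:c-s}) to extend $\blacket{\cdot,\cdot}_k$ continuously from $\modu_{k,0}$ to its completion, and both prove injectivity by approximating $v$ with elements of $\modu_{k,0}$, using conjugate-$\alg$-linearity in the first slot to get $\blacket{v_n,v}_k=0$, and passing to the limit to conclude $\blacket{v,v}_k=0$. Your discussion of transferring the three inner-product axioms across the completion is more explicit than the paper's treatment, but the underlying argument is identical.
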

\begin{proof}
(Existence) For $v,w\in\mcl{M}_k$, there exist $v_t,w_t\in\mcl{M}_{k,0}\ (t=1,2,\ldots)$ such that $v=\lim_{t\to\infty}v_t$ and $w=\lim_{t\to\infty}w_t$.
By Cauchy-Schwarz inequality (Lemma~\ref{lem:c-s}), the following inequalities hold:
\begin{align*}
&\Vert \blacket{v_t,w_t}_k-\blacket{v_s,w_s}_k\Vert_{\alg}\\
&\quad\le\Vert \blacket{v_t,w_t-w_s}_k\Vert_{\alg}+\Vert \blacket{v_t-v_s,w_s}_k\Vert_{\alg}\\
&\quad\le \Vert v_t\Vert_k\Vert w_t-w_s\Vert_k+\Vert v_t-v_s\Vert_k\Vert w_s\Vert_k\\
&\quad\to 0\ (t,s\to\infty),
\end{align*}
which implies $\{\blacket{v_t,w_t}_k\}_{t=1}^{\infty}$ is a Cauchy sequence in $\mcl{A}$.
By the completeness of $\mcl{A}$, there exists a limit $\lim_{t\to\infty}\blacket{v_t,w_t}_k$.

(Well-definedness) Assume there exist $v'_t,w'_t\in\mcl{M}_{k,0}\ (t=1,2,\ldots)$ such that $v=\lim_{t\to\infty}v_t=\lim_{t\to\infty}v'_t$ and $w=\lim_{t\to\infty}w_t=\lim_{t\to\infty}w'_t$.
By Cauchy-Schwarz inequality (Lemma~\ref{lem:c-s}), $\Vert\blacket{v_t,w_t}_k-\blacket{v'_t,w'_t}_k\Vert_{\alg}\le\Vert v_t\Vert_k\Vert w_t-w'_t\Vert_k+\Vert v_t-v'_t\Vert_k\Vert w'_t\Vert_k\to 0\ (t\to\infty)$ holds, which implies $\lim_{t\to\infty}\blacket{v_t,w_t}_k=\lim_{t\to\infty}\blacket{v'_t,w'_t}_k$.

(Injectivity) For $v,w\in\mcl{M}_k$, we assume $\blacket{\phi(x),v}_k=\blacket{\phi(x),w}_k$ for $x\in\mcl{Y}$.
By the linearity of $\blacket{\cdot,\cdot}_k$, $\blacket{u,v}_k=\blacket{u,v}_k$ holds for $u\in\mcl{M}_{k,0}$.
For $u\in\modu_k$, there exist $u_t\in\modu_{k,0}\ (t=1,2,\ldots)$ such that $u=\lim_{t\to\infty}u_t$.
Therefore, $\blacket{u,v-w}_k=\lim_{t\to\infty}\blacket{u_t,v-w}_k=0$.
As a result, $\blacket{v-w,v-w}_k=0$ holds by setting $u=v-w$, which implies $v=w$.
\end{proof}

\begin{prop}\label{prop:rkhm_unique}
Assume a Hilbert $C^*$-module $\mcl{M}$ over $\mcl{A}$ and a map $\psi:\mcl{Y}\to\mcl{M}$ satisfy the following conditions:\vspace{.2cm}\\
1. $^{\forall} x,y\in\mcl{Y}$, $\blacket{\psi(x),\psi(y)}_{\mcl{M}}=k(x,y)$\vspace{.1cm}\\
2. $\overline{\{\sum_{t=0}^n\psi(x_t)c_t\mid\ x_t\in\mcl{Y},\ c_t\in\mcl{A}\}}=\mcl{M}$\vspace{.2cm}\\
Then, there exists a unique $\mcl{A}$-linear bijection map $\Psi:\mcl{M}_k\to\mcl{M}$ that preserves the inner product and satisfies the following commutative diagram:
\begin{equation}
 \xymatrix{
\mcl{M}_k \ar[rr]^{\Psi}&  &\mcl{M}\\
&\mcl{Y} \ar[lu]^{\phi} \ar[ru]_{\psi}\ar@{}[u]|{\circlearrowright} &
}
\end{equation}
\end{prop}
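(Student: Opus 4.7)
The plan is to mimic the classical Moore--Aronszajn argument, adapted to the $C^*$-module setting. First I would define $\Psi$ on the dense submodule $\modu_{k,0}\subset\modu_k$ by the formula
\begin{equation*}
\Psi\bigg(\sum_{t=1}^{n}\phi(x_t)c_t\bigg):=\sum_{t=1}^{n}\psi(x_t)c_t,
\end{equation*}
which is the only possible choice if $\Psi\circ\phi=\psi$ and $\Psi$ is $\alg$-linear. The first obligation is well-definedness: I must check that if $\sum_{t}\phi(x_t)c_t=0$ in $\modu_k$ then $\sum_{t}\psi(x_t)c_t=0$ in $\mcl{M}$. Using condition 1, a direct expansion gives
\begin{equation*}
\Bblacket{\sum_{s}\psi(x_s)c_s,\sum_{t}\psi(x_t)c_t}_{\mcl{M}}=\sum_{s,t}c_s^*k(x_s,x_t)c_t=\Bblacket{\sum_{s}\phi(x_s)c_s,\sum_{t}\phi(x_t)c_t}_{k},
\end{equation*}
so both $\alg$-valued inner products agree on $\modu_{k,0}$. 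Thus the vanishing of the right-hand side forces the vanishing of $\vert\sum_t\psi(x_t)c_t\vert^2$, hence of $\sum_t\psi(x_t)c_t$ itself by property~3 of Definition~\ref{def:innerproduct}. The same identity shows that $\Psi$ is an isometry with respect to the norms $\Vert\cdot\Vert_k$ and $\Vert\cdot\Vert_{\mcl{M}}$ and preserves the $\alg$-valued inner product.

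Next I would extend $\Psi$ to all of $\modu_k$ by continuity: for $v\in\modu_k$ take a Cauchy sequence $v_n\in\modu_{k,0}$ with $v_n\to v$; then $\Psi(v_n)$ is Cauchy in $\mcl{M}$ by the isometry property, and completeness of $\mcl{M}$ produces a limit, which I define to be $\Psi(v)$. Standard limit arguments (the same style used in Proposition~\ref{prop:reproducing}) show this is independent of the chosen sequence, is $\alg$-linear, and still preserves $\blacket{\cdot,\cdot}$; in particular $\Psi$ is injective because it is isometric. Surjectivity follows from condition 2: the image $\Psi(\modu_{k,0})$ is exactly the submodule $\{\sum_t\psi(x_t)c_t\}$, which is dense in $\mcl{M}$, and since $\Psi$ is an isometry between complete spaces its image is also closed, hence equal to $\mcl{M}$.

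Finally, uniqueness is forced by the commutative diagram: any map $\Psi'$ with the stated properties satisfies $\Psi'(\phi(x))=\psi(x)$ for every $x\in\mcl{Y}$, and therefore agrees with $\Psi$ on $\modu_{k,0}$ by $\alg$-linearity; continuity of both maps (which is automatic from being $\alg$-linear isometries) extends the agreement to all of $\modu_k$. I expect the main obstacle to be the well-definedness step, since it requires recognising that the $\alg$-valued inner product---not just its scalar norm---is preserved at the level of formal linear combinations; once the identity $\sum_{s,t}c_s^*k(x_s,x_t)c_t$ is seen to compute both inner products simultaneously, the remaining steps are routine transfers of the standard Hilbert-space argument to the Hilbert $C^*$-module setting, with Cauchy--Schwarz (Lemma~\ref{lem:c-s}) used wherever continuity of the inner product is required.
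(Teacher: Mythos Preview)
Your proposal is correct and follows essentially the same Moore--Aronszajn strategy as the paper: define $\Psi$ on $\modu_{k,0}$ by $\phi(x)\mapsto\psi(x)$, use the identity $\sum_{s,t}c_s^*k(x_s,x_t)c_t$ to transfer the $\alg$-valued inner product, extend by completeness, and read off injectivity, surjectivity, and uniqueness. If anything, your version is slightly more careful than the paper's, since you explicitly verify well-definedness on $\modu_{k,0}$ (that $\sum_t\phi(x_t)c_t=0$ forces $\sum_t\psi(x_t)c_t=0$) before extending, whereas the paper folds this into the isometry step somewhat informally.
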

\begin{proof}
We define $\Psi:\mcl{M}_{k,0}\to\mcl{M}$ as an $\mcl{A}$-linear map that satisfies $\Psi(\phi(x))=\psi(x)$.
We show $\Psi$ can be extended to a unique $\mcl{A}$-linear bijection map on $\modu_k$ , which preserves the inner product.

(Uniqueness) The uniqueness follows by the definition of $\Psi$.

(Inner product preservation) For $x,y\in\mcl{Y}$, equalities  $\blacket{\Psi(\phi(x)),\Psi(\phi(y))}_k=\blacket{\psi(x),\psi(y)}_{\mcl{M}}=k(x,y)=\blacket{\phi(x),\phi(y)}_k$ hold.
Since $\Psi$ is $\alg$-linear, $\Psi$ preserves the inner products between arbitrary $u,v\in\modu_{k,0}$

(Well-definedness) Since $\Phi$ preserves the inner product, if  $\{v_t\}_{t=1}^{\infty}\subseteq\mcl{M}_k$ is a Cauchy sequence, $\{\Psi(v_t)\}_{t=1}^{\infty}\subseteq\mcl{M}$ is also a Cauchy sequence.
Therefore, by the completeness of $\mcl{M}$, $\Psi$ also preserves the inner product in $\mcl{M}_k$, and for $v\in\mcl{M}_k$, $\Vert\Psi(v)\Vert_{\mcl{M}}=\Vert v\Vert_k$ holds.
As a result, for $v\in\mcl{M}_k$, if $v=0$, $\Vert \Psi(v)\Vert_{\mcl{M}}=\Vert v\Vert_k=0$ holds.
This implies $\Psi(v)=0$.

(Injectivity) 
For $v,w\in\mcl{M}_k$, if $\Psi(v)=\Psi(w)$, then 
$0=\Vert\Psi(v)-\Psi(w)\Vert_{\mcl{M}}=\Vert v-w\Vert_k$ holds since $\Psi$ preserves the inner product, which implies $v=w$.

(Surjectivity) The surjectivity follows directly by the condition $\overline{\{\sum_{t=0}^n\psi(x_t)c_t\mid\ x_t\in\mcl{Y},\ c_t\in\mcl{A}\}}=\mcl{M}$.
\end{proof}

\begin{lemma}\label{lem:A^m}
$\alg^n$ is a Hilbert $C^*$-module equipped with an $\alg$-valued inner product $\blacket{\cdot,\cdot}_{\alg^n}$ defined as  $\blacket{\mathbf{u},\mathbf{v}}_{\alg^n}:=\sum_{t=1}^{n}u_t^*v_t$
for $\mathbf{u}=[u_1,\ldots,u_{n}],\mathbf{v}=[v_1,\ldots,v_{n}]\in\alg^n$
where $u_1,\ldots,u_{n},v_1,\ldots,v_{n}\in\alg$.
\end{lemma}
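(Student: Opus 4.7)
The plan is to verify in order the three defining properties of an $\alg$-valued inner product (Definition~\ref{def:innerproduct}) for the proposed map $\blacket{\cdot,\cdot}_{\alg^n}$, and then to establish completeness of $\alg^n$ with respect to the induced norm. The $C^*$-module structure is given by componentwise right multiplication: for $\mathbf{u}=[u_1,\ldots,u_n]\in\alg^n$ and $c\in\alg$, set $\mathbf{u}c:=[u_1c,\ldots,u_nc]$. It is straightforward to check that this satisfies the four axioms in Definition~\ref{def:multiplication}.

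For property 1 (right $\alg$-linearity in the second slot), I would expand both sides using the definition and collect terms; this reduces to the bilinearity of the product in $\alg$ and the additivity of the finite sum. For property 2 (conjugate symmetry), the computation is
\begin{equation*}
\blacket{\mathbf{v},\mathbf{u}}_{\alg^n}=\sum_{t=1}^n v_t^*u_t=\sum_{t=1}^n(u_t^*v_t)^*=\bigg(\sum_{t=1}^n u_t^*v_t\bigg)^*=\blacket{\mathbf{u},\mathbf{v}}_{\alg^n}^*,
\end{equation*}
using the involution axioms in Definition~\ref{def:c*_algebra}.

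Property 3 splits into two parts. Positivity is immediate: each summand $u_t^*u_t$ is positive by Definition~\ref{def:positive}, and a finite sum of positive elements in a $C^*$-algebra is again positive. The part that is slightly more delicate is definiteness: assuming $\sum_{t=1}^n u_t^*u_t=0$, I want to deduce each $u_t=0$. The key observation is that for each fixed $t$ one has $0\le u_t^*u_t\le \sum_{s=1}^n u_s^*u_s=0$ in the positive order of $\alg$, hence $u_t^*u_t=0$; then the $C^*$-identity $\Vert u_t^*u_t\Vert_\alg=\Vert u_t\Vert_\alg^2$ forces $u_t=0$. This step, and the justification of monotonicity of the $C^*$-order under partial sums, is the one I expect to require the most care.

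Finally, for completeness, let $\{\mathbf{u}^{(k)}\}_{k\in\mathbb{N}}$ be Cauchy in $\alg^n$ under the induced norm. The componentwise estimate
\begin{equation*}
\Vert u_t^{(k)}-u_t^{(l)}\Vert_\alg^2=\Vert (u_t^{(k)}-u_t^{(l)})^*(u_t^{(k)}-u_t^{(l)})\Vert_\alg\le \Vert\mathbf{u}^{(k)}-\mathbf{u}^{(l)}\Vert^2
\end{equation*}
(again via positivity of the remaining summands, in the same spirit as above) shows that each coordinate sequence is Cauchy in $\alg$. Completeness of $\alg$ gives limits $u_t\in\alg$, and since the sum defining the norm has only $n$ terms, we obtain $\mathbf{u}^{(k)}\to\mathbf{u}:=[u_1,\ldots,u_n]$ in $\alg^n$. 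This concludes the verification that $\alg^n$ is a Hilbert $C^*$-module.
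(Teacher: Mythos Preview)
Your proof is correct. You verify the three inner-product axioms and completeness directly, which is an elementary and self-contained route.

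The paper takes a different approach: it defines $k(\mathbf{u},\mathbf{v}):=\sum_{t=1}^n u_t^*v_t$, checks that $k$ is an $\alg$-valued positive definite kernel in the sense of Definition~\ref{def:pdk_rkhm} (the key computation being $\sum_{s,t}c_s^*k(\mathbf{u}_s,\mathbf{u}_t)c_t=\sum_j\big(\sum_t \mathbf{u}_{t,j}c_t\big)^*\big(\sum_t \mathbf{u}_{t,j}c_t\big)\ge0$), and then invokes the RKHM construction, declaring $\alg^n$ to be the RKHM associated with $k$. This buys the inner-product axioms and completeness in one stroke by appealing to Proposition~\ref{prop:inner_product} and the completion step already developed for RKHMs. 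The trade-off is that the identification of $\alg^n$ with the function space $\modu_k$ (via the $\alg$-linear feature map $\phi(\mathbf{u})=k(\cdot,\mathbf{u})$, which here satisfies $\phi(\mathbf{u}c)=\phi(\mathbf{u})c$) is left implicit, and in particular the fact that $\modu_{k,0}$ is already complete---equivalently, your direct completeness argument---is not spelled out. Your approach avoids this circularity and stands on its own without the RKHM machinery; the paper's approach is shorter but leans on the surrounding theory.
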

\begin{proof}
Let $k:\alg^n\times\alg^n\to\alg$ be a map defined by $k(\mathbf{u},\mathbf{v})=\sum_{t=1}^{n}u_t^*v_t$ for $\mathbf{u}=[u_1,\ldots,u_{n}],\mathbf{v}=[v_1,\ldots,v_{n}]\in\alg^n$.
Then, $k(\mathbf{u},\mathbf{v})=k(\mathbf{v},\mathbf{u})^*$ holds, and the following equalities hold for $n\in\mathbb{N}$, $c_1,\ldots,c_n\in\alg$ and $\mathbf{u}_1,\ldots,\mathbf{u}_n\in\alg^n$:
\begin{align*}
\sum_{s,t=1}^nc_s^*k(\mathbf{u}_s,\mathbf{u}_t)c_t
&=\sum_{s,t=1}^nc_s^*\sum_{j=1}^m\mathbf{u}_{s,j}^*\mathbf{u}_{t,j}c_t\\
&=\sum_{j=1}^m\bigg(\sum_{t=1}^n\mathbf{u}_{t,j}c_t\bigg)^*\bigg(\sum_{t=1}^n\mathbf{u}_{t,j}c_t\bigg).
\end{align*}
Thus, $\sum_{s,t=1}^nc_s^*k(\mathbf{u}_s,\mathbf{u}_t)c_t$ is positive and $k$ is an $\alg$-valued positive definite kernel.
As a result, $\alg^n$ is the RKHM associated with $k$, which completes the proof of the lemma.
\end{proof}

\section{Kernel PCA with RKHSs}\label{ap:pca_review}
In this section, we briefly review the kernel PCA with RKHSs.

We construct an orthonormal basis of the space spanned by samples that minimizes the reconstruction error.
Let $x_1,\ldots,x_{n}\in\mcl{X}$ be samples.
Let $\tilde{w}_t=\tilde{\phi}(x_t)$ be the samples embedded in an RKHS for $t=1,\ldots,n$, 
$\tilde{W}:=[\tilde{w}_1,\ldots,\tilde{w}_n]$ be the operator that represent the samples, 
and $\tilde{\mathbf{G}}:=[\blacket{\tilde{w}_s,\tilde{w}_t}_{\tilde{k}}]_{s,t}$ be an $n\times n$ $\mathbb{C}$-valued Gram matrix. 
In addition, let $\tilde{\mathbf{G}}=\mathbf{\tilde{V}\tilde{\Sigma} \tilde{V}^*}$ be an eigenvalue decomposition of $\tilde{\mathbf{G}}$,
where $\mathbf{\tilde{\Sigma}}=\opn{diag}\{\tilde{\sigma}_1,\ldots,\tilde{\sigma}_l\}$ and $\tilde{\sigma}_1\ge\ldots\ge\tilde{\sigma}_l> 0$ is the nonzero eigenvalues of $\tilde{\mathbf{G}}$.
The eigenvector corresponding to the largest eigenvalue of operator $\tilde{W}\tilde{W}^*$ is the direction that describes the samples the most, since the squared sum of the samples projected on the space spanned by a normalized vector $\tilde{p}\in\hil_{\tilde{k}}$ is represented as $\sum_{t=1}^n\big\Vert\tilde{p}\bblacket{\tilde{p},\tilde{w}_t}_{\tilde{k}}\big\Vert_{\tilde{k}}^2=\tilde{p}^*\tilde{W}\tilde{W}^*\tilde{p}$.
Therefore, with $\mathbf{\tilde{\Sigma}}$ and $\mathbf{\tilde{V}}$, the eigenvectors of $\tilde{W}\tilde{W}^*$ are explored.
The eigenvalues of $\tilde{W}\tilde{W}^*$ are equal to those of $\tilde{W}^*\tilde{W}$, which is equal to $\tilde{\mathbf{G}}$.
For eigenvalue $\tilde{\sigma}_s$ ($s=1,\ldots,l$), let $\tilde{\mathrm{v}}_s$ be the $s$-th column of $\tilde{\mathbf{V}}$ and
\begin{equation}
\tilde{p}_s:=\tilde{\sigma}_s^{-1/2}\tilde{W}\tilde{\mathrm{v}}_s.\label{eq:p_axis_rkhs} 
\end{equation}
Then, $\tilde{W}\tilde{W}^*\tilde{p}_s=\tilde{\sigma}_s\tilde{p}_s$ holds.
$\tilde{p}_s$ is called the $s$-th {\em principal axis}.
And, for each $w_t$, $\tilde{p}_s\blacket{\tilde{p}_s,w_t}$, the projected vector of $w_t$ onto the space spanned by $\tilde{p}_s$, is called the $s$-th {\em principal component} of $w_t$.

It can be shown that the space spanned by $\{\tilde{p}_t\}_{t=1}^s$ minimizes the reconstruction error, that is, $\{\tilde{p}_t\}_{t=1}^s$ is a solution of the following minimization problem~\citep[Proposition 14.1]{scholkopf01}:
\begin{equation*}
\min_{\tilde{q}_j\in\hil_{\tilde{k}}:\mbox{\small orthonormal}}\sum_{t=1}^n\bigg\Vert \tilde{w}_t-\sum_{j=1}^s\tilde{q}_j\blacket{\tilde{q}_j,\tilde{w}_j}_{\tilde{k}}\bigg\Vert_{\tilde{k}}^2.
\end{equation*}

\section{Analysis of dynamical systems with RKHSs}\label{sec:PF_review}
In this section, we briefly review the existing methods for analyzing time-series data with Perron-Frobenius operators in RKHSs.

First, Perron-Frobenius operators in RKHSs are defined.
Let $\{x_0,x_1,\ldots\}\subseteq \mcl{X}$ be time-series data, which is assumed to be  generated by the following deterministic dynamical system:
\begin{equation}
x_{t+1}=f(x_t),\label{eq:ds_rkhs}
\end{equation}
where $f:\mcl{X}\to\mcl{X}$ is a map.
By embedding $x_t$ and $f(x_t)$ in an RKHS $\hil_{\tilde{k}}$ associated with a positive definite kernel $\tilde{k}$ and the feature map $\tilde{\phi}$, dynmical system~\eqref{eq:ds_rkhs} in $\mcl{X}$ is transformed into that in the RKHS as follows:
\begin{equation*}
\tilde{\phi}(x_{t+1})=\tilde{\phi}(f(x_t)).
\end{equation*}
The Perron-Frobenius operator $\tilde{K}$ in the RKHS is defined as follows~\cite{kawahara16,ishikawa18}:
\begin{equation*}
\tilde{K}\tilde{\phi}(x):=\tilde{\phi}(f(x)).
\end{equation*}
If $\{\tilde{\phi}(x)\mid\ x\in\mcl{X}\}$ is linearly independent, $\tilde{K}$ is well-defined as a linear map in the RKHS.
For example, if $\tilde{k}$ is the Gaussian or Laplacian kernel on $\mcl{X}=\mathbb{R}^d$, $\{\tilde{\phi}(x)\mid\ x\in\mcl{X}\}$ is linearly independent.

For estimating $\tilde{K}$ with observed time-series data  $\{x_0,x_1,\ldots\}$, Krylov subspace methods are applied~\cite{kawahara16,hashimoto19}.
Let $\tilde{W}_T:=[\tilde{\phi}(x_0),\ldots,\tilde{\phi}(x_{T-1})]$ and $\tilde{W}_T=\tilde{Q}_T\tilde{\mathbf{R}}_T$ be the QR decomposition of $\tilde{W}$ in the RKHS.
$\tilde{K}$ is estimated by projecting $\tilde{K}$ into the space spanned by  $\{\tilde{\phi}(x_t)\}_{t=0}^{T-1}$, which is called Krylov subspace.
Since $\tilde{K}\tilde{\phi}(x_t):=\tilde{\phi}(f(x_t))=\tilde{\phi}(x_{t+1})$ holds, the estimation of $\tilde{K}$ can be computed only with observed data as follows:
\begin{align*}
\tilde{\mathbf{K}}_T:&= \tilde{Q}_T^*\tilde{K}\tilde{Q}_T=\tilde{Q}_T^*\tilde{K}\tilde{W}_T\tilde{\mathbf{R}}_T^{-1}\\
&=\tilde{Q}_T^*[\tilde{\phi}(x_1),\ldots,\tilde{\phi}(x_{T})]\tilde{\mathbf{R}}_T^{-1}.
\end{align*}

The estimation of $\tilde{K}$ provides a prediction of data in the RKHS, which can be applied to anomaly detection~\cite{hashimoto19}, information of periodicity of the data~\cite{kawahara16}, and so on.
\citet{hashimoto19} proposed evaluating the prediction error at $S$ with the following value:
\begin{equation}
\hat{\alpha}_{T,S}:=\Vert \tilde{\phi}(x_S)-\tilde{Q}_T\tilde{\mathbf{K}}_T\tilde{Q}_T^*\tilde{\phi}(x_{S-1})\Vert_{\tilde{k}},\label{eq:error_rkhs}
\end{equation}
where $\tilde{Q}_T\tilde{\mathbf{K}}_T\tilde{Q}_T^*\tilde{\phi}(x_{S-1})$ is regarded as a prediction at $S$ with the estimated operator $\tilde{\mathbf{K}}_T$.

For interacting dynamical systems, \citet{fujii19} proposed a modal decomposition of the operator that represents the relation between $k(x_t,x_t)$ and $k(x_{t+1},x_{t+1})$ for observed data $\{x_0,x_1,\ldots\}\subseteq\mcl{X}^m$ and some $\mat$-valued positive definite kernel $k$, under the assumption of the relation is linear.
Whereas, our decomposition~\eqref{eq:eigsum} of $k(x_t,x_t)$ is also valid for the case where the relation between $k(x_t,x_t)$ and $k(x_{t+1},x_{t+1})$ is nonlinear.
In this sense, decomposition~\eqref{eq:eigsum} generalizes the decomposition considered by~\citet{fujii19}.

\section{The mathematical treatments for defining Perron-Frobenius operators in RKHMs}\label{ap:lin_indep}
In this section, we explain in detail the mathematical treatments for defining Perron-Frobenius operators in RKHMs mentioned in Section~\ref{sec:PF_rkhm}.

Unlike in the case of RKHSs, $\{\phi(x)\mid\ x\in\mcl{Y}\}$ is not always linearly independent even for well-behaved kernels like the Gaussian, which makes it difficult to define Perron-Frobenius operators as a $\mat$-linear operator in general $\modu_k$.
Indeed, for $\mcl{Y}\subseteq\mathbb{C}^2$, $x=[1,1]\in\mathbb{C}^2$ and $c=\begin{bmatrix}
			      0&1\\
			      0&-1
			     \end{bmatrix}$
, $\phi(x)c=0$ holds. 
On the other hand, $\phi([f_1(x),f_2(x)])c$ is not always $0$.
This is due to the fact that for some $c,d\in\alg$, $cd$ becomes $0$ even if neither $c$ nor $d$ is $0$.
This situation never happens in RKHSs, whose kernel is complex-valued.

However, if $\mcl{Y}$ has a proper condition, we can extend $K$ to $\modu_{k,0}(\mcl{Y})$ as a $\mat$-linear map.
\begin{prop}\label{prop:linearly_indep}
Assume $x_s,x_t\in\mcl{Y}$ satisfies the following condition:
\begin{equation}\label{eq:completely_different}
x_{s,i}\neq x_{t,j}\;\ \mbox{for}\;\ (s,i)\neq(t,j),\ i,j=1,\ldots,m,
\end{equation}
and $\{\tilde{\phi}(x)\mid\ x\in\mcl{X}\}$ is linearly independent, where $\tilde{\phi}$ is defined in Section~\ref{sec:settings}.
Then, $\{\phi(x)\mid\ x\in\mcl{Y}\}$ is $\mat$-linearly independent.
\end{prop}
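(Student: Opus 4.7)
The plan is to reduce matrix-valued linear independence in $\modu_k$ to scalar-valued linear independence in the underlying RKHS $\hil_{\tilde{k}}$ via entrywise evaluation of the kernel.

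First I would suppose a finite $\mat$-linear relation $\sum_{t=1}^{n}\phi(x_t)c_t=0$ in $\modu_k$ for $x_1,\dots,x_n\in\mcl{Y}$ and $c_t\in\mat$, and use the fact that $\modu_k$ is contained in $\alg^{\mcl{X}^m}$ (by Proposition~\ref{prop:reproducing}) to evaluate this identity at an arbitrary point $y=[y_1,\dots,y_m]\in\mcl{X}^m$. Since the right $\mat$-action is pointwise matrix multiplication, this evaluation yields the matrix identity $\sum_{t=1}^n k(y,x_t)c_t=0$.

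Next I would look at the $(i,j)$-entry of this matrix identity. Using Lemma~\ref{lem:pdk}, the $(i,l)$-entry of $k(y,x_t)$ is $\tilde{k}(y_i,x_{t,l})$, so the entry reads
\begin{equation*}
\sum_{t=1}^{n}\sum_{l=1}^{m}\tilde{k}(y_i,x_{t,l})(c_t)_{l,j}=0.
\end{equation*}
Because $y_i\in\mcl{X}$ is arbitrary and independent of the other coordinates of $y$, this says that for each fixed $j\in\{1,\dots,m\}$ the element $\sum_{t,l}(c_t)_{l,j}\tilde{\phi}(x_{t,l})\in\hil_{\tilde{k}}$ vanishes as a function on $\mcl{X}$.

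Finally I would invoke hypothesis~\eqref{eq:completely_different}: it guarantees that the $nm$ points $\{x_{t,l}\}_{t=1,\dots,n,\;l=1,\dots,m}$ are pairwise distinct. Combined with the assumed linear independence of $\{\tilde{\phi}(x)\mid x\in\mcl{X}\}$ in $\hil_{\tilde{k}}$, this forces every scalar coefficient $(c_t)_{l,j}$ to be zero. Running $j$ through $1,\dots,m$ yields $c_t=0$ for all $t$, proving $\mat$-linear independence. The only subtle point is being careful that the $\mat$-module structure of $\modu_k$ acts entrywise on the $\alg$-valued functions, so that the matrix-valued relation genuinely separates into $m^2$ scalar relations in the RKHS; once that is set up, the argument reduces to a direct invocation of condition~\eqref{eq:completely_different} and the scalar linear independence hypothesis.
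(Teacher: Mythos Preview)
Your proof is correct and follows essentially the same route as the paper's own argument: assume a finite $\mat$-linear relation, evaluate pointwise to obtain $\sum_t k(y,x_t)c_t=0$, read off the $(i,j)$-entry to get a scalar relation among the $\tilde{\phi}(x_{t,l})$, and then use condition~\eqref{eq:completely_different} together with the scalar linear independence hypothesis to conclude $c_t=0$. The only cosmetic difference is that you evaluate at $y\in\mcl{X}^m$ while the paper evaluates at $y\in\mcl{Y}$; your version is arguably cleaner, and you make explicit the role of pairwise distinctness of the $x_{t,l}$, which the paper leaves implicit.
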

\begin{proof}
Let $\sum_{t=0}^{T-1}\phi(x_t)c_t=0$ for $T\in\mathbb{N}$, $x_t\in\mcl{Y}$ and $c_t\in\mat$.
Then $\sum_{t=0}^{T-1}k(y,x_t)c_t=0$ holds for an arbitrary $y\in\mcl{Y}$.
Therefore, $\sum_{t=0}^{T-1}\sum_{l=1}^{m}\tilde{k}(y_i,x_{t,l})(c_t)_{l,j}=0$ holds for arbitrary pairs $(i,j)$ with $i,j=1,\ldots,m$.
Since $y\in\mcl{Y}$ is arbitrary, this implies $\sum_{t=0}^{T-1}\sum_{l=1}^{m}\tilde{\phi}(x_{t,l})(c_t)_{l,j}=0$ holds for $j=1,\ldots,m$.
As a result, $c_t=0$ for $t=0,\ldots,T-1$ since $\{\tilde{\phi}(x)\mid\ x\in\mcl{X}\}$ is linearly independent, which completes the proof of the proposition.
\end{proof}

Therefore, in fact, we consider $\mcl{Y}$ with condition~\eqref{eq:completely_different}, that is, $\mcl{Y}$ is a subset of $\mcl{X}^m$ that is composed of completely different elements in $\mcl{X}$.

Practically, observed data often contains observed noise, which allows us to regard $\{x_0,x_1,\ldots\}$ as data whose elements are completely different.
Otherwise, we artificially add some noise to $\{x_0,x_1,\ldots\}$ and slightly perturb it to meet the condition.
Therefore, the assumption does not cause any difficulties in practical computations.

\section{Figures about the results in Section~\ref{sec:exp_PF}}\label{ap:figure}
Here, we show the figures that illustrate the results about time-invariant relations computed in Section~\ref{sec:exp_PF}.
Values in the first row, which is equal to the first column, of $c_{\opn{inv}}$ represent the time-invariant similarities between $y_t$ and $z_t\sim z_{t+9}$, where $y_t$ and $z_t$ are the indoor and outdoor temperatures at $t$, respectively.
Analogously, values in the second row, which is equal to the second column, of $c_{\opn{inv}}$ represent the time-invariant similarities between $z_t$ and $y_t\sim y_{t+9}$.
Figure~\ref{fig:causal_graph} illustrates the above values with graphs.
The edges in the graphs are directed towards later time steps.
Also, the width of the edges in the graphs are proportional to the corresponding values in $c_{\opn{inv}}$.
We can see $y_{j+2}\sim y_{j+5}$ and $z_{i}$ are similar, which implies the indoor temperature becomes high $2\sim 5$ time steps after the outdoor one does.

\begin{figure}[t]
\begin{center}
\subfigure[\small between $y_t$ and $z_t\sim z_{t+9}$]{
\includegraphics[viewport=200 100 400 300, width=.48\linewidth,clip]{./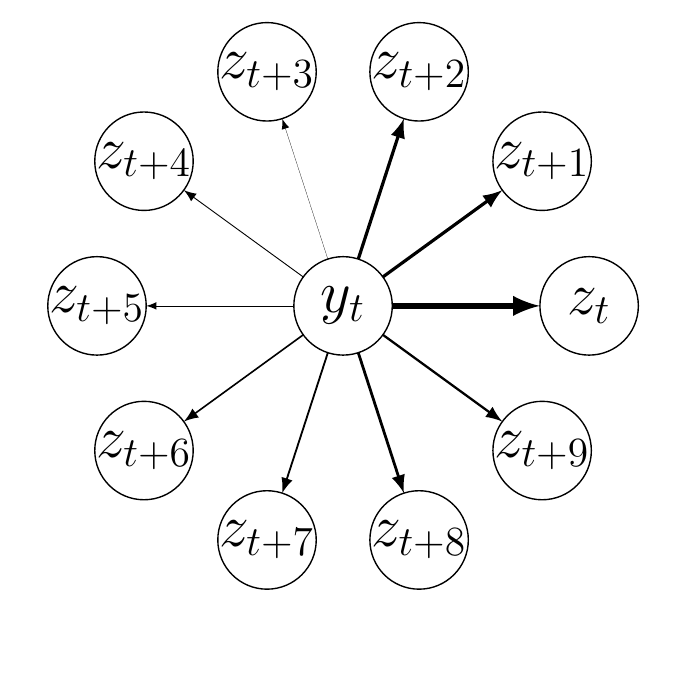}}
\subfigure[\small between $z_t$ and $y_t\sim y_{t+9}$]{
\includegraphics[viewport=200 100 400 300, width=.48\linewidth,clip]{./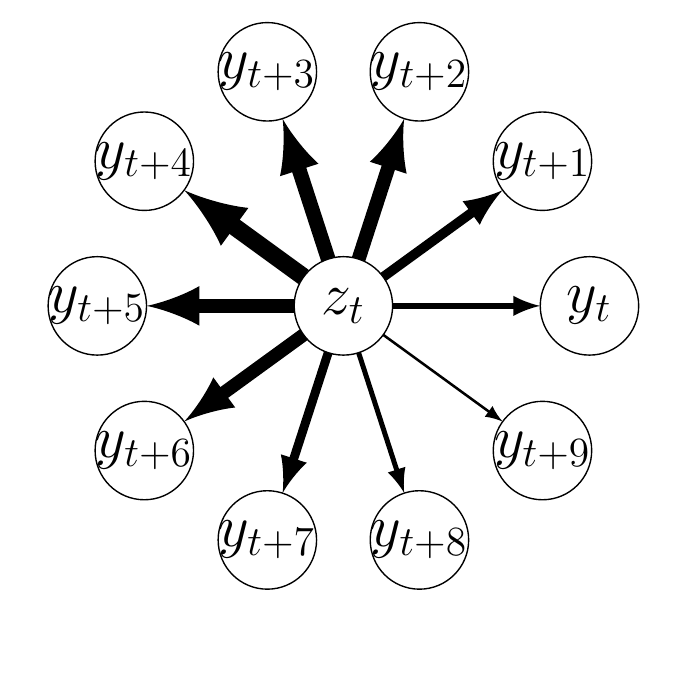}}
 \caption{Graphs illustrating similarities between the indoor and outdoor temperatures.}\label{fig:causal_graph}
\end{center}
\end{figure}

\section{Proofs}\label{ap:proof}
In this section, we provide the proofs of the theorems, propositions, corollaries and lemmas appearing in this paper.

\section*{Proof of Proposition~\ref{prop:orthonormal_existance}}
First, we prove the following lemma and corollary:
\begin{lemma}\label{lem:inprodequiv}
For $c\in\mcl{A}$ and $v\in\mcl{M}$, if $\blacket{v,v}c=\blacket{v,v}$, then $vc=v$ holds.
\end{lemma}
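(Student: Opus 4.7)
The plan is to prove $vc=v$ by showing that the $\mathcal{A}$-valued squared norm of their difference vanishes, and then invoking the positive-definiteness of the inner product (property 3 of Definition~\ref{def:innerproduct}). This is the standard trick for establishing equalities of vectors in Hilbert $C^*$-modules, since we do not have a direct cancellation law available.

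First I would expand $\langle vc-v, vc-v\rangle$ using $\mathcal{A}$-sesquilinearity (properties 1 and 2 of Definition~\ref{def:innerproduct}) to obtain
\begin{equation*}
\langle vc-v, vc-v\rangle = c^*\langle v,v\rangle c - c^*\langle v,v\rangle - \langle v,v\rangle c + \langle v,v\rangle.
\end{equation*}
Next I would use the hypothesis $\langle v,v\rangle c = \langle v,v\rangle$ directly, and also take its involution: since $\langle v,v\rangle$ is self-adjoint (being positive), applying $(\cdot)^*$ to both sides of the hypothesis gives $c^*\langle v,v\rangle = \langle v,v\rangle$. Substituting both relations into the expansion, the first term becomes $c^*(\langle v,v\rangle c)=c^*\langle v,v\rangle=\langle v,v\rangle$, and the middle two terms each simplify to $\langle v,v\rangle$. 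All four terms then cancel to yield $\langle vc-v, vc-v\rangle = 0$.

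Finally, by property 3 of Definition~\ref{def:innerproduct}, $\langle u,u\rangle=0$ implies $u=0$, so $vc-v=0$, i.e., $vc=v$, as desired.

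I do not anticipate a main obstacle here: the proof is a short algebraic manipulation whose only subtlety is remembering that the hypothesis yields two useful identities (itself and its adjoint), which together are exactly what is needed to collapse the four-term expansion of the squared difference.
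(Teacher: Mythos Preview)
Your proposal is correct and follows essentially the same approach as the paper: expand $\langle vc-v,vc-v\rangle$, use the hypothesis (and its adjoint) to see that the four terms cancel, and invoke positive-definiteness of the inner product to conclude $vc=v$. Your write-up is in fact slightly more explicit than the paper's, which simply displays the expansion and asserts it vanishes.
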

\begin{proof}
If $\blacket{v,v}c=\blacket{v,v}$, the following equalities hold:
\begin{align*}
 &\blacket{vc-v,vc-v}\\
 &\quad=c^*\blacket{v,v}c-c^*\blacket{v,v}-\blacket{v,v}c+\blacket{v,v}=0,
\end{align*}
 which imply $vc=v$.
\end{proof}
\begin{cor}\label{cor:inprodequiv}
If $q\in\mcl{M}$ is normalized, then $q\blacket{q,q}=q$ holds.
\end{cor}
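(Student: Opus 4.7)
The plan is to derive Corollary~\ref{cor:inprodequiv} as an immediate specialization of Lemma~\ref{lem:inprodequiv}. Recall that $q$ being normalized means $0 \neq \blacket{q,q} = \blacket{q,q}^2$; in particular, the element $a := \blacket{q,q} \in \alg$ satisfies the idempotence relation $a^2 = a$, which can be rewritten as $\blacket{q,q}\cdot a = \blacket{q,q}$.

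This is precisely the hypothesis of Lemma~\ref{lem:inprodequiv} with the choices $v := q$ and $c := \blacket{q,q}$. Applying that lemma directly yields $q\cdot\blacket{q,q} = q$, which is the desired conclusion. No further argument seems necessary: the normalization condition is specifically designed to match the idempotence condition that Lemma~\ref{lem:inprodequiv} demands.

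There is essentially no obstacle here, since all the work was already done in Lemma~\ref{lem:inprodequiv} (where the norm identity $\blacket{vc-v, vc-v}=0$ was unpacked using the sesquilinearity of the inner product together with the self-adjointness of the positive element $\blacket{v,v}$). The only thing worth remarking on in the write-up is that the positivity and self-adjointness of $\blacket{q,q}$ (i.e., $\blacket{q,q}^* = \blacket{q,q}$) are what allow us to identify $\blacket{q,q}^2 = \blacket{q,q}$ with the one-sided multiplication $\blacket{q,q}\cdot\blacket{q,q} = \blacket{q,q}$ needed to invoke the lemma; but since any positive element of a $C^*$-algebra is by definition self-adjoint, this is automatic.
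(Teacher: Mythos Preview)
Your proof is correct and follows exactly the same approach as the paper: both set $v=q$ and $c=\blacket{q,q}$ in Lemma~\ref{lem:inprodequiv}, using the idempotence $\blacket{q,q}^2=\blacket{q,q}$ that comes from the definition of normalized.
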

\begin{proof}
Since $\blacket{q,q}$ is a projection, $\blacket{q,q}\blacket{q,q}=\blacket{q,q}$ holds.
Therefore, letting $c=\blacket{q,q}$ and $v=q$ in Lemma~\ref{lem:inprodequiv} completes the proof of the corollary.
\end{proof}

Next, we show the space spanned by an orthonormal system is orthogonally complemented. 
\begin{lemma}\label{lem:orthcomp}
Assume $\mcl{A}=\mat$.
Let $\{q_t\}_{t\in\mcl{T}}$ be an orthonormal system of $\modu$ and let 
$\mcl{V}:=\overline{\{\sum_{t\in\mcl{F}}q_{t}c_t\mid\ \mcl{F}:\mbox{finite subset of }\mcl{T},\ c_t\in\mcl{A}\}}$.
Then, $\mcl{V}+\mcl{V}^{\perp}=\mcl{M}$ holds.
\end{lemma}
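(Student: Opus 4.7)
The plan is to produce, for each $w \in \modu$, an explicit orthogonal decomposition $w = Pw + (w - Pw)$, where $Pw$ is the natural ``Fourier series'' $\sum_{t\in\mcl{T}}q_t\blacket{q_t,w}$, lying in $\mcl{V}$, and $w-Pw$ lies in $\mcl{V}^{\perp}$. This mirrors the classical Hilbert space argument, but the $\mat$-valued inner product requires some care to handle sums over possibly uncountable index sets.

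First I would establish a Bessel-type inequality. For any finite $\mcl{F}\subseteq\mcl{T}$, set $Pw_{\mcl{F}}:=\sum_{t\in\mcl{F}}q_t\blacket{q_t,w}$. Expanding $\blacket{w-Pw_{\mcl{F}},w-Pw_{\mcl{F}}}\ge 0$ and using orthonormality together with the identity $\blacket{q_s,q_s}\blacket{q_s,w}=\blacket{q_s,w}$ (which follows from Corollary~\ref{cor:inprodequiv} by taking adjoints and applying self-adjointness of the idempotent $\blacket{q_s,q_s}$), the cross terms simplify to $\sum_{t\in\mcl{F}}\vert\blacket{q_t,w}\vert^2$, yielding
\begin{equation*}
\sum_{t\in\mcl{F}}\vert\blacket{q_t,w}\vert^2\le\blacket{w,w}\quad\text{in }\mat.
\end{equation*}

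Next I would exploit that $\mcl{A}=\mat$ is finite-dimensional. Taking the trace on both sides gives $\sum_{t\in\mcl{F}}\opn{tr}\vert\blacket{q_t,w}\vert^2\le\opn{tr}\blacket{w,w}<\infty$ uniformly in $\mcl{F}$, so at most countably many $\blacket{q_t,w}$ are nonzero. Enumerating these as $t_1,t_2,\ldots$, the partial sums $S_n:=\sum_{j=1}^n q_{t_j}\blacket{q_{t_j},w}$ satisfy $\blacket{S_n-S_m,S_n-S_m}=\sum_{j=m+1}^n\vert\blacket{q_{t_j},w}\vert^2$, whose norm is controlled by the tail of the convergent trace sum. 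Hence $\{S_n\}$ is Cauchy in $\modu$ and $Pw:=\lim_n S_n\in\mcl{V}$ is well-defined and independent of enumeration.

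Finally I would verify orthogonality. Continuity of the inner product and the computation above give $\blacket{q_s,Pw}=\blacket{q_s,w}$ for every $s\in\mcl{T}$, hence $\blacket{q_s,w-Pw}=0$. Since $\mcl{V}$ is the closure of finite $\mat$-linear combinations of the $q_s$, the $\mat$-linearity and continuity of $\blacket{\cdot,w-Pw}$ extend this to $\blacket{v,w-Pw}=0$ for every $v\in\mcl{V}$, so $w-Pw\in\mcl{V}^{\perp}$. The main obstacle is the convergence step for uncountable $\mcl{T}$: it relies on the fact that $\opn{tr}$ is a faithful finite trace on $\mat$, comparable to the order unit, so the $\alg$-valued Bessel inequality upgrades to a scalar summable one. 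This is precisely why the statement specializes to $\alg=\mat$; over a general $C^*$-algebra the analogous summation need not converge, matching the counterexamples in~\cite{lance95,landi09}.
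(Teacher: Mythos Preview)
Your proposal is correct and follows essentially the same route as the paper: establish a Bessel-type inequality from orthonormality and Corollary~\ref{cor:inprodequiv}, use the finite-dimensionality of $\mat$ to make $\sum_{t\in\mcl{T}}q_t\blacket{q_t,w}$ converge, and then check $w-Pw\in\mcl{V}^{\perp}$ by testing against the generators $q_s$. The only cosmetic difference is in the convergence step: the paper tests the matrix inequality against vectors $\xi\in\mathbb{C}^m$ to get weak (hence norm) convergence of $\sum_t\vert\blacket{q_t,w}\vert^2$ in $\mat$, whereas you apply $\opn{tr}$ to reduce to a scalar summable series and then bound $\Vert\cdot\Vert_{\mat}\le\opn{tr}(\cdot)$ on positive matrices---both arguments hinge on exactly the same finite-dimensional feature of $\mat$.
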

\begin{proof}
For $v\in\mcl{M}$, we show $\sum_{t\in\mcl{T}}q_t\blacket{q_t,v}$ converges with respect to $\Vert\cdot\Vert$ as follows:\@
Let $\mcl{T}_0$ be an arbitrary finite set that satisfies $\mcl{T}_0\subseteq\mcl{T}$.
Then, the following equalities hold:
\begin{align*}
&\Bblacket{\sum_{t\in\mcl{T}_0}q_t\blacket{q_t,v},\sum_{t\in\mcl{T}_0}q_t\blacket{q_t,v}}\\
&\qquad=\sum_{t\in\mcl{T}_0}\sum_{s\in\mcl{T}_0}\blacket{q_t,v}^*\blacket{q_t,q_s}\blacket{q_s,v}\nn\\
&\qquad =\sum_{t\in\mcl{T}_0}\blacket{q_t,v}^*\blacket{q_t,q_t}\blacket{q_t,v}\\
&\qquad=\sum_{t\in\mcl{T}_0}\blacket{q_t,v}^*\blacket{q_t,v},
\end{align*}
where the last equality is by Corollary~\ref{cor:inprodequiv}.
Therefore, the following equalities are deduced:
\begin{align*}
0&\le\Bblacket{v-\sum_{t\in\mcl{T}_0}q_t\blacket{q_t,v},v-\sum_{t\in\mcl{T}_0}q_t\blacket{q_t,v}}\\
&=\blacket{v,v}-\sum_{t\in\mcl{T}_0}\blacket{q_t,v}^*\blacket{q_t,v}.
\end{align*}
Let $c_t:=\blacket{q_t,v}^*\blacket{q_t,v}$.
Since $\blacket{v,v}-\sum_{t\in\mcl{T}_0}c_t$ is positive, for $\xi\in\mathbb{C}^n$, 
$\xi^*\blacket{v,v}\xi-\sum_{t\in\mcl{T}_0}\xi^*c_t\xi\ge 0$ holds.
Thus, $\sum_{t\in\mcl{T}}\xi^*c_t\xi$ converges, that is, there exists a limit $\alpha(\xi)\ge 0$ such that $\sum_{t\in\mcl{T}}\xi^*c_t\xi=\alpha(\xi)$.
Since there exists positive $c\in\mathbb{C}^{n\times n}$ such that $\alpha(\xi)=\xi^*c\xi$, 
$\sum_{t\in\mcl{T}}c_t=c$ holds with weak convergence.
For $\mathbb{C}^{n\times n}$, weak convergence is equal to the norm convergence in $\mat$.
Therefore, for $\epsilon>0$, there exists $\mcl{T}'\subseteq\mcl{T}$ such that the following equalities hold for arbitrary finite sets $\mcl{T}_0$ and $\mcl{T}_1$ that satisfy $\mcl{T}'\subseteq\mcl{T}_0$ and $\mcl{T}_1\subseteq\mcl{T}$:
\begin{align*}
&\bigg\Vert\sum_{t\in\mcl{T}_0\setminus\mcl{T}_1}c_t\bigg\Vert_{\mat}=\bigg\Vert\sum_{t\in\mcl{T}_0\setminus(\mcl{T}_0\bigcap\mcl{T}_1)}c_t\bigg\Vert_{\mat}\\
&\quad=\bigg\Vert\sum_{t\in\mcl{T}_0}c_t-\sum_{t\in\mcl{T}_0\bigcap\mcl{T}_1}c_t\bigg\Vert_{\mat}\le\epsilon.
\end{align*}
In the same manner, $\Vert\sum_{t\in\mcl{T}_1\setminus\mcl{T}_0}c_t\Vert\le\epsilon$ holds.
By these inequalities and an equality $c_t=\vert q_t\blacket{q_t,v}\vert^2$, the following inequalities and equalities are derived:
\begin{align*}
& \bigg\Vert\sum_{t\in\mcl{T}_0}q_t\blacket{q_t,v}-\sum_{t\in\mcl{T}_1}q_t\blacket{q_t,v}\bigg\Vert^2\\
&\le 2\bigg\Vert\sum_{t\in\mcl{T}_0\setminus\mcl{T}_1}q_t\blacket{q_t,v}\bigg\Vert^2+2\bigg\Vert\sum_{t\in\mcl{T}_0\setminus\mcl{T}_1}q_t\blacket{q_t,v}\bigg\Vert^2\nn\\
&=2\bigg\Vert\bigg\vert\sum_{t\in\mcl{T}_0\setminus\mcl{T}_1}q_t\blacket{q_t,v}\bigg\vert^2\bigg\Vert_{\mat}\\
&\qquad\qquad+2\bigg\Vert\bigg\vert\sum_{t\in\mcl{T}_0\setminus\mcl{T}_1}q_t\blacket{q_t,v}\bigg\vert^2\bigg\Vert_{\mat}\nn\\
&=2\bigg\Vert\sum_{t\in\mcl{T}_0\setminus\mcl{T}_1}c_t\bigg\Vert_{\mat}+2\bigg\Vert\sum_{t\in\mcl{T}_0\setminus\mcl{T}_1}c_t\bigg\Vert_{\mat}
\le 4\epsilon.
\end{align*}
Thus, $\sum_{t\in\mcl{T}}q_t\blacket{q_t,v}$ is a Cauchy net, and by the completeness of $\modu$, it converges with respect to $\Vert\cdot\Vert$.
Let $P:\mcl{M}\to\mcl{V}$ be a map $v\mapsto\sum_{t\in\mcl{T}}q_t\blacket{q_t,v}$.
Then, for $\sum_{t\in\mcl{F}}q_{t}c_t\in\mcl{V}$, the following equalities hold:
\begin{align*}
& \Bblacket{\sum_{t\in\mcl{F}}q_{t}c_t,v-Pv}\\
&\qquad=\sum_{t\in\mcl{F}}c_t^*\blacket{q_{t},v}-\sum_{s\in\mcl{F}}c_s^*\sum_{t\in\mcl{T}}\blacket{q_{s},q_t}\blacket{q_t,v}\nn\\
&\qquad=\sum_{s\in\mcl{F}}c_s^*\blacket{q_{s},v}-\sum_{s\in\mcl{F}}c_s^*\blacket{q_{s},v}=0,
\end{align*}
which imply $v-Pv\in\mcl{V}^{\perp}$.
As a result, for $v\in\mcl{M}$, $v=Pv+(v-Pv)$, $Pv\in\mcl{V}$ and $v-Pv\in\mcl{V}^{\perp}$ hold, which imply $\mcl{V}+\mcl{V}^{\perp}=\mcl{M}$.
\end{proof}

With Lemma~\ref{lem:orthcomp}, we now show the statement of Proposition~\ref{prop:orthonormal_existance} using Zorn's Lemma.
\begin{proof}[Proof of Proposition~\ref{prop:orthonormal_existance}]
Let $\mcl{S}$ be the set of all orthonormal systems of $\modu$.
We define the partial order on $\mcl{S}$ as the inclusion.
Let $\mcl{S}_0$ be an arbitrary totally ordered subset of $\mcl{S}$.
Then, $\bigcup_{\mcl{V}\in\mcl{S}_0}\mcl{V}$ is an upper bound of 
$\mcl{S}_0$.
Therefore, by Zorn's Lemma, there exists a maximal element, denoted by $\{q_t\}_{t\in\mcl{T}}$, in $\mcl{S}$.
Let $\mcl{V}_0:=\overline{\{\sum_{t\in\mcl{F}}q_{t}c_t\mid\ \mcl{F}:\mbox{finite subset of }\mcl{T},\ c_t\in\mat\}}$.
By Lemma~\ref{lem:orthcomp}, $\mcl{V}_0+\mcl{V}_0^{\perp}=\mcl{M}$ holds.
Thus, if $\mcl{V}_0\neq\mcl{M}$, $\mcl{V}_0^{\perp}\neq \{0\}$ and by Lemma~\ref{prop:normalized_property}, there exists $v\in\mcl{V}_0^{\perp}$ such that it is normalized, which contradicts the maximality of $\mcl{V}_0$.
As a result, $\mcl{V}_0=\mcl{M}$ holds.
\end{proof}

\section*{Proof of Proposition~\ref{prop:min_projection}}
By Lemma~\ref{lem:orthcomp}, $w\in\mcl{M}$ is decomposed into $w=w_1+w_2$, where  $w_1:=Pw\in\mcl{V}$ and $w_2:=w-w_1\in\mcl{V}^{\perp}$.
Let $v\in\mcl{V}$.
Since $w_1-v\in\mcl{V}$, $\blacket{w_2,w_1-v}=0$ holds.
Therefore, the following equalities hold:
\begin{equation}
 \vert w-v\vert^2=\vert w_2+(w_1-v)\vert^2=\vert w_2\vert^2+\vert w_1-v\vert^2\label{eq:deviation},
\end{equation}
which imply $\vert w-v\vert^2-\vert w-w_1\vert^2\ge 0$.
Since $v\in\mcl{V}$ is arbitrary, $w_1$ is a solution of $\min_{v\in\mcl{V}}\vert w-v\vert$.

Moreover, if there exists $w'\in\mcl{W}$ such that $\vert w-w_1\vert^2=\vert w-w'\vert^2$, letting $v=w'$ in Eq.~\eqref{eq:deviation} derives $\vert w-w'\vert^2=\vert w_2\vert^2+\vert w_1-w'\vert^2$, which implies $\vert w_1-w'\vert^2=0$.
As a result, $w_1=w'$ holds and the uniqueness of $w_1$ has been proved. 

\section*{Proof of Lemma~\ref{lem:pdk}}
For $x=[x_1,\ldots,x_m],y=[y_1,\ldots,y_m]\in\mcl{X}^m$, equalities  $[k(x,y)]_{i,j}=\tilde{k}(x_i,y_j)=\overline{\tilde{k}(y_j,x_i)}=[k(y,x)^*]_{i,j}$ hold for $i,j=1,\ldots,m$, which implies $k(x,y)=k(y,x)^*$.
Also, for $n\in\mathbb{N}$, $c_1,\ldots,c_n\in\mat$, $x_1,\ldots,x_n\in\mcl{X}^m$ and $d\in\mathbb{C}^m$, the following equality holds:
\begin{align*}
d^*\sum_{t,s=1}^nc_t^*k(x_t,y_s)c_sd=\sum_{t,s=1}^n\sum_{i,j=1}^m\overline{\tilde{d}_{t,i}}\tilde{k}(x_{t,i},y_{s,j})\tilde{d}_{s,j},
\end{align*}
where $\tilde{d}_t:=c_td$ and $\tilde{d}_{t,i}$ represents the $i$-the element of $\tilde{d}_t$.
Thus, by the positive definiteness of $\tilde{k}$, $d^*\big(\sum_{t,s=1}^nc_t^*k(x_t,y_s)c_s\big)d\ge 0$ holds.
This completes the proof of the proposition.

\section*{Proof of Proposition~\ref{prop:normalized_property}}
Let $\lambda_1\ge\ldots\ge\lambda_{m}\ge 0$ be the eigenvelues of $\blacket{\hat{q},\hat{q}}$, and $m':=\max\{j\mid\ \lambda_j>\epsilon^2\}$.
Since $\blacket{\hat{q},\hat{q}}$ is positive,
there exists an unitary matrix $c$ such that $\blacket{\hat{q},\hat{q}}=c^*\opn{diag}\{\lambda_1,\ldots,\lambda_{m}\}c$.
Also, since $\lambda_1=\Vert\hat{q}\Vert^2>\epsilon^2$, $m'\ge 1$ holds.
Let $\hat{b}:=c^*\opn{diag}\{1/\sqrt{\lambda_1},\ldots,1/\sqrt{\lambda_{m'}},0,\ldots,0\}c$. 
by the definition of $\hat{b}$, $\Vert \hat{b}\Vert_{\mat}=1/\sqrt{\lambda_{m'}}<1/\epsilon$ holds.
Also, the following equalities are derived:
\begin{align*}
\sblacket{\hat{q}\hat{b},\hat{q}\hat{b}}&=\hat{b}^*\blacket{\hat{q},\hat{q}}\hat{b}\nn\\
&=c\opn{diag}(1,\ldots,1,0,\ldots,0)c^*.
\end{align*}
Thus, $\sblacket{\hat{q}\hat{b},\hat{q}\hat{b}}$ is a nonzero orthogonal projection.

In addition, let $b:=c^*\opn{diag}\{\sqrt{\lambda_1},\ldots,\sqrt{\lambda_{m'}},0,\ldots,0\}c$.
Since $\hat{b}b=\opn{diag}(1,\ldots,1,0,\ldots,0)$, $\sblacket{\hat{q},\hat{q}\hat{b}b}=\sblacket{\hat{q}\hat{b}b,\hat{q}\hat{b}b}$ holds, and the following equalities are derived:
\begin{align*}
&\sblacket{\hat{q}-qb,\hat{q}-qb}\\
&=\sblacket{\hat{q}-\hat{q}\hat{b}b,\hat{q}-\hat{q}\hat{b}b}
=\sblacket{\hat{q},\hat{q}}-\sblacket{\hat{q}\hat{b}b,\hat{q}\hat{b}b}\nn\\
&=c\left(\opn{diag}(\lambda_1,\ldots,\lambda_{m})-\opn{diag}(\lambda_1,\ldots,\lambda_{m'},0,\ldots,0)\right)c^*.
\end{align*}
Thus, for $m'\le m-1$, $\Vert \hat{q}-q\hat{b}\Vert=\sqrt{\lambda_{m'+1}}\le\epsilon$ holds, and for $m'=m$, $\Vert \hat{q}-q\hat{b}\Vert=0$ holds, which completes the proof of the proposition.
\section*{Proof of Proposition~\ref{prop:gram-schmidt}}
By Proposition~\ref{prop:normalized_property}, $q_t$ is normalized, and for $\epsilon\ge 0$, there exists $b_t\in\mat$ such that $\Vert \hat{q}_t-q_tb_t\Vert\le \epsilon$.
Therefore, by the definition of $\hat{q}_t$, 
$\Vert w_t-v_t\Vert\le\epsilon$ holds, where $v_t$ is a vector in the space spanned by $\{q_t\}_{t=0}^{\infty}$ which is defined as  $v_t=\sum_{s=1}^{t-1}q_s\blacket{q_s,w_t}-q_tb_t$.
This means the $\epsilon$-neighborhood of the space spanned by $\{q_t\}_{t=1}^{\infty}$ contains $\{w_t\}_{t=1}^{\infty}$.
Next, we show the orthogonality of $\{q_t\}_{t=1}^{\infty}$.
Assume $q_1,\ldots,q_{t-1}$ are orthgonal to each other.
For $s<t$, the following equalities are deduced by Corollary~\ref{cor:inprodequiv}:
\begin{align*}
&\blacket{q_t,q_s}=\hat{b}_t^*\blacket{\hat{q}_t,q_s}\\
&\quad=\hat{b}_t^*\Bblacket{w_t-\sum_{l=1}^{t-1}q_l\blacket{q_l,w_t},q_s}\\
&\quad=\hat{b}_t^*\left(\blacket{w_t,q_s}-\blacket{q_s\blacket{q_s,w_t},q_s}\right)\\
&\quad=\hat{b}_t^*\left(\blacket{w_t,q_s}-\blacket{w_t,q_s}\right)=0.
\end{align*}
Therefore, $q_1,\ldots,q_t$ are also orthogonal to each other, which completes the proof of the proposition.

\section*{Proof of Corollary~\ref{prop:qr}}
The equality $Q=W\mathbf{R}_{\opn{inv}}$ is derived directly from scheme~\eqref{eq:gram-schmidt}.
In addition, by scheme~\eqref{eq:gram-schmidt}, the following equalities hold for $t=1,\ldots,n$:
\begin{align*}
w_t&=\sum_{s=1}^{t-1}q_s\blacket{q_s,w_t}+\hat{q}_t\\
&=\sum_{s=1}^{t-1}q_s\blacket{q_s,w_t}+q_tb_t+\hat{q}_t-q_tb_t\\
&=Q\mathbf{r}_t+\hat{q}_t-q_tb_t,
\end{align*}
where $\mathbf{r}_t\in(\mat)^n$ is the $t$-th column of $\mathbf{R}$ as an $n\times n$ $\mat$-valued matrix.
Therefore, by Proposition~\ref{prop:normalized_property}, $\Vert w_t-Q\mathbf{r}_t\Vert=\Vert \hat{q}_t-q_tb_t\Vert\le\epsilon$ holds for $t=1,\ldots,n$, which implies $\Vert W-Q\mathbf{R}\Vert\le \epsilon$. 
Here, $\Vert W\Vert$ for a $\mat$-linear map $W:\alg^n\to\modu$ is defined as $\Vert W\Vert:=\sup_{\Vert v\Vert_{(\mat)^n}=1}\Vert Wv\Vert$.

\section*{Proof of Proposition~\ref{prop:rank1proj}}
First, we show $\{p_t\}_{t=1}^l$ is an orthonormal system and $\blacket{q_s,q_s}_k$ is rank-one.
The following equalities hold:
\begin{align*}
&\blacket{p_s,p_l}_k\\
&=\Bblacket{\frac1{\sqrt{\sigma_s}}W[\mathrm{v}_s,0,\ldots,0],\frac1{\sqrt{\sigma_l}}W[\mathrm{v}_l,0,\ldots,0]}_k\\
&=\frac1{\sqrt{\sigma_s\sigma_l}}\blacket{[\mathrm{v}_s,0,\ldots,0],W^*W[\mathrm{v}_l,0,\ldots,0]}_k\\
&=\frac1{\sqrt{\sigma_s\sigma_l}}\blacket{[\mathrm{v}_s,0,\ldots,0],[\sigma_l\mathrm{v}_l,0,\ldots,0]}_k.
\end{align*} 
Therefore, $\blacket{p_s,p_l}_k$ is rank one projection for $s=l$ and $\blacket{p_s,p_l}_k=0$ for $s\neq l$.

Next, we show the space spanned by $\{w_t\}_{t=1}^n$ is contained in the space spanned by $\{p_t\}_{t=1}^l$.
Let $w:=\sum_{t=1}^{n}w_tc_t$ for $c_t\in\mat$.
Let $d_t\in\mat$ be a matrix whose first row is $\mathrm{v}_t^*\mathbf{c}$, where 
$\mathbf{c}=[c_1^*,\ldots,c_n^*]^*$.
Then, the following equalities hold:
\begin{align*}
&\Bblacket{w-\sum_{t=1}^lW\mathbf{v}_td_t,w-\sum_{t=1}^lW\mathbf{v}_td_t}_k\\
&=\Bblacket{W\mathbf{c}-\sum_{t=1}^lW\mathbf{v}_td_t,W\mathbf{c}-\sum_{t=1}^lW\mathbf{v}_td_t}_k\\
&=\Bblacket{\mathbf{c}-\sum_{t=1}^l\mathbf{v}_td_t,W^*W\bigg(\mathbf{c}-\sum_{t=1}^l\mathbf{v}_td_t\bigg)}_k\\
&=\Bblacket{\mathbf{c}-\sum_{t=1}^l\mathbf{v}_td_t,\mathbf{V\Sigma V^*}\bigg(\mathbf{c}-\sum_{t=1}^l\mathbf{v}_td_t\bigg)}_k\\
&=\bigg\langle\mathbf{V}^*\mathbf{c}-\sum_{t=1}^l[\mathrm{e}_t,0,\ldots,0]d_t,\\
&\qquad\mathbf{\Sigma}\bigg(\mathbf{V}^*\mathbf{c}-\sum_{t=1}^l[\mathrm{e}_t,0,\ldots,0]d_t\bigg)\bigg\rangle_k\\
&=0,
\end{align*}
where $\mathbf{v}_t:=[\mathrm{v}_t,0,\ldots,0]$ and $\mathrm{e}_t$ is the vector in $\mathbb{C}^{l}$ whose $s$th element is $1$ for $s=t$ and $0$ for $s\neq t$.
Therefore, $w=\sum_{t=1}^lW\mathbf{v}_td_t$ holds and $w$ is contained in the space spanned by $\{p_s\}_{s=1}^l$, which completes the proof of the proposition.

\section*{Proof of Proposition~\ref{prop:pca_generalization}}
If $m=1$, $\phi(x)$ is equal to $\tilde{\phi}(x)$, 
and $p_s$ is reduced to $1/\sqrt{\tilde{\sigma}_s}\tilde{W}\tilde{\mathrm{v}}_s$, with the notations in Appendix~\ref{ap:pca_review}, which is equal to the principal axis $\tilde{p}_s$ defined in Eq.~\eqref{eq:p_axis_rkhs}.

\section*{Proof of Theorem~\ref{prop:pca_ineq}}
First, we consider a maximization problem that is equivalent to minimization problem~\eqref{eq:pca_max}.
The following equalities hold:
\begin{align*}
&\opn{tr}\bigg(\sum_{t=1}^{n} \bigg\vert w_t-\sum_{j=1}^s\hat{p}_j\blacket{\hat{p}_j,w_t}_k\bigg\vert_k^2\bigg)\\
&=\opn{tr}\bigg(\sum_{t=1}^{n} \blacket{w_t,w_t}_k-\sum_{j=1}^s\blacket{w_t,\hat{p}_j}_k\blacket{\hat{p}_j,w_t}_k\bigg)\\
&=\opn{tr}\bigg(\sum_{t=1}^{n} \blacket{w_t,w_t}_k\bigg)\\
&\qquad-\opn{tr}\bigg(\sum_{t=1}^{n}\sum_{j=1}^s\blacket{w_t,\hat{p}_j}_k\blacket{\hat{p}_j,w_t}_k\bigg),
\end{align*}
which imply maximization problem~\eqref{eq:pca_max} is equivalent to the maximization problem of $\opn{tr}\left(\sum_{t=1}^{n}\sum_{j=1}^s\blacket{w_t,\hat{p}_j}_k\blacket{\hat{p}_j,w_t}_k\right)$.
Let $W=Q\mathbf{R}$ be the QR decomposition of $W$ and $\mathbf{C}$ be a $mn\times l$ complex-valued matrix that satisfies $Q^*Q=\mathbf{CC}^*$ and $\mathbf{C}^*\mathbf{C}=\mathbf{I}$.
Let $\mathbf{U}:=\mathbf{C^*RV}\mathbf{\Sigma}^{-1/2}$.
Then, $\mathbf{U\Sigma U^*}=\mathbf{C^*RR^*C}$ holds, and the following equalities hold:
\begin{align*}
 W\mathbf{V}\mathbf{\Sigma}^{-1/2}&=QQ^*Q\mathbf{RV}\mathbf{\Sigma}^{-1/2}\\
 &=Q\mathbf{CC^*RV}\mathbf{\Sigma}^{-1/2}=Q\mathbf{CU}.
\end{align*}
Therefore, $p_s$, which is defined as $1/\sqrt{\sigma_s}W[\mathrm{v}_s,0,\ldots,0]$, is equal to $Q\mathbf{C}[\mathrm{u}_s,0,\ldots,0]$, where $\mathrm{u}_s$ is the $s$-th column of $\mathbf{U}$.
Let $\hat{\mathbf{p}}_j:=Q^*\hat{p}_j$ for normalized $\hat{p}_j\in\{\sum_{t=1}^{j-1}p_tc_t\mid\ c_t\in\mat\}^{\perp_k}$ where the rank of $\blacket{\hat{p}_j,\hat{p}_j}_k$ is one.
By the equality $Q^*QQ^*=Q^*$, and by Proposition~\ref{prop:rank1proj},
\begin{align*}
 \hat{\mathbf{p}}_j&=Q^*\hat{p}_j=Q^*QQ^*\hat{p}_j=Q^*Q\mathbf{C}\sum_{t=1}^l[\mathrm{u}_t,0\ldots,0]c_{t,j}\\
 &=\mathbf{C}\sum_{t=1}^l[\mathrm{u}_t,0\ldots,0]c_{t,j},  
\end{align*}
for some $c_{t,j}\in\mat$ holds.
By the orthogonality of $\hat{p}_j$, in fact, the first $j-1$ terms of the sum $\sum_{t=1}^l[\mathrm{u}_t,0\ldots,0]c_{t,j}$ is equal to $0$.
Indeed, the following equalities hold:
\begin{align*}
&\blacket{\hat{p}_j,p_t}_k
=\blacket{\hat{p}_j,Q\mathbf{C}[\mathrm{u}_t,0,\ldots,0]}_k\\
&=\blacket{Q^*\hat{p}_j,\mathbf{C}[\mathrm{u}_t,0,\ldots,0]}_k\\
&=\Bblacket{\mathbf{C}\sum_{s=1}^l[\mathrm{u}_s,0\ldots,0]c_{t,j},\mathbf{C}[\mathrm{u}_t,0,\ldots,0]}_k\\
&=\blacket{\mathbf{C}[\mathrm{u}_t,0\ldots,0]c_{t,j},\mathbf{C}[\mathrm{u}_t,0,\ldots,0]}_k\\
&=c_{t,j}^*\opn{diag}\{1,0,\ldots,0\}.
\end{align*}
Since $\blacket{\hat{p}_j,p_t}_k=0$ for $t=1,\ldots,j-1$, the first row of $c_{t,j}$ is equal to $0$ for $t=1,\ldots,j-1$, which implies $\mathbf{C}\sum_{t=1}^l[\mathrm{u}_t,0\ldots,0]c_{t,j}=\mathbf{C}\sum_{t=j}^l[\mathrm{u}_t,0\ldots,0]c_{t,j}$.
Moreover, since $\hat{\mathbf{p}}_j^*\hat{\mathbf{p}}_j=\blacket{QQ^*\hat{p}_j,\hat{p}_j}_k=\blacket{QQ^*\hat{p}_j,\hat{p}_j\blacket{\hat{p}_j,\hat{p}_j}_k}_k=\blacket{QQ^*\hat{p}_j,\hat{p}_j}_k\blacket{\hat{p}_j,\hat{p}_j}_k$ holds and $\blacket{\hat{p}_j,\hat{p}_j}_k$ is rank-one, $\hat{\mathbf{p}}_j^*\hat{\mathbf{p}}_j$ is rank-one, 
and by Cauchy-Schwarz inequality, $\Vert \hat{\mathbf{p}}_j^*\hat{\mathbf{p}}_j\Vert_{\mat}=\Vert\blacket{QQ^*\hat{p}_j,\hat{p}_j}_k\Vert_{\mat}\le 1$ holds.
Thus, $\hat{\mathbf{p}}_j\hat{\mathbf{p}}_j^*$ is also rank-one and $\Vert \hat{\mathbf{p}}_j\hat{\mathbf{p}}_j^*\Vert_{(\mat)^n}\le 1$ holds.
As a result, there exists $\hat{\mathrm{p}}_j\in\mathbb{C}^{mn}$, which is the linear combination of $\mathbf{C}\mathrm{u}_j,\ldots,\mathbf{C}\mathrm{u}_l$, and satisfies $\hat{\mathrm{p}}_j^*\hat{\mathrm{p}}_j\le 1$ and $\hat{\mathbf{p}}_j\hat{\mathbf{p}}_j^*=\hat{\mathrm{p}}_j\hat{\mathrm{p}}_j^*$.
Let $\hat{\mathrm{p}}_j=\mathbf{C}\sum_{t=j}^l\mathrm{u}_tc_{t,j}$ for some $c_{t,j}\in\mathbb{C}$.
Since $\hat{\mathrm{p}}_j^*\hat{\mathrm{p}}_j\le 1$, $\sum_{t=j}^l\vert c_{t,j}\vert^2\le 1$ holds.
Also, since $\mathbf{C}^*\mathbf{R}\mathbf{R}^*\mathbf{C}=\mathbf{U\Sigma}\mathbf{U}^*$, $Q^*Q\mathbf{R}\mathbf{R}^*Q^*Q=\mathbf{CU\Sigma}\mathbf{U}^*\mathbf{C}^*$ holds.
Thus, the following equalities are derived:
\begin{align*}
&\sum_{t=1}^{n} \sum_{j=1}^s\opn{tr}\big(\blacket{w_t,\hat{p}_j}_k\blacket{\hat{p}_j,w_t}_k\big)\\
&=\sum_{t=1}^{n} \sum_{j=1}^s\opn{tr}\big(\blacket{QQ^*w_t,\hat{p}_j}_k\blacket{\hat{p}_j,QQ^*w_t}_k\big)\\
&=\sum_{t=1}^{n} \sum_{j=1}^s\opn{tr}\big(\mathbf{e}_t^*W^*Q\hat{\mathrm{p}}_j\hat{\mathrm{p}}_j^*Q^*W\mathbf{e}_t\big)\\
&=\sum_{t=0}^{n} \sum_{j=1}^s\hat{\mathrm{p}}_j^*Q^*W\mathbf{e}_t\mathbf{e}_t^*W^*Q\hat{\mathrm{p}}_j\\
&=\sum_{j=1}^s\hat{\mathrm{p}}_j^*Q^*WW^*Q\hat{\mathrm{p}}_j=\sum_{j=1}^s\hat{\mathrm{p}}_j^*Q^*Q\mathbf{R}\mathbf{R}^*Q^*Q\hat{\mathrm{p}}_j\\
&=\sum_{j=1}^s\hat{\mathrm{p}}_j^*\mathbf{CU\Sigma}\mathbf{U}^*\mathbf{C}^*\hat{\mathrm{p}}_j\\
&=\sum_{j=1}^s\bigg(\sum_{t=j}^l\overline{c_{t,j}}\mathrm{e}_t^*\bigg)\mathbf{\Sigma}\bigg(\sum_{t=j}^lc_{t,j}\mathrm{e}_t\bigg)\\
&=\sum_{j=1}^s\sum_{t=j}^l\vert c_{t,j}\vert^2\sigma_t\le \sum_{j=1}^s\sigma_j,
\end{align*}
where $\mathbf{e}_t$ is the vector in $(\mat)^m$ whose $s$-th element is the identity for $s=t$ and $0$ for $s\neq t$.
The last inequality becomes the equality if $c_{j,j}=1$ and $c_{t,j}=0$ for $t\neq j$, which completes the proof of the proposition.

\section*{Proof of Theorem~\ref{prop:min_sol}}
Let $\hat{K}_0\in\mcl{L}(\mcl{W}_T)$ be a linear operator that satisfies $\hat{K}_0\phi(x_t)=Q_TQ_T^*\phi(x_{t+1})$ for $t=0,\ldots,T-1$.
$\hat{K}_0$ is well-defined because $\{\phi(x_t)\}_{t=0}^{T-1}$ is linearly independent.
Since $\epsilon=0$, by Corollary~\ref{cor:gram-schmidt}, $\mcl{W}_T$ is equal to the space spanned by $\{q_t\}_{t=0}^{T-1}$.
Thus, by proposition~\ref{prop:qr}, the projection of a vector $u\in\modu_k$ onto $\mcl{W}_T$ is represented as $Q_TQ_T^*u$, and $\vert u-v\vert_k-\vert u-Q_TQ_T^*u\vert_k\ge 0$ holds for any $v\in\mcl{W}_T$.
For simplicity, in the following, we denote $c-d\ge 0$ as $c\ge d$ or $d\le c$ for $c,d\in\mat$.
Therefore, for any $\hat{K}\in\mcl{L}(\mcl{W}_T)$, the following inequalities hold:
\begin{align*}
&\sum_{t=0}^{T-1}\big\vert \hat{K}_0\phi(x_t)-\phi(x_{t+1})\big\vert_k^2\\
&\qquad=\sum_{t=0}^{T-1}\vert Q_TQ_T^*\phi(x_{t+1})-\phi(x_{t+1})\vert_k^2\\
&\qquad\le\sum_{t=0}^{T-1}\big\vert \hat{K}\phi(x_t)-\phi(x_{t+1})\big\vert_k^2,
\end{align*}
which implies $\hat{K}_0$ is a solution of minimization problem~\eqref{eq:min_krylov}.

Assume $\hat{K}_1\in\mcl{L}(\mcl{W}_T)$ satisfies $\sum_{t=0}^{T-1}\vert \hat{K}_1\phi(x_t)-\phi(x_{t+1})\vert_k^2=\sum_{t=0}^{T-1}\vert \hat{K}_0\phi(x_t)-\phi(x_{t+1})\vert_k^2$.
Since $\hat{K}_0\phi(x_t)=Q_TQ_T^*\phi(x_{t+1})=\phi(x_{t+1})$ for $t=0,\ldots,T-2$, $\sum_{t=0}^{T-1}\vert \hat{K}_0\phi(x_t)-\phi(x_{t+1})\vert_k^2=\vert Q_TQ_T^*\phi(x_{T})-\phi(x_{T})\vert_k^2$ holds.
Thus, if there exists $t\in\{0,\ldots,T-2\}$ such that $\vert \hat{K}_1\phi(x_t)-\phi(x_{t+1})\vert_k^2=c$ for some positive $c\neq 0$, then
\begin{align*}
&\vert Q_TQ_T^*\phi(x_{T})-\phi(x_{T})\vert_k^2\\
&\qquad=\sum_{t=0}^{T-1}\vert \hat{K}_0\phi(x_t)-\phi(x_{t+1})\vert_k^2\\
&\qquad=\sum_{t=0}^{T-1}\vert \hat{K}_1\phi(x_t)-\phi(x_{t+1})\vert_k^2\\
&\qquad\ge c+\vert \hat{K}_1\phi(x_{T-1})-\phi(x_{T})\vert_k^2,
\end{align*}
which contradicts Proposition~\ref{prop:min_projection}.
Therefore, $\vert \hat{K}_1\phi(x_t)-\phi(x_{t+1})\vert_k^2=0$ for $t=0,\ldots,T-2$ holds, which implies $\hat{K}_1\phi(x_t)=\phi(x_{t+1})=Q_TQ_T^*\phi(x_{t+1})$ for $t=0,\ldots,T-2$.
Also, 
$\vert \hat{K}_1\phi(x_{T-1})-\phi(x_{T})\vert_k^2=\vert Q_TQ_T^*\phi(x_{T})-\phi(x_{T})\vert_k^2$ holds,
and by the uniqueness of $Q_TQ_T^*\phi(x_{T})$, the relation $\hat{K}_1\phi(x_{T-1})=Q_TQ_T^*\phi(x_{T})$ is derived.
As a result, $\hat{K}_1\phi(x_t)=Q_TQ_T^*\phi(x_{t+1})$ is derived for $t=0,\ldots,T-1$, and thus, $\hat{K}_1=\hat{K}_0$ holds, which implies $\hat{K}_0$ is the unique solution of minimization problem~\eqref{eq:min_krylov}.

In addition, by the definition of $K$, $KW_T=[\phi(x_1),\ldots,\phi(x_{T})]$ holds.
As a result, for $w\in\mcl{W}_T$, $\hat{K}_0$ satisfies the following equalities:
\begin{align*}
&\hat{K}_0w=\hat{K}_0Q_TQ_T^*w=\hat{K}_0W_T\mathbf{R}_{\opn{inv},T}Q_T^*w\nn\\
&=Q_TQ_T^*[\phi(x_1),\ldots,\phi(x_{T})]\mathbf{R}_{\opn{inv},T}Q_T^*w\nn\\
&=Q_TQ_T^*KQ_TQ_T^*w,
\end{align*}
which imply $\hat{K}_0=Q_TQ_T^*KQ_TQ_T^*$ and $Q_T^*KQ_T=Q_T^*[\phi(x_1),\ldots,\phi(x_{T})]\mathbf{R}_{\opn{inv},T}$.

\section*{Proof of Proposition~\ref{prop:pf_conv}}
First, we show $\{q_t\}_{t=0}^{\infty}$ is an orthnormal basis of $\modu_k$.
For $v\in\mcl{M}_k$, there exist $v_i\in\mcl{M}_{k,0}\ (i=1,2,\ldots)$ such that $v=\lim_{i\to\infty}v_i$.
We represent $v_i$ as $v_i=\sum_{t=1}^{n_i}\phi(x_{t,i})c_{t,i}$ with some $x_{t,i}\in\mcl{X}^m$ and $c_{t,i}\in\mat$. Since $\mcl{Y}$ is dense in $\mcl{X}^m$, there exists $x_{t,i,j}\in\mcl{Y}$ such that $\lim_{j\to\infty}x_{t,i,j}=x_{t,i}$.
Since $\tilde{k}$ is continuous, $k$ is also continuous.
Thus, $\phi:\mcl{X}^m\to\mcl{M}_k$ is also continuous with respect to $\Vert\cdot\Vert_k$.
Therefore, $v=\lim_{i\to\infty}\lim_{j\to\infty}v_{i,j}$ holds, where $v_{i,j}:=\sum_{t=1}^{n_i}\phi(x_{t,i,j})c_{t,i}$, which implies 
the space spanned by $\{\phi(x_t)\}_{t=1}^{\infty}$ is dense in $\modu_k$.
As a result, by Corollary~\ref{cor:gram-schmidt}, $\{q_t\}_{t=1}^{\infty}$ is an orthonormal basis of $\modu_k$.

Next, we show $Q_TQ_T^*v$ for $v\in\modu_k$ converges to $v$ as $T\to\infty$.
Let $\mcl{V}:=\overline{\{\sum_{t=0}^{n}q_tc_t\mid\ n\in\mathbb{N},\ c_t\in\mat\}}$.
Since $\{q_t\}_{t=1}^{\infty}$ is an orthonormal basis of $\modu_k$, $\mcl{V}$ is equal to $\modu_k$.
By the proof of Lemma~\ref{lem:orthcomp}, for $v\in\mcl{M}_k$, $\lim_{T\to\infty}Q_TQ_T^*v=\sum_{t=0}^{\infty}q_t\blacket{q_t,v}_k$ exists and $\sum_{t=0}^{\infty}q_t\blacket{q_t,v}_k-v\in\mcl{V}^{\perp_k}$ holds.
Since $\mcl{V}^{\perp_k}=\modu_k^{\perp_k}=\{0\}$, $\lim_{T\to\infty}Q_TQ_T^*v=v$ holds.

As a result, if $K$ is bounded, the following inequalities are derived for $v\in\modu_k$:
\begin{align*}
&\Vert Kv-Q_TQ_T^*KQ_TQ_T^*v\Vert_k\\
&\le \Vert Kv-Q_TQ_T^*Kv\Vert_k\\
&\qquad\qquad+\Vert Q_TQ_T^*Kv-Q_TQ_T^*KQ_TQ_T^*v\Vert_k\\
&\le \Vert Kv-Q_TQ_T^*Kv\Vert_k+\Vert Q_TQ_T^*K\Vert_k\Vert v-Q_TQ_T^*v\Vert_k,
\end{align*}
and it converges to $0$ as $T\to\infty$.
Therefore, $Q_TQ_T^*KQ_TQ_T^*$ converges strongly to $K$.

\section*{Proof of Proposition~\ref{prop:abnormality}}
Since $Q_T{\mathbf{K}}_TQ_T^*\phi(x_{S-1})=\sum_{t=0}^{T-1}\phi(x_t)c_t$, the $(j,j)$ element of the $\alg$-valued inner product between $\phi(x_S)$ and $Q_T{\mathbf{K}}_TQ_T^*\phi(x_{S-1})$ in $\modu_k$ is represented as $\sum_{t=0}^{T-1}\sum_{i=1}^{m}\tilde{k}(x_{S,j},x_{t,i})(c_t)_{i,j}$, which 
is equal to the inner product of $\sum_{t=0}^{T-1}\sum_{i=1}^{m}(c_t)_{i,j}\tilde{\phi}(x_{t,i})$ and $\tilde{\phi}(x_{S,j})$ in $\hil_{\tilde{k}}$.
Moreover, the $(j,j)$ element of prediction error~\eqref{eq:error_rkhm} is represented as 
\begin{align*}
&\tilde{k}(x_{S,j},x_{S,j})-\mu_j-\overline{\mu}_j\\
&\qquad+\sum_{s,t=0}^{T-1}\sum_{l,i=1}^m\overline{(c_s)}_{l,j}\tilde{k}(x_{s,l},x_{t,i})(c_t)_{i,j}\\
&=\bigg\Vert \tilde{\phi}(x_{S,j})-\sum_{t=0}^{T-1}\sum_{i=1}^n(c_t)_{i,j}\tilde{\phi}(x_{t,i})\bigg\Vert_{\tilde{k}}^2,
\end{align*}
where $\mu_j=\sum_{t=0}^{T-1}\sum_{i=1}^{m}\tilde{k}(x_{S,j},x_{t,i})(c_t)_{i,j}$.
This completes the proof of the proposition.

\section*{Proof of Proposition~\ref{prop:eigsum}}
Since $Q_T^*Q_T{\mathbf{K}}_T=Q_T^*Q_TQ_T^*KQ_T={\mathbf{K}}_T$, the following equalities holds:
\begin{align*}
&\Bblacket{Q_T\mathbf{K}_T^sQ_T^*\phi(x_0),Q_T\mathbf{K}_T^sQ_T^*\phi(x_0)}_k\\
&\quad=\Bblacket{Q_T\mathbf{K}_T^s\sum_{t=1}^{mT}\mathbf{v}_tc_t,Q_T\mathbf{K}_T^s\sum_{t=1}^{mT}\mathbf{v}_tc_t}_k\nn\\
&\quad=\Bblacket{\mathbf{K}_T^s\sum_{t=1}^{mT}\mathbf{v}_tc_t,\mathbf{K}_T^s\sum_{t=1}^{mT}\mathbf{v}_tc_t}_{(\mat)^T}\nn\\
&\quad=\sum_{t,l=1}^{nT}\blacket{\mathbf{v}_ta_t^sc_t,\mathbf{v}_{l}a_{l}^sc_{l}}_{(\mat)^T}\\
&\quad=\sum_{t,l=1}^{nT}c_t^*(a_t^*)^s\blacket{\mathbf{v}_t,\mathbf{v}_{l}}_{(\mat)^T}a_{l}^sc_{l}.
\end{align*}
By the definitions of $a_t$ and $\mathbf{v}_t$, $(a_t^*)^s\blacket{\mathbf{v}_t,\mathbf{v}_{l}}a_{l}^s=\opn{diag}\{\overline{{\lambda}}_t^s\blacket{\mathrm{v}_t,\mathrm{v}_l}{\lambda}_l^s,0\ldots,0\}$ holds.
Therefore, terms $c_t^*(a_t^*)^s\blacket{\mathbf{v}_t,\mathbf{v}_{t}}a_{t}^sc_t$ with ${\lambda}_t$ that satisfy $\vert {\lambda}_t\vert=1$ in Eq.~\eqref{eq:eigsum} are invariant with respect to $s$.

\section{Pseudo-codes}\label{ap:psudocode}
We provide pseudo-codes of QR decomposition described in Section~\ref{sec:gram-schmidt} and kernel PCA described in Section~\ref{sec:pca}, respectively.
For a matrix $\mathbf{G}$, the $(i,j)$-element of $\mathbf{G}$ is denoted as $g_{i,j}$ and the $j$-th column of $\mathbf{G}$ is denoted as $\mathbf{g}_j$.
\begin{algorithm}[H]
\caption{QR decomposition in RKHMs}\label{al:qr}
 \begin{algorithmic}[1]
 \REQUIRE Samples $x_1,\ldots,x_n$, $\epsilon\ge 0$
 \ENSURE $\mathbf{R}$, $\mathbf{R}_{\opn{inv}}$
  \FOR{$t=1,\ldots,n$}
  \FOR{$i=1,\ldots,n$}
  \IF{$i<t$}
  \STATE $r_{i,t}=k(x_i,x_t)-\hat{b}_{i,i}\sum_{j=1}^{i-1}r_{j,i}^*r_{j,t}$
  \ELSIF{$i=t$}
  \STATE $d=k(x_t,x_t)-\sum_{j=1}^{t-1} r_{j,t}^*r_{j,t}$
  \STATE Compute the eigenvalue decomposition of $d$, $u\opn{diag}\{\lambda_1,\ldots,\lambda_m\}u^*$ where $\lambda_1\ge\ldots\ge \lambda_m$
  \IF{$\lambda_1>\epsilon^2$}
  \STATE $r_{t,t}=u\opn{diag}\{\sqrt{\lambda_1},\ldots,\sqrt{\lambda_{m'}},0,\ldots,0\}u^*$ for $m'$ which satisfies $\lambda_{m'}>\epsilon^2$ and $\lambda_{m'+1}\le\epsilon^2$
  \ELSE
  \STATE $r_{t,t}=0$
  \ENDIF
  \STATE $b_{t,t}=r_{t,t}$
  \STATE $\hat{b}_{t,t}=u\opn{diag}\{1/\sqrt{\lambda_1},\ldots,1/\sqrt{\lambda_{m'}},0,\ldots,0\}u^*$
  \ELSE
  \STATE $r_{i,t}=0$
  \ENDIF
  \ENDFOR
  \ENDFOR
  \STATE $\mathbf{R}_{\opn{inv}}=\hat{\mathbf{B}}(\mathbf{I}+(\mathbf{R}-\mathbf{B})\hat{\mathbf{B}})^{-1}$
 \end{algorithmic}
\end{algorithm}

\begin{algorithm}[H]
\caption{Kernel PCA with RKHMs}\label{al:pca}
 \begin{algorithmic}[1]
 \REQUIRE Samples $x_1,\ldots,x_n$
 \ENSURE The coefficients of the $s$-th principal components of $\phi(x_1),\ldots,\phi(x_n)$, $c_1,\ldots,c_n$
 \FOR{$s=1,\ldots,n$}
 \FOR{$t=1,\ldots,n$}
 \STATE $g_{s,t}=k(x_s,x_t)$
 \ENDFOR
 \ENDFOR
 \STATE Compute the eigenvalue decomposition of $\mathbf{G}$, $\mathbf{V}\opn{diag}\{\sigma_1,\ldots,\sigma_{l}\}\mathbf{V}^*$ where $\sigma_1\ge\ldots\ge \sigma_{l}>0$
 \FOR{$t=1,\ldots,n$}
 \STATE $c_t=1/\sqrt{\sigma_s}[\mathrm{v}_s,0,\ldots,0]^*\mathbf{g}_t$
 \ENDFOR
 \end{algorithmic}
\end{algorithm}



\begin{thebibliography}{29}
\providecommand{\natexlab}[1]{#1}
\providecommand{\url}[1]{\texttt{#1}}
\expandafter\ifx\csname urlstyle\endcsname\relax
  \providecommand{\doi}[1]{doi: #1}\else
  \providecommand{\doi}{doi: \begingroup \urlstyle{rm}\Url}\fi

\bibitem[\'{A}lvarez et~al.(2011)\'{A}lvarez, Rosasco, and
  Lawrence]{mauricio11}
\'{A}lvarez, M., Rosasco, L., and Lawrence, N.
\newblock Kernels for vector-valued functions: A review.
\newblock \emph{Foundations and Trends in Machine Learning}, 4, 06 2011.
\newblock \doi{10.1561/2200000036}.

\bibitem[Baki\'{c} \& Gulja\v{s}(2001)Baki\'{c} and Gulja\v{s}]{bakic01}
Baki\'{c}, D. and Gulja\v{s}, B.
\newblock Operators on {H}ilbert {$H^*$}-modules.
\newblock \emph{Journal of Operator Theory}, 46:\penalty0 123--137, 2001.

\bibitem[Budi\v{s}i\'{c} et~al.(2012)Budi\v{s}i\'{c}, Mohr, and
  Mezi\'{c}]{mezic12}
Budi\v{s}i\'{c}, M., Mohr, R., and Mezi\'{c}, I.
\newblock Applied {K}oopmanism.
\newblock \emph{Chaos (Woodbury, N.Y.)}, 22:\penalty0 047510, 2012.
\newblock \doi{10.1063/1.4772195}.

\bibitem[Cnops(1992)]{cnops92}
Cnops, J.
\newblock A {G}ram-{S}chmidt method in {H}ilbert modules.
\newblock \emph{Clifford Algebras and their Applications in Mathematical
  Physics}, 47:\penalty0 193--203, 1992.

\bibitem[{\v{C}rnjari\'{c}-\v{Z}ic} et~al.(2017){\v{C}rnjari\'{c}-\v{Z}ic},
  Ma\'{c}e\v{s}i\'{c}, and Mezi\'{c}]{mezic17}
{\v{C}rnjari\'{c}-\v{Z}ic}, N., Ma\'{c}e\v{s}i\'{c}, S., and Mezi\'{c}, I.
\newblock Koopman operator spectrum for random dynamical systems.
\newblock \emph{arXiv:1711.03146}, 2017.

\bibitem[Frank \& Manuilov(1995)Frank and Manuilov]{frank95}
Frank, M. and Manuilov, V.
\newblock Diagonalizing ¡Ècompact¡É operators on {H}ilbert {$W^*$}-modules.
\newblock \emph{Zeitschrift f\"{u}r Analysis und ihre Anwendungen},
  14:\penalty0 1--8, 1995.

\bibitem[Fujii \& Kawahara(2019)Fujii and Kawahara]{fujii19}
Fujii, K. and Kawahara, Y.
\newblock Dynamic mode decomposition in vector-valued reproducing kernel
  {H}ilbert spaces for extracting dynamical structure among observables.
\newblock \emph{Neural Networks}, 117:\penalty0 94--103, 2019.
\newblock \doi{https://doi.org/10.1016/j.neunet.2019.04.020}.

\bibitem[Hashimoto et~al.(2019)Hashimoto, Ishikawa, Ikeda, Matsuo, and
  Kawahara]{hashimoto19}
Hashimoto, Y., Ishikawa, I., Ikeda, M., Matsuo, Y., and Kawahara, Y.
\newblock Krylov subspace method for nonlinear dynamical systems with random
  noise.
\newblock \emph{arXiv:1909.03634v3}, 2019.

\bibitem[Heo(2008)]{heo08}
Heo, J.
\newblock Reproducing kernel {H}ilbert {$C^*$}-modules and kernels associated
  with cocycles.
\newblock \emph{Journal of Mathematical Physics}, 49\penalty0 (10):\penalty0
  103507, 2008.
\newblock \doi{10.1063/1.3000574}.

\bibitem[Ishikawa et~al.(2018)Ishikawa, Fujii, Ikeda, Hashimoto, and
  Kawahara]{ishikawa18}
Ishikawa, I., Fujii, K., Ikeda, M., Hashimoto, Y., and Kawahara, Y.
\newblock Metric on nonlinear dynamical systems with {P}erron-{F}robenius
  operators.
\newblock In \emph{Advances in Neural Information Processing Systems 31}, pp.\
  2856--2866, 2018.

\bibitem[Itoh(1990)]{itoh90}
Itoh, S.
\newblock Reproducing kernels in modules over {$C^*$}-algebras and their
  applications.
\newblock \emph{Journal of Mathematics in Nature Science}, pp.\  1--20, 1990.

\bibitem[Kadison(1983)]{kadison83}
Kadison, R.~V.
\newblock Diagonalizing matrices over operator algebras.
\newblock \emph{Bulletin (New Series) of the American Mathematical Society},
  8\penalty0 (1):\penalty0 84--86, 1983.

\bibitem[Kadri et~al.(2016)Kadri, Duflos, Preux, Canu, Rakotomamonjy, and
  Audiffren]{kadri16}
Kadri, H., Duflos, E., Preux, P., Canu, S., Rakotomamonjy, A., and Audiffren,
  J.
\newblock Operator-valued kernels for learning from functional response data.
\newblock \emph{Journal of Machine Learning Research}, 17\penalty0
  (20):\penalty0 1--54, 2016.

\bibitem[Kawahara(2016)]{kawahara16}
Kawahara, Y.
\newblock Dynamic mode decomposition with reproducing kernels for {K}oopman
  spectral analysis.
\newblock In \emph{Advances in Neural Information Processing Systems 29}, pp.\
  911--919, 2016.

\bibitem[Klus et~al.(2017)Klus, Schuster, and Muandet]{klus17}
Klus, S., Schuster, I., and Muandet, K.
\newblock Eigendecompositions of transfer operators in reproducing kernel
  {H}ilbert spaces.
\newblock \emph{arXiv:1712.01572}, 2017.

\bibitem[Klus et~al.(2019)Klus, N\"{u}ske, Peitz, Niemann, Clementi, and
  Sch\"{u}tte]{klus19}
Klus, S., N\"{u}ske, F., Peitz, S., Niemann, J.-H., Clementi, C., and
  Sch\"{u}tte, C.
\newblock Data-driven approximation of the {K}oopman generator: Model
  reduction, system identification, and control.
\newblock \emph{arXiv:1909.10638}, 2019.

\bibitem[Lance(1995)]{lance95}
Lance, E.~C.
\newblock \emph{Hilbert {$C^*$}-modules -- a toolkit for operator algebraists,
  London Mathematical Society Lecture Note Series, vol. 210}.
\newblock Cambridge University Press,, England, 1995.

\bibitem[Landi \& Pavlov(2009)Landi and Pavlov]{landi09}
Landi, G. and Pavlov, A.
\newblock On orthogonal systems in {H}ilbert {$C^*$}-modules.
\newblock \emph{Journal of Operator Theory}, 68, 06 2009.

\bibitem[Lim et~al.(2015)Lim, d'Alch{\'e} Buc, Auliac, and Michailidis]{lim15}
Lim, N., d'Alch{\'e} Buc, F., Auliac, C., and Michailidis, G.
\newblock Operator-valued kernel-based vector autoregressive models for network
  inference.
\newblock \emph{Machine Learning}, 99\penalty0 (3):\penalty0 489--513, 2015.
\newblock \doi{10.1007/s10994-014-5479-3}.

\bibitem[Lusch et~al.(2017)Lusch, Nathan~Kutz, and Brunton]{lusch17}
Lusch, B., Nathan~Kutz, J., and Brunton, S.
\newblock Deep learning for universal linear embeddings of nonlinear dynamics.
\newblock \emph{Nature Communications}, 9:\penalty0 4950, 12 2017.
\newblock \doi{10.1038/s41467-018-07210-0}.

\bibitem[Micchelli \& Pontil(2005)Micchelli and Pontil]{micchelli05}
Micchelli, C.~A. and Pontil, M.
\newblock On learning vector-valued functions.
\newblock \emph{Neural Computation}, 17\penalty0 (1):\penalty0 177--204, 2005.
\newblock \doi{10.1162/0899766052530802}.

\bibitem[Minh et~al.(2016)Minh, Bazzani, and Murino]{quang16}
Minh, H.~Q., Bazzani, L., and Murino, V.
\newblock A unifying framework in vector-valued reproducing kernel {H}ilbert
  spaces for manifold regularization and co-regularized multi-view learning.
\newblock \emph{Journal of Machine Learning Research}, 17\penalty0
  (25):\penalty0 1--72, 2016.

\bibitem[Mooij et~al.(2016)Mooij, Peters, Janzing, Zscheischler, and
  Sch{{\"o}}lkopf]{mooij16}
Mooij, J.~M., Peters, J., Janzing, D., Zscheischler, J., and Sch{{\"o}}lkopf,
  B.
\newblock Distinguishing cause from effect using observational data: Methods
  and benchmarks.
\newblock \emph{Journal of Machine Learning Research}, 17\penalty0
  (32):\penalty0 1--102, 2016.

\bibitem[Saitoh \& Sawano(2016)Saitoh and Sawano]{saitoh16}
Saitoh, S. and Sawano, Y.
\newblock \emph{Theory of reproducing kernels and applications}.
\newblock Springer Singapore, 2016.
\newblock \doi{10.1007/978-981-10-0530-5}.

\bibitem[Sch\"{o}lkopf \& Smola(2001)Sch\"{o}lkopf and Smola]{scholkopf01}
Sch\"{o}lkopf, B. and Smola, A.~J.
\newblock \emph{Learning with kernels: Support vector machines, regularization,
  optimization, and beyond}.
\newblock MIT Press, Cambridge, MA, USA, 2001.

\bibitem[Takeishi et~al.(2017{\natexlab{a}})Takeishi, Kawahara, and
  Yairi]{takeishi17}
Takeishi, N., Kawahara, Y., and Yairi, T.
\newblock Subspace dynamic mode decomposition for stochastic {K}oopman
  analysis.
\newblock \emph{Physical Review E}, 96, 2017{\natexlab{a}}.
\newblock \doi{10.1103/PhysRevE.96.033310}.

\bibitem[Takeishi et~al.(2017{\natexlab{b}})Takeishi, Kawahara, and
  Yairi]{takeishi17-2}
Takeishi, N., Kawahara, Y., and Yairi, T.
\newblock Learning {K}oopman invariant subspaces for dynamic mode
  decomposition.
\newblock In \emph{Advances in Neural Information Processing Systems 30}, pp.\
  1131--1141, 2017{\natexlab{b}}.

\bibitem[Ye(2017)]{ye17}
Ye, Y.
\newblock The matrix {H}ilbert space and its application to matrix learning.
\newblock \emph{arXiv:1706.08110v2}, 2017.

\bibitem[Yun et~al.(2012)Yun, Honorio, Chattopadhyay, Berg, and Samaras]{yun12}
Yun, K., Honorio, J., Chattopadhyay, D., Berg, T.~L., and Samaras, D.
\newblock Two-person interaction detection using body-pose features and
  multiple instance learning.
\newblock In \emph{Computer Vision and Pattern Recognition Workshops (CVPRW)},
  2012.

\end{thebibliography}
\end{document}